\documentclass[11pt,a4paper]{article}
\usepackage[utf8]{inputenc}
\usepackage[T1]{fontenc}
\usepackage{lmodern}
\usepackage[english]{babel}
\usepackage[margin=2.5cm]{geometry} % Page margins
\usepackage{graphicx} % For including figures
\usepackage{booktabs} % For better-looking tables
\usepackage{natbib} % For author-year citations
\usepackage{amsmath} % For mathematical equations
\usepackage{amssymb} % For AMS symbols
\usepackage{amsthm} % For AMS symbols
\usepackage{setspace} % For line spacing
\usepackage{caption} % For customizing captions
\usepackage{subcaption}
\usepackage{xltabular}
\usepackage[colorlinks=true, citecolor=blue, linkcolor=blue, urlcolor=blue]{hyperref} % For hyperlinks
\usepackage{algorithm, algpseudocode}

\newcommand{\norm}[1]{\left\lVert #1\right\rVert}
\newcommand{\E}{\mathbb E}
\DeclareMathOperator*{\argmin}{arg\,min}
\newcommand{\diag}{\mathrm{diag}}
\newcommand{\T}{T^{-\alpha}}

\newtheorem{condition}{Condition}
\newtheorem{definition}{Definition}
\newtheorem{remark}{Remark}
\newtheorem{example}{Example}
\newtheorem{model}{Model}
\newtheorem{theorem}{Theorem}
\newtheorem{corollary}{Corollary}
\newtheorem{lemma}{Lemma}
\newtheorem{proposition}{Proposition}

\setcitestyle{round}

\let\oldfootnote\footnote
\renewcommand{\footnote}{\fontsize{9}{11}\selectfont\oldfootnote}

\title{Spectral Embeddings of Degree-$\alpha$ Laplacians in Random Dot Product Graphs}
\author{
    John Park \\
    University of Hong Kong \\
    \and
    Ning Hao \thanks{Corresponding author: nhao@arizona.edu}\\
    University of Arizona \\
}
\date{} % Remove date

\begin{document}
\maketitle
\begin{abstract}
Spectral clustering methods for network data are commonly based on a few matrix representations, such as the adjacency matrix and the symmetric Laplacian. We study a continuum of degree-normalized spectral embeddings that includes these commonly used choices as special cases. Under a random dot product graph model, we establish a row-wise central limit theorem for this family of embeddings. The result provides an explicit description of how degree normalization affects both population geometry and the local uncertainty of embedded nodes. We use the limiting distributions to compare different normalizations in two-community stochastic block models through a projected-Gaussian Bayes-error diagnostic. These comparisons show that no single normalization is uniformly preferred. Instead, the favored normalization depends on network density, community imbalance, and block-probability structure. Typically, stronger normalization is favored in lower-density or more imbalanced settings. These results provide a unified distributional understanding of when and why alternative normalizations may improve spectral clustering.
\end{abstract}

% AMS subject classification
\textbf{Keywords:} Central limit theorem, Degree-corrected stochastic block model, Imbalance, Network data, Spectral clustering.

\textbf{Mathematics Subject Classification (2020):} 62H30

\section{Introduction}
\label{Sec::Introduction}
Community detection is a fundamental statistical problem in network analysis. Given a single observed graph, the goal is to partition the nodes into groups with distinct connectivity patterns. A widely used approach to this task is spectral clustering \citep{Von2007, Ng2001}, which first embeds each node as a point in a low-dimensional space using the eigenvectors of a graph matrix, then partitions the resulting points with a clustering algorithm such as $k$-means. The performance of spectral clustering therefore depends on the graph matrix used to construct the embedding.

Two standard choices are the adjacency matrix $A$ and the symmetric Laplacian matrix $D^{-1/2}AD^{-1/2}$, where $D$ is the diagonal matrix of observed node degrees. A large body of literature has studied alternative normalizations and regularizations of the adjacency matrix \citep{Jing2022, Amini2013, Qing24, QinRohe2013, sarkar2015role, priebe2019two, Modell2021}, demonstrating that the choice of graph matrix can substantially affect the performance of spectral clustering. In this paper, we focus on one basic component of that choice: the amount of degree normalization applied before embedding.

A natural one-parameter family generalizing the adjacency and symmetric Laplacian matrices is
\[
D^{-\alpha}AD^{-\alpha},\qquad \alpha\in[0,1].
\]
The choices $\alpha=0$ and $\alpha=1/2$ recover the adjacency and symmetric Laplacian matrices, respectively, while larger values of $\alpha$ apply stronger degree normalization. To the best of our knowledge, within the statistical literature on community detection, this continuous degree-power family was first studied by \citet{ali2018improved}, who considered a centered modularity formulation. Related uncentered formulations, including regularized variants, were subsequently studied by \citet{Fan26} and \citet{Park2025spectral}.

In this paper, we study the uncentered family and investigate how the amount of normalization simultaneously changes the geometry and uncertainty of the spectral embedding. Our central question is whether different values of $\alpha$ can be systematically preferred for community detection and, if so, which features of the underlying network determine that preference.

To preview the main geometric phenomenon, consider a degree-corrected stochastic block model (DCSBM) \citep{Karrer2011}. In this model, nodes are partitioned into communities, and each node is assigned an individual degree-correction parameter. Figure~\ref{Fig::DCSBM_Alpha_Comparison} shows the embeddings of a simple two-community DCSBM in which the nodes are randomly assigned one of three degree-correction parameters; Section~\ref{Sec::BlockModels} contains full modeling details. For the adjacency spectral embedding, corresponding to $\alpha=0$ and shown in Figure~\ref{Fig::DCSBM_alpha000}, nodes from the same community, indicated by color, lie along a common ray from the origin, while the degree-correction parameter, indicated by shade, determines their position along that ray. Figures~\ref{Fig::DCSBM_alpha050} and~\ref{Fig::DCSBM_alpha1} illustrate the first-order effect of increasing the amount of normalization. As $\alpha$ increases, the radial dependence on the degree-correction parameter is progressively reduced; at $\alpha=1$, it disappears entirely, and the population centers depend only on community membership.

This radial variation makes it problematic to apply a clustering algorithm directly to the spectral embedding, because nodes in the same community may be separated by their degree parameters. This motivates the common preprocessing step of normalizing each embedding row to unit norm before clustering \citep{Lei2015,QinRohe2013}. Projecting the rows onto the unit sphere removes the within-community radial variation, so that the population centers from each community collapse to a common direction. After the usual orthogonal alignment, these community directions are independent of both the degree-correction parameter and $\alpha$. Thus, after spherical row normalization, the different degree-$\alpha$ embeddings are equivalent at the first order, i.e., they have the same population community directions.

They are not, however, equivalent at the second order. The Gaussian probability ellipses in Figure~\ref{Fig::DCSBM_Alpha_Comparison} show that the magnitude, shape, and orientation of the local fluctuations change with $\alpha$. In this example, the ellipses become increasingly elongated in the direction of the line joining the corresponding community centers. Different values of $\alpha$ can thus produce different amounts of overlap around the same population community directions. Population geometry and consistency results alone therefore cannot explain why one value of $\alpha$ may outperform another, and a more careful analysis of the second-order deviations is required. 

\begin{figure}[htpb]
     \centering
     \begin{subfigure}[b]{0.31\textwidth}
         \centering
         \includegraphics[width=\textwidth]{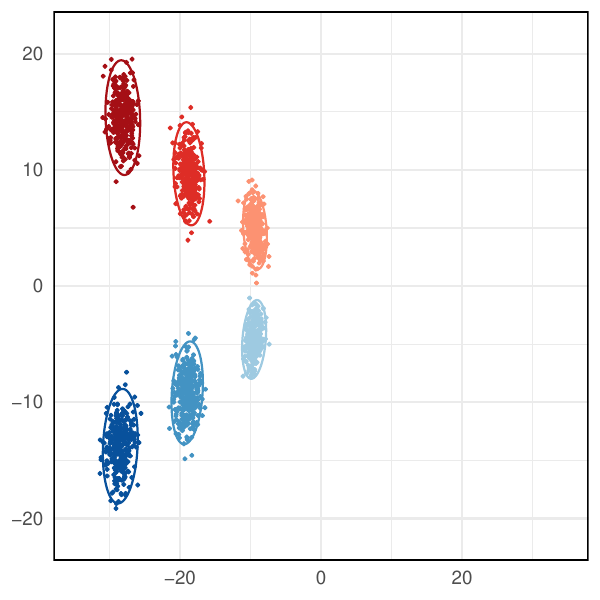}
         \caption{$\alpha=0$}
         \label{Fig::DCSBM_alpha000}
     \end{subfigure}
     \hfill 
     \begin{subfigure}[b]{0.31\textwidth}
         \centering
         \includegraphics[width=\textwidth]{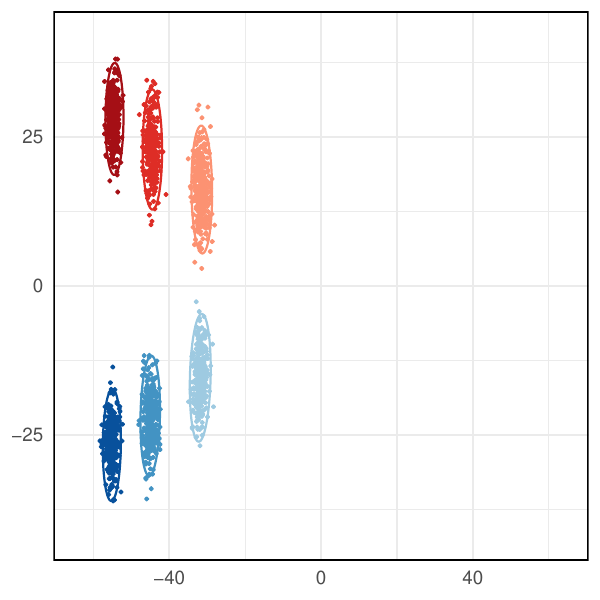}
         \caption{$\alpha=0.5$}
         \label{Fig::DCSBM_alpha050}
     \end{subfigure}
     \hfill
     \begin{subfigure}[b]{0.31\textwidth}
         \centering
         \includegraphics[width=\textwidth]{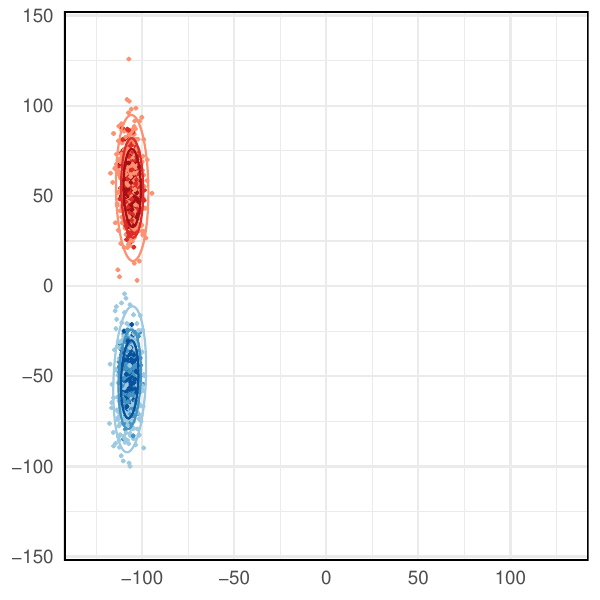}
         \caption{$\alpha=1$}
         \label{Fig::DCSBM_alpha1}
     \end{subfigure}
     \caption{Comparison of the degree-$\alpha$ spectral embeddings for a two-community DCSBM. Points are colored by community, with shading indicating the degree-correction parameter. The overlaid contours represent the 95\% probability ellipses obtained from Corollary~\ref{Cor::DCS}. Note that the axis scales differ between plots.}
     \label{Fig::DCSBM_Alpha_Comparison}
\end{figure}

We analyze this phenomenon under the random dot product graph (RDPG) model, which includes the DCSBM with positive-semidefinite block-probability matrices as a special case. Our main theoretical result is a conditional row-wise central limit theorem for the uncentered degree-$\alpha$ spectral embedding. The result covers both dense and sparse regimes, requiring the expected-degree scale to grow faster than \((\log n)^2\), and gives the limiting mean and covariance explicitly in terms of the latent-position distribution.

We then use the limiting distributions implied by the central limit theorem to study how degree normalization affects row-normalized spectral clustering. For each value of $\alpha$, we project the Gaussian approximations onto the unit sphere and compute the Bayes error for distinguishing the resulting community distributions. The minimizer of this error defines a normalization preferred under this Bayes-error criterion. 

Because the criterion relies on an asymptotic approximation and an idealized Bayes rule, it need not minimize the actual error for an observed network. We therefore compare its prediction with empirical misclassification rates in simulated networks. Across the two-community block models studied, stronger normalization tends to be favored when expected degrees are smaller or community proportions are more imbalanced.

Row-wise central limit theorems for adjacency and symmetric Laplacian spectral embeddings are well established; see, respectively, \citet{athreya2016limit} and \citet{Tang2018}, as well as the overview of \citet{agterberg2026overview}. More recently, \citet{Fan26} establish asymptotic normality for a broad class of generalized and regularized Laplacian matrices that includes the degree-$\alpha$ family. Our result complements this general theory by working directly under the RDPG model, yielding explicit characterizations of the population centers and limiting covariances that can be specialized and interpreted under block models.

Two complementary works are particularly relevant to our comparison of degree normalizations. \citet{ali2018improved} study a centered modularity-based version of the degree-power family under a dense heterogeneous DCSBM and select the amount of normalization according to a spectral detectability criterion. \citet{Cape2019} use Chernoff information to compare adjacency and symmetric Laplacian embeddings under stochastic block models. Our analysis addresses a different question: how does degree normalization change the node-level overlap that remains after spherical row normalization? In this sense, we extend the distributional perspective of \citet{Cape2019} across the continuous uncentered family, while targeting a performance criterion distinct from that of \citet{ali2018improved}.

The present paper focuses on explaining the statistical consequences of degree normalization under a specified model. Selecting $\alpha$ from a single observed network is a related but distinct methodological problem. \citet{Park2026degree} study degree-$\alpha$ spectral clustering from this perspective and propose a subsampling-based procedure for tuning $\alpha$.
 
Our contributions are threefold. First, we derive explicit conditional row-wise limiting distributions for degree-$\alpha$ spectral embeddings under the RDPG, covering both dense and sparse regimes. Second, under the DCSBM, we show that spherical row normalization makes the population community directions identical across $\alpha$, while the projected uncertainty remains $\alpha$-dependent. Our formulas characterize how normalization changes its magnitude, shape, and orientation. Third, we compare degree normalizations using a Bayes-error criterion, benchmark this comparison using Chernoff information, and investigate whether the predicted ordering is reflected in finite-sample spectral clustering.

The remainder of the paper is organized as follows. In Section~\ref{Sec::DegreeAlphaLaplacian}, we formally define the degree-$\alpha$ Laplacian and corresponding spectral embedding. In Section~\ref{Sec::RDPGs}, we define random dot product graphs and state the central limit theorem for the embedding in terms of the relevant latent position distribution. Section~\ref{Sec::BlockModels} specializes the theorem to the DCSBM and formally analyzes the behavior observed in Figure~\ref{Fig::DCSBM_Alpha_Comparison}. In Section~\ref{Sec::ComparingDegreeNorm}, we study the implications of degree normalization for spectral clustering, and we conclude in Section~\ref{Sec::Discussion}. 

\section{\texorpdfstring{Degree-$\alpha$ Laplacian Spectral Embeddings}
{Degree-alpha Laplacian Spectral Embeddings}}
\label{Sec::DegreeAlphaLaplacian}
Given an undirected, unweighted graph with no isolated vertices, let $A\in\{0,1\}^{n\times n}$ represent the adjacency matrix, where $a_{ij}=1$ indicates an edge between nodes $i$ and $j$. The degree matrix $D$ is an $n\times n$ diagonal matrix whose diagonal entries are the node degrees, $D_{ii}=\sum_j a_{ij}$. We first define the symmetric Laplacian to be 
\[
    \hat{L}_{\mathrm{sym}}=D^{-1/2}AD^{-1/2}.
\] 
This definition differs from the definition of the symmetric Laplacian sometimes seen in the literature, $L_{\mathrm{sym}}=I_n-D^{-1/2}AD^{-1/2}$. These matrices share eigenvectors, with the leading eigenvectors of the former corresponding to the trailing eigenvectors of the latter \citep{Von2007}. Following common practice in spectral clustering, we work with the normalized matrix itself. We analogously define the observed degree-$\alpha$ Laplacian to be 
\[
    \hat{L}_\alpha=D^{-\alpha}AD^{-\alpha},\qquad\alpha\in[0,1].
\] 
The degree-$\alpha$ Laplacian generalizes other common graph matrices, reducing to the adjacency matrix when $\alpha=0$ and the symmetric Laplacian when $\alpha=1/2$. We now define a spectral embedding for the degree-$\alpha$ Laplacian.

\begin{definition}[Degree-$\alpha$ Laplacian Spectral Embedding]
    \label{Def::DALSE}
    Let $A$ be an $n\times n$ adjacency matrix and let $\hat{U}_{\alpha,R}\hat{\Lambda}_{\alpha,R}\hat{U}_{\alpha,R}^\top$ be the rank-$R$ truncated eigendecomposition of $\hat{L}_\alpha$. That is, if $\hat{\lambda}_{\alpha,1}\geq\hat{\lambda}_{\alpha,2}\geq\cdots\geq\hat{\lambda}_{\alpha,R}$ are the $R$ largest positive eigenvalues of $\hat{L}_\alpha$, and $\hat{u}_{\alpha,1},\hat{u}_{\alpha,2},\ldots,\hat{u}_{\alpha,R}$ are the corresponding orthonormal eigenvectors, then $\hat{U}_{\alpha,R}$ is the $n\times R$ matrix whose columns are $\hat{u}_{\alpha,1},\ldots,\hat{u}_{\alpha,R}$ and $\hat{\Lambda}_{\alpha,R}=\operatorname{diag}(\hat{\lambda}_{\alpha,1},\ldots,\hat{\lambda}_{\alpha,R})$. A spectral embedding of $\hat{L}_\alpha$ is given by $\hat{X}_{\alpha,R}:=\hat{U}_{\alpha,R}\hat{\Lambda}_{\alpha,R}^{1/2}$.
\end{definition}

The dimension of the embedding is often clear from context. In those cases, we suppress the dependence of these matrices on $R$. 

\section{Random Dot Product Graphs}
\label{Sec::RDPGs}
In this section, we introduce the random dot product graph model and make its connection with the spectral embedding defined in the previous section explicit. In the random dot product graph model, each node is assigned a latent position in $\mathbb{R}^r$, and the edge probability between two nodes is given by the dot product between the latent positions \citep{Young2007,Hoff2002,Handcock2007}.

\begin{definition}[Random Graph]
    \label{Def::RG}
    Let $P=(p_{ij})$ be a symmetric $n\times n$ matrix with entries in $[0,1]$. We define an undirected random graph with adjacency matrix $A=(a_{ij})$ as follows. For $i<j$, the entries are independent and satisfy
    \[
        a_{ij}\sim \mathrm{Bern}(p_{ij}).
    \]
    The remaining entries are
    \[
        a_{ij}=
        \begin{cases}0, & i=j,\\
            a_{ji}, & i>j.
        \end{cases}
    \]
\end{definition}

\begin{definition}[Random Dot Product Graph Model]
    \label{Def::RDPG}
    Let $F$ be a distribution on $\mathbb{R}^r$ satisfying
    \[
        x^\top y \in [0,1]\text{ for all }x,y\in\operatorname{Supp}(F).
    \]
    Let $\xi_1,\ldots,\xi_n\overset{\mathrm{iid}}{\sim}F$, and let $\Xi\in\mathbb{R}^{n\times r}$ be the matrix whose $i$th row is $\xi_i^\top$. A random graph follows a random dot product graph model if, conditional on $\xi_1,\ldots,\xi_n$,
    \[
        p_{ij}=\langle \xi_i,\xi_j\rangle.
    \]
    Equivalently, $P=\Xi\Xi^\top$.
\end{definition}

\begin{remark}
    Using the convention that self-loops are excluded, we set the diagonal entries of $A$ to zero. Therefore, in the RDPG model,
    \[
        \mathbb{E}[A\mid \xi_1,\ldots,\xi_n]= P-\operatorname{diag}(P),
    \]
    where $\operatorname{diag}(P)$ denotes the diagonal matrix with the same diagonal as $P$. We denote this version of $P$ by $P^\circ=P-\operatorname{diag}(P)$. In the asymptotic regime of our theoretical results, the diagonal portion of $P$ is negligible for the quantities of interest. Thus, in the main discussion, we sometimes suppress this distinction for clarity. In the appendix, where the distinction is needed for the proofs, we work explicitly with $P^\circ$. 
\end{remark}

The random dot product graph model contains several familiar graph models as special cases. We briefly describe three examples to illustrate the role of the latent positions. 
\begin{example}[Erd\H{o}s--R\'enyi graph] 
    In an Erd\H{o}s--R\'enyi graph, for a given $p\in[0,1]$, every pair of distinct nodes is connected with probability $p$, so $p_{ij}=p$. This model can be written as an RDPG by taking $r=1$ and assigning every node the same latent position $\xi_i=\sqrt{p}$. 
\end{example} 
\begin{example}[Stochastic block model]
    In a stochastic block model (SBM), each node belongs to one of $K$ communities \citep{Holland_Laskey_Leinhardt_1983}. The community labels $z_1,\ldots,z_n$ are independent copies of a random variable $Z$ satisfying $\mathbb{P}(Z=k)=\pi_k$ for $k=1,\ldots,K$. Given a symmetric matrix $B\in[0,1]^{K\times K}$ of community connection probabilities, the edge probabilities are
    \[
        p_{ij}=B_{z_i z_j}.
    \]
    If $B$ is positive semidefinite with rank $r$, choose $\nu_1,\ldots,\nu_K\in\mathbb{R}^r$ such that $B_{k\ell}=\langle \nu_k,\nu_\ell\rangle$. Assigning node $i$ the latent position $\xi_i=\nu_{z_i}$, we may write
    \[
        p_{ij}=B_{z_i z_j}=\langle \nu_{z_i},\nu_{z_j}\rangle=\langle \xi_i,\xi_j\rangle.
    \]
    Thus, the SBM can be represented as an RDPG, with community $k$ associated with the latent position $\nu_k$.
\end{example}
\begin{example}[Degree-corrected stochastic block model]
    \label{Ex::DCSBM}
    The degree-corrected stochastic block model (DCSBM) generalizes the stochastic block model by assigning each node a degree-correction parameter \citep{Karrer2011}. Let $Z$ be a random community label satisfying $\mathbb{P}(Z=k)=\pi_k$ for $k=1,\ldots,K$, and let $\Theta$ be a random variable taking values in $(0,1]$, independent of $Z$. Let $(z_1,\theta_1),\ldots,(z_n,\theta_n)$ be independent copies of $(Z,\Theta)$. Let $B\in[0,1]^{K\times K}$ be a symmetric matrix of community connection probabilities. The edge probabilities are then given by
    \[
        p_{ij}=\theta_i\theta_jB_{z_i z_j}.
    \]
    If $B$ is positive semidefinite with rank $r$, choose $\nu_1,\ldots,\nu_K\in\mathbb{R}^r$ such that $B_{k\ell}=\langle \nu_k,\nu_\ell\rangle$. Assigning node $i$ the latent position $\xi_i=\theta_i\nu_{z_i}$, we may write
    \[
        p_{ij}=\theta_i\theta_jB_{z_i z_j}=\langle \theta_i\nu_{z_i},\theta_j\nu_{z_j}\rangle=\langle \xi_i,\xi_j\rangle.
    \]
    Thus, the DCSBM can be represented as an RDPG, with community $k$ associated with the latent direction $\nu_k$. The SBM is recovered as the special case $\Theta\equiv1$.
\end{example}

To formulate our asymptotic results, we define a sequence of random graphs $\left\{A_n\right\}_{n\in\mathbb{N}}$, generated by the following process.

\begin{model}\label{Model::ModelGeneration}
    For each $n\in\mathbb{N}$, draw $\xi_1,\ldots,\xi_n\overset{\mathrm{iid}}{\sim}F$. Let $\rho_n\in(0,1]$ and define $X_i=\rho_n^{1/2}\xi_i$. Conditional on these latent positions, generate $A_n$ with independent upper-triangular entries satisfying
    \[
        a_{ij}\sim\operatorname{Bern}\left(\rho_n\langle\xi_i,\xi_j\rangle\right),\qquad i<j.
    \]
\end{model}

In Model~\ref{Model::ModelGeneration}, we distinguish the unscaled latent positions $\xi_i\sim F$ from the latent positions used to generate the graph, $X_i=\rho_n^{1/2}\xi_i$. The quantity $\rho_n$ drives the sparsity of the graph. We consider two asymptotic regimes for the sparsity factor. In the dense regime, $\rho_n\equiv 1$, and in the sparse regime, $\rho_n\to0$. These two cases are sufficient for our purposes, since if $\rho_n\to \rho_*\in(0,1]$, the limiting constant $\rho_*$ can be absorbed into the latent distribution, or for block models, into $B$.

Letting $X$ denote the matrix whose $i$th row is $X_i^\top=(\rho_n^{1/2}\xi_i)^\top$, we have established that $P=XX^\top$. In this case, the adjacency spectral embedding $\hat{X}_0$ serves as a natural estimate of $X$, up to an orthogonal transformation. This relationship extends to the degree-$\alpha$ Laplacian. We first define the population degree-$\alpha$ Laplacian to be $L_\alpha=T^{-\alpha}PT^{-\alpha}$, where $T$ is a diagonal matrix with $T_{ii}=\sum_{j\neq i}p_{ij}$, the expected degree of each node. We also write $X_\alpha=T^{-\alpha}X$, obtaining the population decomposition $L_\alpha=X_\alpha X_\alpha^\top=U_\alpha\Lambda_\alpha U_\alpha^\top$. Thus, since $\hat L_\alpha$ is the natural sample analogue of $L_\alpha$, the degree-$\alpha$ Laplacian spectral embedding $\hat X_\alpha$ serves as a natural estimate of $X_\alpha$, again up to an orthogonal transformation. We formalize this intuition with the following theorem, which establishes a row-wise central limit theorem for the degree-$\alpha$ Laplacian spectral embedding. 

We now introduce the technical conditions required for our results to hold, which we assume throughout the theoretical results. Let $\xi\sim F$ denote a generic unscaled latent position, and define $\mu=\E[\xi]$.

\begin{condition}
\label{Assump::Standing}
The following assumptions hold.
\begin{enumerate}
    \renewcommand{\labelenumi}{\textup{(\alph{enumi})}}
    \renewcommand{\theenumi}{\thecondition\alph{enumi}}
    \item\label{Assump::BoundedSupport}
    The latent position distribution $F$ has bounded support in $\mathbb{R}^r$, where $r$ is fixed. Additionally,
    \[
        x^\top y\in[0,1]
        \qquad\text{for all }x,y\in\operatorname{Supp}(F).
    \]

    \item\label{Assump::Sparsity}
    The sparsity factor satisfies either $\rho_n\equiv 1$ or $\rho_n\to0$. In either case,
    $\delta_n:=n\rho_n$ satisfies
    \[
        \frac{\delta_n}{(\log n)^2}\longrightarrow\infty.
    \]

    \item\label{Assump::Nonzero}
    There exists a constant $c>0$ such that
    \[
        \langle x,\mu\rangle\geq c
        \qquad\text{for all }x\in\operatorname{Supp}(F).
    \]
    \item\label{Assump::FullSpan}
    The support of $F$ spans $\mathbb{R}^r$.
\end{enumerate}
\end{condition}

\begin{remark}
    Condition~\ref{Assump::BoundedSupport} bounds the size of the latent positions and ensures that all dot products define valid edge probabilities. Condition~\ref{Assump::Sparsity} requires the expected-degree scale $\delta_n=n\rho_n$ to grow faster than $(\log n)^2$. Conditions of this form are standard in the spectral embedding literature. For comparison, \citet{Tang2018} require the corresponding expected-degree scale to grow faster than $(\log n)^4$ for the adjacency and symmetric Laplacian cases. Condition~\ref{Assump::Nonzero} ensures that the expected-degree factors are uniformly bounded away from zero, which is needed to keep quantities such as $\langle x,\mu\rangle^{-1}$ uniformly bounded. Condition~\ref{Assump::FullSpan} ensures that the latent positions occupy all $r$ dimensions. If the support of $F$ spans a lower-dimensional subspace, we may simply take that subspace to be the latent dimension. Thus, we take $R=r$ for the matrices in Definition~\ref{Def::DALSE}. For $\alpha\in[0,1]$, define
    \[
        \Upsilon_\alpha=\E\left[\frac{\xi\xi^\top}{\langle \xi,\mu\rangle^{2\alpha}}\right].
    \]
    Under Conditions~\ref{Assump::Nonzero} and~\ref{Assump::FullSpan}, $\Upsilon_\alpha$ is positive definite.

    Conditions~\ref{Assump::BoundedSupport} and~\ref{Assump::Sparsity} ensure that the observed degrees are uniformly concentrated around their expectations (see Appendix~\ref{Sec::CommonBounds}), while Condition~\ref{Assump::Nonzero} ensures the expected degrees are uniformly bounded away from zero. Thus, the observed degrees are uniformly positive with probability tending to one, and $\hat L_\alpha$ is well defined with probability tending to one.
\end{remark}

\begin{theorem}
    \label{Thm::Main}
    Fix $\alpha\in[0,1]$, and assume Condition~\ref{Assump::Standing} holds. Let $\{A_n\}_{n\geq1}$ be a sequence of RDPGs constructed according to Model~\ref{Model::ModelGeneration}, and let $\xi'$ be an independent copy of $\xi$. Then there exists a sequence of orthogonal matrices $Q_n$ such that, for each fixed index $i$, conditional on $\xi_i$,

    \[
        n^{\alpha+1/2}\rho_n^\alpha\left[\left(\hat X_\alpha Q_n\right)_{i*}-\left(X_\alpha\right)_{i*}\right]^\top\overset{d}{\longrightarrow}N\left(0,\Sigma_\alpha(\xi_i)\right),
    \]
    where
    \[
        \Sigma_\alpha(x)=\frac{\Upsilon_\alpha^{-1}\Gamma_{\rho,\alpha}(x)\Upsilon_\alpha^{-1}}{\langle x,\mu\rangle^{2\alpha}},
    \]
    with
    \[ 
    \Gamma_{\rho,\alpha}(x)= 
    \begin{cases} 
        \E_{\xi'}\left[ \langle x,\xi'\rangle \left(1-\langle x,\xi'\rangle\right) \left( \dfrac{\xi'}{\langle \xi',\mu\rangle^{2\alpha}} - \dfrac{\alpha \Upsilon_\alpha x}{\langle x,\mu\rangle} \right) \left( \dfrac{\xi'}{\langle \xi',\mu\rangle^{2\alpha}} - \dfrac{\alpha \Upsilon_\alpha x}{\langle x,\mu\rangle} \right)^\top \right], & \rho_n\equiv 1, \\[1.2em] 
        \E_{\xi'}\left[ \langle x,\xi'\rangle \left( \dfrac{\xi'}{\langle \xi',\mu\rangle^{2\alpha}}-\dfrac{\alpha \Upsilon_\alpha x}{\langle x,\mu\rangle} \right)\left( \dfrac{\xi'}{\langle \xi',\mu\rangle^{2\alpha}} - \dfrac{\alpha \Upsilon_\alpha x}{\langle x,\mu\rangle} \right)^\top \right], & \rho_n\to0. 
    \end{cases} 
    \]
\end{theorem}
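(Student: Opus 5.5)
The strategy follows the approach of \citet{athreya2016limit} and \citet{Tang2018}. We decompose $\hat X_\alpha Q_n - X_\alpha$ into a leading term that is linear in the noise and remainder terms that are negligible row-wise. Write $E=A-P^\circ$. The first step linearizes the degree normalization. Because $D_{ii}=T_{ii}+\sum_j e_{ij}$ and the degrees concentrate uniformly (Appendix~\ref{Sec::CommonBounds}, with relative error $O((\log n/\delta_n)^{1/2})$), we have
\[
D^{-\alpha}=T^{-\alpha}-\alpha T^{-\alpha-1}(D-T)+O_P\!\left(T^{-\alpha}(\log n/\delta_n)\right)
\]
entrywise. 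Hence
\[
\hat L_\alpha - L_\alpha = T^{-\alpha}ET^{-\alpha} -\alpha\left[T^{-\alpha-1}(D-T)L_\alpha^{(1)}+L_\alpha^{(1)}(D-T)T^{-\alpha-1}\right]+\text{rem},
\]
where $L_\alpha^{(1)}=T^{-\alpha}PT^{-\alpha}\cdot T^{\alpha}T^{-\alpha}$. The bookkeeping is precisely $\hat L_\alpha-L_\alpha\approx T^{-\alpha}ET^{-\alpha}-\alpha(T^{-1}(D-T)L_\alpha+L_\alpha(D-T)T^{-1})$. The remainder is bounded in spectral norm by $\|L_\alpha\|\,\log n/\delta_n$.

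The second step is the standard eigenvector expansion. Define $W$ to be the orthogonal Procrustes alignment of $\hat U_\alpha$ to $U_\alpha$, and set $Q_n$ accordingly (composed with the rotation relating $U_\alpha\Lambda_\alpha^{1/2}$ to $X_\alpha$). Using $\hat U_\alpha\hat\Lambda_\alpha=\hat L_\alpha\hat U_\alpha$, we obtain
\[
\hat X_\alpha Q_n - X_\alpha = (\hat L_\alpha - L_\alpha)X_\alpha(X_\alpha^\top X_\alpha)^{-1} + R_n.
\]
The remainder $R_n$ is controlled in the $2\to\infty$ norm via Davis--Kahan, $\|U_\alpha^\top\hat U_\alpha - W\|=O_P(\|\hat L_\alpha-L_\alpha\|^2/\lambda_r^2)$, and via row-wise bounds on $EU_\alpha$ obtained from Bernstein's inequality with a union bound. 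The scales work out as follows: $T_{ii}\asymp \delta_n\langle\xi_i,\mu\rangle$, $\Lambda_\alpha\asymp \delta_n^{1-2\alpha}$, $\|E\|=O_P(\delta_n^{1/2})$, and $X_\alpha^\top X_\alpha\to \rho_n^{1-2\alpha}n^{1-2\alpha}\Upsilon_\alpha$ by the law of large numbers. We must show $\|R_n\|_{2\to\infty}=o_P(n^{-\alpha-1/2}\rho_n^{-\alpha})$. \textbf{This is the main obstacle}, particularly under only $\delta_n\gg(\log n)^2$ rather than $(\log n)^4$. The first approach to try is a leave-one-out or a careful second-order expansion for row $i$, $e_i^\top E U_\alpha(\cdot)$. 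Since only a fixed row $i$ is needed, the union bounds can be avoided for that row, and the deterministic error $\|\cdot\|\log n/\delta_n$ is then relatively $o(\delta_n^{-1/2})$ exactly when $\delta_n\gg(\log n)^2$.

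The third step computes the leading row. Row $i$ of the linear term equals
\[
T_{ii}^{-\alpha}\sum_j e_{ij}\Big(T_{jj}^{-\alpha}X_{\alpha,j}-\alpha T_{ii}^{-1}\,X_{\alpha,i}^\top\!\!\sum_k X_{\alpha,k}X_{\alpha,k}^\top\cdot T_{ii}^{\alpha}\Big)(X_\alpha^\top X_\alpha)^{-1},
\]
plus the symmetric term $\alpha\,X_{\alpha,i}^\top(\cdots)(D-T)T^{-1}$. The latter is a weighted sum over all rows of $O(\delta_n^{-1/2})$ centered degree fluctuations and contributes only $O_P$ of lower order. After substituting $T_{jj}\approx\delta_n\langle\xi_j,\mu\rangle$ and $\sum_k X_{\alpha,k}X_{\alpha,k}^\top\approx (n\rho_n)^{1-2\alpha}\rho_n^{2\alpha}\cdots\Upsilon_\alpha$, this becomes a sum of independent (conditional on $\xi$) terms
\[
\frac{1}{n^{\alpha+1/2}\rho_n^{\alpha}}\cdot\frac{1}{\langle\xi_i,\mu\rangle^{\alpha}}\,\Upsilon_\alpha^{-1}\frac{1}{\sqrt{n\rho_n}}\sum_j e_{ij}\rho_n^{1/2}\Big(\frac{\xi_j}{\langle\xi_j,\mu\rangle^{2\alpha}}-\frac{\alpha\Upsilon_\alpha\xi_i}{\langle\xi_i,\mu\rangle}\Big).
\]
Conditional on $\xi_i$, the Lindeberg CLT applies with per-entry variance $\rho_n\langle\xi_i,\xi_j\rangle(1-\rho_n\langle\xi_i,\xi_j\rangle)$. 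Averaging over $\xi_j$ by the law of large numbers gives $\Gamma_{\rho,\alpha}(\xi_i)$, where the factor $1-\langle x,\xi'\rangle$ survives only when $\rho_n\equiv1$. Finally, Slutsky's lemma, applied together with the replacement of the empirical $X_\alpha^\top X_\alpha$ and $T$ by their limits (errors $O_P(n^{-1/2})$ and $O_P((\log n/\delta_n)^{1/2})$), yields $N(0,\Sigma_\alpha(\xi_i))$.
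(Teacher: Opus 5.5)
\textbf{There is a genuine gap.} Your architecture matches the paper's: linearize $D^{-\alpha}$, use the spectral expansion with leading term $(\hat L_\alpha-L_\alpha)X_\alpha(X_\alpha^\top X_\alpha)^{-1}$, then apply a conditional Lindeberg CLT to row $i$. However, the step you label ``the main obstacle'' is never carried out, and it is the core of the proof.

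\emph{The spectral remainder.} You aim for a uniform bound $\|R_n\|_{2\to\infty}=o_P(n^{-\alpha-1/2}\rho_n^{-\alpha})$ using Bernstein with union bounds. Those row estimates lose logarithmic factors, and you give no argument that they close under $\delta_n\gg(\log n)^2$. Your remark that for a fixed row one can ``avoid union bounds'' does not work directly either. The quantity $e_i^\top E_\alpha(\hat U_\alpha\hat\Lambda_\alpha^{-1/2}W_n-U_\alpha\Lambda_\alpha^{-1/2})$ involves $\hat U_\alpha$, which depends on row $i$ of the noise, so this would require a full leave-one-out analysis that you only name.

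The paper never proves a $2\to\infty$ bound. It first shows $\|\mathcal R_{\mathrm{sp},\alpha}\|_F=O_P(\delta_n^{-\alpha-1/2}\log n)$, using $\|U_\alpha^\top E_\alpha U_\alpha\|=O_P(\delta_n^{-2\alpha}\log n)$, the commutator bound on $U_\alpha^\top\hat U_\alpha\hat\Lambda_\alpha^{q}-\Lambda_\alpha^{q}U_\alpha^\top\hat U_\alpha$, and Davis--Kahan. It then uses exchangeability of the rows:
\[
\mathbb P\big(n^{1/2}\delta_n^{\alpha}\|(\mathcal R_{\mathrm{sp},\alpha})_{i*}\|>\varepsilon\big)\le \E\big[1\wedge \delta_n^{2\alpha}\|\mathcal R_{\mathrm{sp},\alpha}\|_F^2/\varepsilon^2\big]\to 0,
\]
because $\delta_n^{2\alpha}\|\mathcal R_{\mathrm{sp},\alpha}\|_F^2=O_P(\delta_n^{-1}(\log n)^2)$. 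This averaging argument is exactly where $\delta_n\gg(\log n)^2$ suffices, and it is the idea your proposal is missing.

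\emph{The first-order cross terms.} Your ``rem'' also silently contains $B_DET^{-\alpha}$ and $T^{-\alpha}EB_D$, where $B_D=\alpha T^{-\alpha-1}(T-D)$. A spectral-norm bound $\|L_\alpha\|\log n/\delta_n$ gives no row control. After multiplying by $\|\Lambda_\alpha^{-1/2}\|$ it is $\delta_n^{-\alpha-1/2}\log n$, which exceeds the row target $n^{-1/2}\delta_n^{-\alpha}$ by a factor $(n/\delta_n)^{1/2}\log n$.

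Row-wise absolute-value bounds also fail: for $(T^{-\alpha}EB_DU_\alpha\Lambda_\alpha^{-1/2})_{i*}$ they give only $O_P(n^{-1/2}\delta_n^{-\alpha}\sqrt{\log n})$, which is not negligible. You need the cancellation in $\sum_j E_{ij}(\cdot)$ via fixed-row second-moment bounds. For $T^{-\alpha}EB_D$ you additionally need the decoupling $t_j-d_j=-E_{ij}-D_j^{(-i)}$, because $E_{ij}$ enters $d_j$; this is the content of the paper's Lemmas~\ref{Lm::BAPTULBound} and~\ref{Lm::TAPBULBound}. Your ``relative error $\log n/\delta_n$'' heuristic is valid only for the genuinely second-order terms ($R_D$ and products of two $B_D$'s), which the paper controls with its $2\to\infty$ Taylor bookkeeping lemma.

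\emph{A minor slip.} The extra factor $\rho_n^{1/2}$ inside your Step~3 sum would make the limiting covariance vanish when $\rho_n\to0$. The correct normalization is $\delta_n^{-1/2}\sum_{j\neq i}(a_{ij}-p_{ij})(\cdot)$.
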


\paragraph{Interpretation}
Theorem~\ref{Thm::Main} gives a row-wise Gaussian approximation for the degree-$\alpha$ spectral embedding. In general, the statements of conditional convergence should be understood as convergence in probability of the corresponding distributional laws. After orthogonal alignment, the transpose of the $i$th row is approximately distributed as
\[
  \left(\hat X_\alpha Q_n\right)_{i*}^\top\approx N\left((X_\alpha)_{i*}^\top,\frac{1}{n^{2\alpha+1}\rho_n^{2\alpha}}\Sigma_\alpha(\xi_i)\right).
\]
Thus, the choice of $\alpha$ affects the embedding in two ways. First, it changes the population embedding to $X_\alpha=T^{-\alpha}X$, reweighting latent positions according to their expected degrees. Second, it changes the covariance $\Sigma_\alpha(x)$, which describes the shape and orientation of the uncertainty around each point. Consequently, different values of $\alpha$ can produce different signal-to-noise tradeoffs, and need not be simple rescalings of the same embedding. This provides a basis for comparing choices of $\alpha$ in specific models, which we pursue in the following section.

\paragraph{Proof Sketch}
The proof of Theorem~\ref{Thm::Main} follows the approach of \citet{Tang2018}, with additional technical considerations when $\alpha>1/2$. We first write
\begin{equation}
    \label{Eq::SpecRemainder}
    \hat{X}_\alpha Q_n-X_\alpha=\left(\hat{L}_\alpha-L_\alpha\right)\T X(X^\top T^{-2\alpha}X)^{-1}+\mathcal{R}_{\mathrm{sp},\alpha}V_\alpha^\top,
\end{equation}
where $\mathcal{R}_{\mathrm{sp},\alpha}$ represents a spectral remainder term that is shown to be negligible after multiplication by $n^{\alpha+1/2}\rho_n^\alpha$. We then expand $\hat{L}_\alpha-L_\alpha$ using a Taylor expansion, and show that many of the resulting terms are also negligible after multiplication by $n^{\alpha+1/2}\rho_n^\alpha$. Collecting the remaining terms ultimately allows us to write the leading terms in the expansion of the $i$th row as 
\begin{equation}
    \label{Eq::FinalRowExpansion}
    t_i^{-\alpha}\left(\sum_{j\neq i}(a_{ij}-p_{ij})t_j^{-2\alpha}X_{j*}\right)\left(X^\top T^{-2\alpha}X\right)^{-1}+\alpha t_i^{-\alpha-1}(t_i-d_i)X_{i*}+\mathrm{remainder},
\end{equation}
where $t_i$ and $d_i$ denote the $i$th diagonal entry of $T$ and $D$, respectively. Conditional on the latent positions, the two leading terms can be combined into a sum of independent random vectors, and a central limit theorem may be applied to produce the final result. 

The primary difficulty when considering general $\alpha\in[0,1]$ is the presence of factors involving $\Lambda_\alpha^{-1/2}$. The nonzero eigenvalues of $L_\alpha$ are of order
$\delta_n^{1-2\alpha}$, and hence
\[
    \norm{\Lambda_\alpha^{-1/2}}=O(\delta_n^{\alpha-1/2}).
\]
For $\alpha\leq1/2$, this factor remains bounded, whereas for $\alpha>1/2$ it diverges. Consequently, several remainder terms that can be controlled by direct norm inequalities for $\alpha\leq1/2$ require more structure-sensitive arguments when $\alpha>1/2$.

\section{Spectral Embeddings Under Block Models}
\label{Sec::BlockModels}
Under the DCSBM, defined in Example~\ref{Ex::DCSBM}, the latent position distribution has a particularly simple structure. Let $\Theta$ be a random variable taking values in $(0,1]$, with support bounded away from zero, representing a generic degree-correction parameter. Let $Z\in\{1,\ldots,K\}$ be a community label independent of $\Theta$ satisfying $\mathbb{P}(Z=k)=\pi_k$. A generic latent position may then be written $\xi=\Theta\nu_Z$, where $\nu_1,\ldots,\nu_K\in\mathbb{R}^r$ satisfy $B_{k\ell}=\langle\nu_k,\nu_\ell\rangle$. Thus, nodes in the same community have latent positions lying along a common ray, with their distance from the origin scaled by the degree-correction parameter. 

In the remainder of the section, we formalize the two features previewed in Figure~\ref{Fig::DCSBM_Alpha_Comparison} from Section~\ref{Sec::Introduction}. We first summarize the model parameters used to generate the figure. A DCSBM is used with $n=2000$, $\pi_1=\pi_2=0.5$, and degree-correction parameters $\theta_i\overset{\mathrm{iid}}{\sim}\mathrm{Unif}\{1,2/3,1/3\}$, independently of the community labels. The block probability matrix is given by
\[
    B = \begin{bmatrix} 0.5 & 0.3 \\ 0.3 & 0.5 \end{bmatrix}.
\]
The rows are scaled by $n^{\alpha+1/2}$, and the asymptotic distributions are computed using Corollary~\ref{Cor::DCS} under the dense regime, $\rho_n\equiv1$. The plotted points are the embedded rows, and the ellipses show the corresponding $95\%$ probability ellipses.

\subsection{Population Geometry Under the DCSBM}
To specialize Theorem~\ref{Thm::Main} to the DCSBM, we first introduce some notation for the relevant quantities. Write
\[
    \bar\theta=\mathbb E[\Theta],\qquad\mu=\bar\theta\sum_{m=1}^K\pi_m\nu_m,
\]
and for $k\in\{1,\ldots,K\}$, define
\[
    \gamma_k=\langle\nu_k,\mu\rangle=\bar\theta\sum_{m=1}^K\pi_mB_{km}.
\]
Then $\langle\xi,\mu\rangle=\Theta\gamma_Z$, and the matrix $\Upsilon_\alpha$ in Theorem~\ref{Thm::Main} becomes
\[
    \tilde\Upsilon_\alpha=\mathbb E\!\left[\Theta^{2-2\alpha}\right]\sum_{\ell=1}^K\frac{\pi_\ell\nu_\ell\nu_\ell^\top}{\gamma_\ell^{2\alpha}}.
\]

With this notation, Theorem~\ref{Thm::Main} specializes to the DCSBM as follows.
\begin{corollary}
\label{Cor::DCS}
Assume the setting of Theorem~\ref{Thm::Main}, suppose that the random dot product graph follows the DCSBM described above, and let $(\Theta',Z')$ be an independent copy of $(\Theta,Z)$. For $\vartheta\in\operatorname{Supp}(\Theta)$ and $k\in\{1,\ldots,K\}$, define
\[
    m_\alpha(\vartheta,k)=\frac{\vartheta^{1-\alpha}\nu_k}{\gamma_k^\alpha}\qquad\text{and}\qquad\Delta_{\alpha,k}=\sqrt{\Theta'B_{kZ'}}\,\tilde\Upsilon_\alpha^{-1}\left(\frac{(\Theta')^{1-2\alpha}\nu_{Z'}}{\gamma_{Z'}^{2\alpha}}-\frac{\alpha\tilde\Upsilon_\alpha\nu_k}{\gamma_k}\right).
\]
Then, for each fixed index $i$, conditional on $z_i=k$ and $\theta_i=\vartheta$, there exists a sequence of orthogonal matrices $Q_n$ such that
\[
    n^{\alpha+1/2}\rho_n^\alpha\left[\left(\hat{X}_\alpha Q_n\right)_{i*}-\left(X_\alpha\right)_{i*}\right]^\top\overset{d}{\longrightarrow}N\!\left(0,\Sigma_\alpha(\vartheta,k)\right),
\]
where
\[
    \Sigma_\alpha(\vartheta,k)=\frac{\vartheta^{1-2\alpha}}{\gamma_k^{2\alpha}}
    \begin{cases}
        \displaystyle\E_{\Theta',Z'}\!\left[\left(1-\vartheta\Theta' B_{kZ'}\right)\Delta_{\alpha,k}\Delta_{\alpha,k}^\top\right],
        & \rho_n\equiv1,\\[1.2em]
        \displaystyle\E_{\Theta',Z'}\!\left[\Delta_{\alpha,k}\Delta_{\alpha,k}^\top\right],
        & \rho_n\to0.
    \end{cases}
\]
Moreover,
\[
    n^\alpha\rho_n^{\alpha-1/2}(X_\alpha)_{i*}^\top\overset{p}{\longrightarrow}m_\alpha(\vartheta,k).
\]
\end{corollary}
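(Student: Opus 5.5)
The corollary follows by substituting the DCSBM latent distribution into Theorem~\ref{Thm::Main}, plus a separate law-of-large-numbers argument for the population centers. Conditional on $z_i=k$ and $\theta_i=\vartheta$, the latent position is $x=\vartheta\nu_k$, and $\langle x,\mu\rangle=\vartheta\gamma_k$. First we verify that Condition~\ref{Assump::Standing} holds for the DCSBM. Bounded support and $x^\top y=\theta\theta'B_{k\ell}\in[0,1]$ are immediate. Condition~\ref{Assump::Nonzero} needs $\Theta$ bounded away from zero and $\gamma_k>0$; the latter holds because $\gamma_k$ is a mixture of the entries $B_{km}$, which we assume are not all zero in a row, as there are no isolated communities. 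Condition~\ref{Assump::FullSpan} holds after taking $r=\operatorname{rank}(B)$. We then compute $\Upsilon_\alpha$ by conditioning on $Z$:
\[
\E\!\left[\frac{\Theta^2\nu_Z\nu_Z^\top}{\Theta^{2\alpha}\gamma_Z^{2\alpha}}\right]=\E[\Theta^{2-2\alpha}]\sum_\ell\frac{\pi_\ell\nu_\ell\nu_\ell^\top}{\gamma_\ell^{2\alpha}}=\tilde\Upsilon_\alpha,
\]
using independence of $\Theta$ and $Z$.

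Next we substitute into $\Gamma_{\rho,\alpha}(x)$ with $\xi'=\Theta'\nu_{Z'}$. The weight is $\langle x,\xi'\rangle=\vartheta\Theta'B_{kZ'}$. The vector inside the outer product is
\[
\frac{(\Theta')^{1-2\alpha}\nu_{Z'}}{\gamma_{Z'}^{2\alpha}}-\frac{\alpha\tilde\Upsilon_\alpha\nu_k}{\gamma_k},
\]
because the factor $\vartheta$ in $x$ cancels against $\langle x,\mu\rangle=\vartheta\gamma_k$. Sandwiching between the two copies of $\tilde\Upsilon_\alpha^{-1}$ and absorbing $\sqrt{\Theta'B_{kZ'}}$ into each factor gives
\[
\tilde\Upsilon_\alpha^{-1}\Gamma_{\rho,\alpha}\tilde\Upsilon_\alpha^{-1}=\vartheta\,\E\!\left[(1-\vartheta\Theta'B_{kZ'})\,\Delta_{\alpha,k}\Delta_{\alpha,k}^\top\right],
\]
with the factor $(1-\cdot)$ dropped in the sparse case. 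Dividing by $\langle x,\mu\rangle^{2\alpha}=\vartheta^{2\alpha}\gamma_k^{2\alpha}$ yields the prefactor $\vartheta^{1-2\alpha}/\gamma_k^{2\alpha}$. Conditioning on $(\theta_i,z_i)$ is the same as conditioning on $\xi_i$, since $\xi_i=\theta_i\nu_{z_i}$ and the $\nu_k$ are distinct directions, so the CLT statement transfers directly.

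For the population centers we write
\[
(X_\alpha)_{i*}^\top=t_i^{-\alpha}\rho_n^{1/2}\vartheta\nu_k,\qquad t_i=\rho_n\sum_{j\ne i}\vartheta\theta_jB_{kz_j}.
\]
Conditional on node $i$, the summands are i.i.d., bounded, and have mean $\vartheta\gamma_k$. By the weak law of large numbers, $t_i/(n\rho_n)\to\vartheta\gamma_k$ in probability. The continuous mapping theorem, using $\vartheta\gamma_k>0$, then gives
\[
n^\alpha\rho_n^{\alpha-1/2}(X_\alpha)_{i*}^\top=\left(\frac{t_i}{n\rho_n}\right)^{-\alpha}\vartheta\nu_k\overset{p}{\longrightarrow}\frac{\vartheta^{1-\alpha}\nu_k}{\gamma_k^\alpha}=m_\alpha(\vartheta,k).
\]

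The main obstacle is bookkeeping rather than analysis: the powers of $\vartheta$ must be tracked so that the $\vartheta$-dependence of the bracketed vector cancels exactly and the covariance prefactor comes out as $\vartheta^{1-2\alpha}$. A secondary point is confirming that Condition~\ref{Assump::Nonzero} holds, that is, $\gamma_k>0$, under the stated DCSBM assumptions.
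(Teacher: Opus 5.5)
Your proposal is correct and follows the paper's route. The covariance and CLT come from substituting $x=\vartheta\nu_k$ and $\xi'=\Theta'\nu_{Z'}$ into Theorem~\ref{Thm::Main}; you write out the $\vartheta$-cancellations and the computation of $\tilde\Upsilon_\alpha$ that the paper leaves implicit. The population center comes from the law of large numbers applied to $t_i/(n\rho_n)=\vartheta\cdot\frac{1}{n}\sum_{j\neq i}\theta_jB_{kz_j}\to\vartheta\gamma_k$.

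One small remark: you do not need the $\nu_k$ to point in distinct directions to move from conditioning on $\xi_i$ to conditioning on $(\theta_i,z_i)$. The other nodes' labels are independent of node $i$'s, and the graph, $X_\alpha$ and $Q_n$ depend on the labels only through the $\xi_j$. So the conditional law given $(\theta_i,z_i)$ is already the one given $\xi_i=\vartheta\nu_k$.
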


\begin{proof}
    The stated covariance formula and central limit theorem follow by substituting the relevant DCSBM model parameters into Theorem~\ref{Thm::Main}. It remains to derive the limiting rescaled population center. Fix a node $i$ with $z_i=k$ and $\theta_i=\vartheta$, and define
    \[
        \gamma_{k,n}^{(-i)}=\frac{1}{n}\sum_{j\neq i}\theta_j B_{kz_j}.
    \]
    Then
    \begin{align}
        n^\alpha\rho_n^{\alpha-1/2}(X_\alpha)_{i*}^\top&=n^\alpha\rho_n^{\alpha-1/2}t_i^{-\alpha}X_i \nonumber\\
        &=n^\alpha\rho_n^{\alpha}\left(\sum_{j\neq i}p_{ij}\right)^{-\alpha}\vartheta\nu_k \nonumber\\
        &=n^\alpha\rho_n^{\alpha}\left(n\rho_n\vartheta\,\gamma_{k,n}^{(-i)}\right)^{-\alpha}\vartheta\nu_k \nonumber\\
        &=\frac{\vartheta^{1-\alpha}\nu_k}{\left(\gamma_{k,n}^{(-i)}\right)^\alpha}\label{Eq::ScaledMeanFinite}.
    \end{align}
    By the law of large numbers,
    \[
        \gamma_{k,n}^{(-i)}\overset{p}{\longrightarrow}\gamma_k=\bar{\theta}\sum_{\ell=1}^K\pi_\ell B_{k\ell}.
    \]
    Plugging this into Equation~\eqref{Eq::ScaledMeanFinite} gives the desired result,
    \[
        m_\alpha(\vartheta,k)=\frac{\vartheta^{1-\alpha}\nu_k}{\gamma_k^\alpha}.
    \]
\end{proof}

The expression for $m_\alpha(\vartheta,k)$ makes explicit how degree normalization changes the population geometry. At the population level, for $\alpha<1$, the center of a node with degree-correction parameter $\vartheta$ is proportional to $\vartheta^{1-\alpha}$. Thus, within each community, the population centers lie along the ray spanned by $\nu_k$, with $\vartheta$ determining the position along the ray. As $\alpha$ increases, this effect is reduced, and at $\alpha=1$ it disappears entirely. Consequently, all nodes in community $k$ share the same limiting population center. 

This collapse to a common limiting population center is already implicit from the factorization $X_\alpha=T^{-\alpha}X$ and can be established without a row-wise central limit theorem. However, Corollary~\ref{Cor::DCS} additionally provides the corresponding second-order description in the form of the limiting covariance $\Sigma_\alpha(\vartheta,k)$. To explain the elongating effect observed in Figure~\ref{Fig::DCSBM_Alpha_Comparison}, it is useful to view the population centers in two ways. For a fixed community, varying $\vartheta$ traces the ray described above. For a fixed $\vartheta$, the $K$ community centers form a cross-section of these rays. In Figure~\ref{Fig::DCSBM_Alpha_Comparison}, this is the vertical line joining the two centers with the same shade. The following corollary shows that, more generally, these fixed $\vartheta$ centers lie in an explicit affine hyperplane. 

\begin{corollary}
\label{Cor::DCSPopulationHyperplane}
Assume the setting of Corollary~\ref{Cor::DCS} and suppose that $B$ has rank $K$. For each $\alpha\in[0,1]$, let $h_\alpha$ be the unique vector satisfying
\[
    h_\alpha^\top\nu_k=\gamma_k^\alpha,
    \qquad k=1,\ldots,K.
\]
Then, for every $\vartheta\in\operatorname{Supp}(\Theta)$, the population centers $m_\alpha(\vartheta,1),\ldots,m_\alpha(\vartheta,K)$ lie in the
affine hyperplane
\[
    H_{\alpha,\vartheta}=\left\{x\in\mathbb R^K:h_\alpha^\top x=\vartheta^{1-\alpha}\right\}.
\]
\end{corollary}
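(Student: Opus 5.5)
The argument is a direct computation from the explicit population center $m_\alpha(\vartheta,k)=\vartheta^{1-\alpha}\nu_k/\gamma_k^\alpha$ given in Corollary~\ref{Cor::DCS}. There are two steps: first show that $h_\alpha$ exists and is unique, then evaluate $h_\alpha^\top m_\alpha(\vartheta,k)$.

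\textbf{Existence and uniqueness of $h_\alpha$.} Since $B$ has rank $K$ and $B_{k\ell}=\langle\nu_k,\nu_\ell\rangle$, the latent dimension is $r=K$. Let $N\in\mathbb{R}^{K\times K}$ be the matrix whose rows are $\nu_1^\top,\ldots,\nu_K^\top$. Then $NN^\top=B$ is nonsingular, so $N$ is invertible and the vectors $\nu_1,\ldots,\nu_K$ form a basis of $\mathbb R^K$. The defining equations $h_\alpha^\top\nu_k=\gamma_k^\alpha$ read $Nh_\alpha=g_\alpha$, where $g_\alpha=(\gamma_1^\alpha,\ldots,\gamma_K^\alpha)^\top$. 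They therefore have the unique solution $h_\alpha=N^{-1}g_\alpha$. This vector is nonzero, because each $\gamma_k=\langle\nu_k,\mu\rangle>0$ by Condition~\ref{Assump::Nonzero} applied to the points $\vartheta\nu_k$ of the support (this holds even when $\alpha=0$, where $g_0$ is the all-ones vector). Hence $H_{\alpha,\vartheta}$ is a genuine affine hyperplane. Note also that $\gamma_k^\alpha$ is well defined because $\gamma_k>0$.

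\textbf{Membership in the hyperplane.} Fix $\vartheta\in\operatorname{Supp}(\Theta)$ and $k\in\{1,\ldots,K\}$. Then
\[
    h_\alpha^\top m_\alpha(\vartheta,k)=\frac{\vartheta^{1-\alpha}}{\gamma_k^\alpha}\,h_\alpha^\top\nu_k=\frac{\vartheta^{1-\alpha}}{\gamma_k^\alpha}\,\gamma_k^\alpha=\vartheta^{1-\alpha},
\]
so $m_\alpha(\vartheta,k)\in H_{\alpha,\vartheta}$ for every $k$.

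There is no real obstacle here. The only point needing care is the identification $r=K$, which follows from $\operatorname{rank}B=K$, together with positivity of the $\gamma_k$. Together these guarantee that $h_\alpha$ is uniquely defined.
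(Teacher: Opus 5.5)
Your proposal is correct and follows essentially the same route as the paper: show that $\nu_1,\ldots,\nu_K$ form a basis of $\mathbb R^K$, so that $h_\alpha$ exists and is unique, and then compute $h_\alpha^\top m_\alpha(\vartheta,k)=\vartheta^{1-\alpha}$ directly. One step needs a citation: $\operatorname{rank}B=K$ alone only gives $r\ge K$, so you should invoke the full-span Condition~\ref{Assump::FullSpan} (as the paper does) or the convention in Example~\ref{Ex::DCSBM} that $r=\operatorname{rank}B$; your extra check that $h_\alpha\neq0$ because $\gamma_k>0$ is a small point the paper leaves implicit.
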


\begin{proof}
Let $V\in\mathbb{R}^{K\times r}$ have rows $\nu_1^\top,\ldots,\nu_K^\top$, so that $B=VV^\top$. Since $\operatorname{rank}(B)=K$, we have $\operatorname{rank}(V)=K$. On the other hand, Condition~\ref{Assump::FullSpan} implies that $\nu_1,\ldots,\nu_K$ span $\mathbb{R}^r$, and hence $\operatorname{rank}(V)=r$. Thus, $r=K$ and $\nu_1,\ldots,\nu_K$ form a basis of $\mathbb R^K$. Consequently, $h_\alpha$ exists and is unique. Plugging in the definitions of $h_\alpha$ and $m_\alpha(\vartheta,k)$ immediately gives
\[
    h_\alpha^\top m_\alpha(\vartheta,k)=\vartheta^{1-\alpha}\frac{h_\alpha^\top\nu_k}{\gamma_k^\alpha}=\vartheta^{1-\alpha},
\]
which proves that $m_\alpha(\vartheta,k)\in H_{\alpha,\vartheta}$ for every $k=1,\ldots,K$.
\end{proof}

Corollary~\ref{Cor::DCSPopulationHyperplane} is useful for two reasons. First, when $\alpha=1$, we have $h_1=\mu$. Thus, all the limiting population centers lie in the hyperplane
\[
    H_{1,\vartheta}=\left\{x\in\mathbb{R}^K:\mu^\top x=1\right\},
\]
and $H_{1,\vartheta}$ is independent of $\vartheta$. Second, these hyperplanes provide a natural reference for describing the orientation of the limiting covariance. We therefore decompose the covariance into components normal and tangential to $H_{\alpha,\vartheta}$. The resulting ratio quantifies the observed elongation of probability ellipses in Figure~\ref{Fig::DCSBM_Alpha_Comparison}. Let
\[
    u_\alpha=\frac{h_\alpha}{\|h_\alpha\|},\qquad\Pi_\alpha=I_K-u_\alpha u_\alpha^\top,
\]
so that $u_\alpha$ is normal to $H_{\alpha,\vartheta}$ and $\Pi_\alpha$ projects onto its tangent space. We may then define the normal variance and average tangential variance relative to this hyperplane by
\[
    \sigma^2_{N,\alpha,k}(\vartheta)=u_\alpha^\top\Sigma_\alpha(\vartheta,k)u_\alpha\qquad\text{and}\qquad\sigma^2_{T,\alpha,k}(\vartheta)=\frac{1}{K-1}\operatorname{tr}\!\left(\Pi_\alpha\Sigma_\alpha(\vartheta,k)\Pi_\alpha\right).
\]

Then, whenever $\sigma^2_{T,\alpha,k}(\vartheta)>0$, we write
\[
    \mathcal F_{\alpha,k}(\vartheta):=\frac{\sigma^2_{N,\alpha,k}(\vartheta)}{\sigma^2_{T,\alpha,k}(\vartheta)}.
\]
When $\rho_n\to0$, the only dependence of $\Sigma_\alpha(\vartheta,k)$ on $\vartheta$ is through the scalar factor $\vartheta^{1-2\alpha}$. Consequently, the ratio $\mathcal F_{\alpha,k}(\vartheta)$ does not depend on $\vartheta$, and we may write it as $\mathcal{F}_{\alpha,k}$. Smaller values of $\mathcal{F}_{\alpha,k}$ indicate that the limiting covariance is more strongly concentrated in directions tangent to $H_{\alpha,\vartheta}$. 

\subsection{A Symmetric Two-Community Example}
We illustrate this ratio using a symmetric two-community DCSBM in the sparse regime, $\rho_n\to0$. An equivalent analysis could be carried out in the dense regime, though the computations are somewhat more involved. Consider the model
\[
    \pi_1=\pi_2=\frac12,\qquad B=\begin{bmatrix}p&q\\q&p\end{bmatrix},\qquad p>q>0.
\]
A direct computation (Appendix~\ref{Sec::FlatteningRatioDerivation}) gives
\[
    \mathcal F_\alpha=\frac{\kappa\left(1-\alpha(2-\alpha)r_\alpha\right)}{1-\kappa^2\alpha(2-\alpha)r_\alpha},
\]
where 
\[
    \kappa=\frac{p-q}{p+q}\qquad\text{and}\qquad r_\alpha=\dfrac{\left(\E\left[\Theta^{2-2\alpha}\right]\right)^2}{\bar\theta\,\E\!\left[\Theta^{3-4\alpha}\right]}.
\]

If $\Theta$ is constant, i.e., if there is no degree heterogeneity and the model reduces to an SBM, $r_\alpha=1$ and the expression simplifies to 
\[
    \mathcal F_\alpha=\frac{\kappa(1-\alpha)^2}{1-\left(\kappa\right)^2\alpha(2-\alpha)}.
\]
This quantity is strictly decreasing in $\alpha$ and is exactly $0$ when $\alpha=1$. Qualitatively, in this setting, the covariance matrices become progressively flatter and are singular at $\alpha=1$, with all variation lying parallel to the hyperplane containing the limiting population centers. 

When $\Theta$ is not constant, an application of Cauchy--Schwarz gives
\[
    0\leq r_\alpha\leq1.
\] 
Holding $r_\alpha$ fixed, the expression for $\mathcal{F}_\alpha$ is decreasing in $\alpha$. Although $r_\alpha$ does affect how $\mathcal{F}_\alpha$ changes with $\alpha$, this nevertheless identifies one mechanism contributing to the general flattening seen in the example of Figure~\ref{Fig::DCSBM_Alpha_Comparison}.

While $\mathcal{F}_{\alpha,k}$ summarizes the shape of the within-community covariance, it does not fully capture the overall separation between communities. In the next section, we thus compare the resulting community distributions directly.

\section{Comparing Degree Normalizations for Community Detection}
\label{Sec::ComparingDegreeNorm}
To study how different degree normalizations affect community detection, we consider a row-normalized spectral clustering algorithm \citep{Jin2015, QinRohe2013}. Under the DCSBM, the row-normalization step is included to remove the effect of the degree-correction parameter at the population level. For each choice of $\alpha$, the degree-$\alpha$ spectral embedding is computed, its rows are normalized, and $k$-means is applied to the resulting points. The algorithm is summarized in Algorithm~\ref{Alg::SpectralClustering}. In this section, we continue under the assumption that $B$ has rank $K$. 

\begin{algorithm}[htpb]
\caption{Row-Normalized Degree-$\alpha$ Spectral Clustering}
\label{Alg::SpectralClustering}
\begin{algorithmic}[1]
    \Require Degree-$\alpha$ Laplacian $\hat{L}_\alpha$; number of communities $K$
    \Ensure Estimated community labels $\hat z_1,\ldots,\hat z_n$
    \State Compute the degree-$\alpha$ spectral embedding $\hat X_{\alpha,K}\in\mathbb{R}^{n\times K}$ from $\hat L_\alpha$.
    \State Normalize the rows of $\hat{X}_{\alpha,K}$ and let $\tilde{X}_{\alpha,K}$ denote the normalized embedding.
    \State Apply $k$-means to the rows of $\tilde X_{\alpha,K}$.
    \State Return the resulting cluster labels $\hat z_1,\ldots,\hat z_n$.
\end{algorithmic}
\end{algorithm}

It is natural to ask how the choice of $\alpha$ affects spectral clustering performance. In the following section, we study this question by comparing the projected Gaussian distributions predicted by asymptotic theory. 

\subsection{Comparing the Projected Distributions}
We now restrict our attention to two-community SBMs and describe a diagnostic for comparing different values of $\alpha$ in spectral clustering. Following the idealization used by \citet{Cape2019}, we adopt two approximations. First, we treat the asymptotic Gaussian approximation from Corollary~\ref{Cor::DCS} as the exact finite-sample distribution of an embedded row. Second, we condition on a fixed community assignment, with deterministic community sizes chosen to match the specified community proportions. Together, these assumptions define the surrogate distributions compared below. Consequently, the value of $\alpha$ preferred by the diagnostic need not coincide with the value that minimizes the actual finite-sample error of spectral clustering. Finally, because this section concerns inference from a single observed graph rather than a sequence of graphs, we set $\rho_n\equiv1$ and interpret sparsity through the magnitudes of the entries of $B$.

We now compare values of $\alpha$ by applying the row-normalization step from Algorithm~\ref{Alg::SpectralClustering} to these Gaussian approximations and evaluating the Bayes error of the resulting projected distributions. Concretely, for each $\alpha\in[0,1]$ and $k\in\{1,2\}$, let $m_{\alpha,k}$ and $\Sigma_{\alpha,k}$ denote the limiting population center and covariance matrix of a node in community $k$, respectively. Let $\mathcal{Q}_{k,\alpha}^{(n)}$ denote the distribution obtained by projecting a draw from
\[
    N\left(m_{\alpha,k},\frac{1}{n}\Sigma_{\alpha,k}\right)
\]
onto the unit circle using $\psi(x)=x/\norm{x}$. Strictly speaking, the distribution here is a scaled version of the one derived in Corollary~\ref{Cor::DCS}. Because this distribution is projected to get $\mathcal{Q}_{k,\alpha}^{(n)}$, this scaling is irrelevant. However, this scale makes clear how the distribution changes as $n$ increases. To quantify the overlap between the distributions, we consider the Bayes error
\[
    \mathcal E_\alpha^{(n)}=\inf_{\phi:\mathbb S^1\to\{1,2\}}\sum_{k=1}^2\pi_k\mathcal{Q}_{k,\alpha}^{(n)}\left(\left\{u\in\mathbb S^1:\phi(u)\neq k\right\}\right),
\]
where $\pi_k$ denotes the probability of a node belonging to community $k$. Smaller values of $\mathcal E_\alpha^{(n)}$ indicate better distinguishability between the communities. We are therefore interested in
\begin{equation}
    \label{Eq::AlphaStar}
    \alpha_n^*\in\argmin_{\alpha\in[0,1]}\mathcal E_\alpha^{(n)}.
\end{equation}
We refer to $\alpha_n^*$ as the ``preferred'' degree normalization for the specified model and value of $n$, where the comparison is understood to be with respect to the Bayes-error criterion rather than actual finite-sample clustering error. Although $\mathcal{E}_\alpha^{(n)}\to0$ for every fixed $\alpha$, the relative ordering across $\alpha$ may stabilize as $n$ increases. We next describe how we compute $\alpha_n^*$ in practice.

\subsection{Numerical Evaluation}
\label{Sec::NumEval}
In the next two sections, we consider a family of models corresponding to a range of parameter settings. For each setting, we use the explicit expressions for the means and covariance matrices from Corollary~\ref{Cor::DCS} to numerically evaluate Equation~\eqref{Eq::AlphaStar}. We approximate $\alpha_n^*$ by the grid point that minimizes $\mathcal{E}_\alpha^{(n)}$. For the remaining computations, we set $n=200$ and suppress the dependence on $n$, writing the resulting minimizer as $\alpha^*$. Further computational details, including the stability of $\alpha^*$ with respect to $n$, can be found in Appendix~\ref{Sec::AdditionalNumericalDetails}.

This diagnostic is closely related to the Chernoff-information comparison used by \citet{Cape2019} for adjacency and symmetric Laplacian embeddings. Chernoff information is a natural criterion because it characterizes the asymptotic exponential decay rate of the Bayes error and has an explicit form for multivariate Gaussian distributions. We target the Bayes error directly because we apply the row-normalization step used in Algorithm~\ref{Alg::SpectralClustering} before evaluating community separation. The resulting projected Gaussian distributions are no longer Gaussian, so the corresponding Chernoff information does not retain the same convenient closed form. Further comparisons with the Chernoff-information benchmark are provided in Appendix~\ref{Sec::PropertiesofDiagnostic}.

For selected parameter settings, we also simulate networks and compare the predicted $\alpha^*$ with the finite-sample misclassification rates of Algorithm~\ref{Alg::SpectralClustering}. Although the error of the $k$-means decision rule need not approach the Bayes error, these comparisons are intended to assess whether the broad ordering across values of $\alpha$ predicted by the projected Gaussian distributions is reflected in the performance of the actual clustering procedure typically used in practice.

\subsection{Imbalanced Communities}
\label{Sec::ImbalancedCommunities}
We begin by considering an SBM on $n$ nodes with
\[
    B=\begin{bmatrix}p&q\\q&p\end{bmatrix},\qquad p>q.
\]
Let $\pi$ denote the proportion of nodes in community 1, so that $\pi_1=\pi$ and $\pi_2=1-\pi$. Figure~\ref{Fig::pqplots} illustrates how $\alpha^*$ varies with $p$, $q$, and $\pi$. Each heat map shows a clear dark region in which adjacency spectral clustering, corresponding to $\alpha=0$, is preferred. This region generally occurs when $p$ and $q$ are both relatively large, with $\alpha^*$ increasing as these values decrease.

\begin{figure}[t]
    \centering
    \begin{subfigure}[t]{0.27\textwidth}
        \centering
        \includegraphics[width=\linewidth]{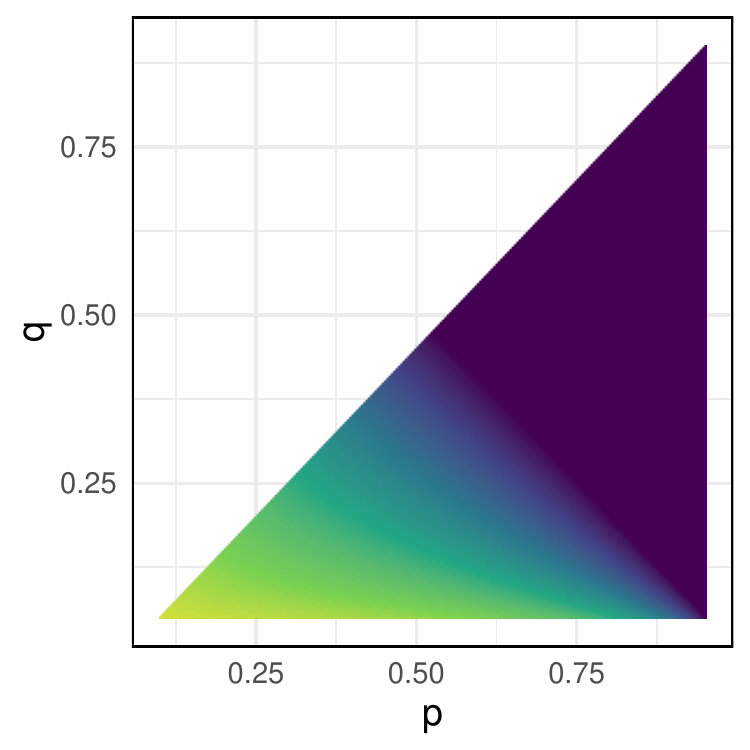}
        \caption{$\pi=0.5$}
        \label{Fig::05pqplot}
    \end{subfigure}
    \hfill
    \begin{subfigure}[t]{0.27\textwidth}
        \centering
        \includegraphics[width=\linewidth]{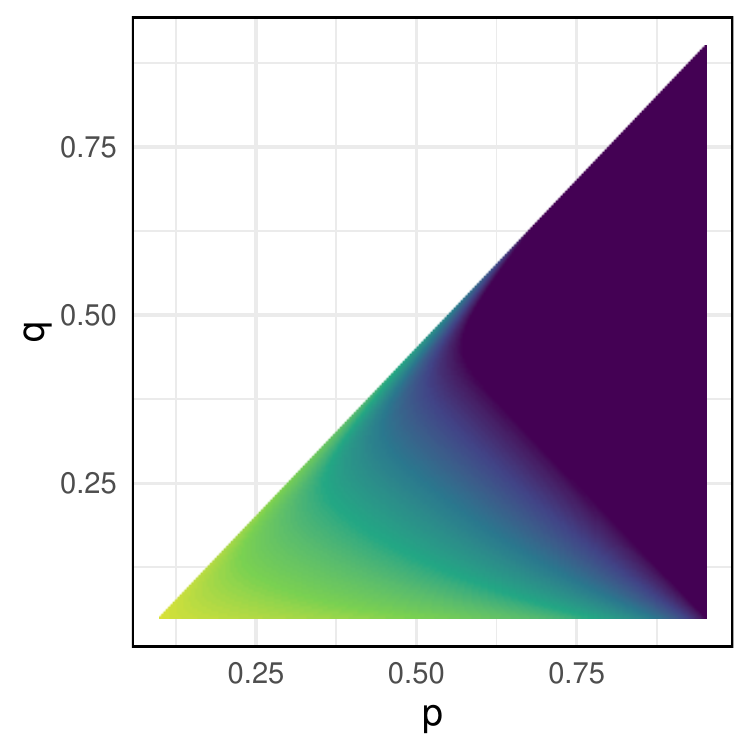}
        \caption{$\pi=0.7$}
        \label{Fig::07pqplot}
    \end{subfigure}
    \hfill
    \begin{subfigure}[t]{0.27\textwidth}
        \centering
        \includegraphics[width=\linewidth]{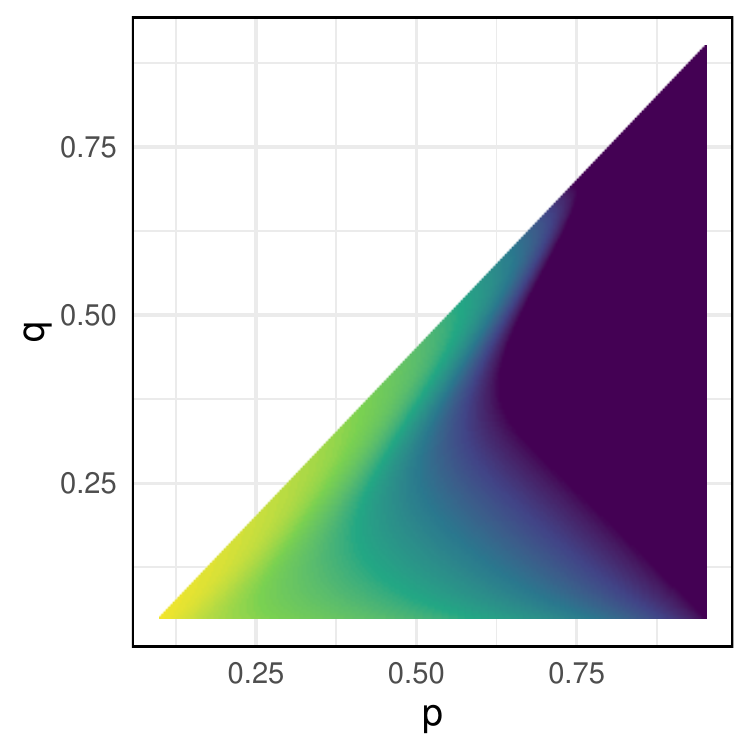}
        \caption{$\pi=0.9$}
        \label{Fig::09pqplot}
    \end{subfigure}
    \hfill
    \begin{minipage}[t]{0.07\textwidth}
        \centering
        \raisebox{1.4em}[0pt][0pt]{%
            \includegraphics[width=\linewidth]{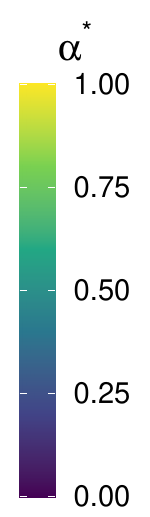}%
        }
    \end{minipage}
    \vspace{1em}
    \begin{subfigure}[t]{0.55\textwidth}
        \centering
        \includegraphics[width=\linewidth]{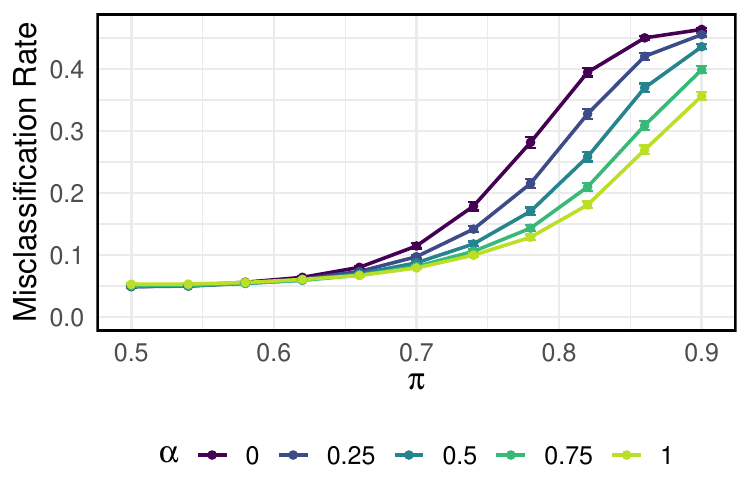}
        \caption{Numerical Results}
        \label{Fig::experimentvarypi}
    \end{subfigure}
     \caption{The top row shows $\alpha^*$ as a function of $p$, $q$, and $\pi$, while the bottom panel reports the average misclassification rates over $500$ runs of spectral clustering for varying $\alpha$. Error bars indicate $\pm2$ standard errors across the $500$ replications. Both results suggest that stronger normalization is favored as the communities become more imbalanced.}
     \label{Fig::pqplots}
\end{figure}

We now evaluate whether this behavior is reflected in finite-sample performance of Algorithm~\ref{Alg::SpectralClustering}. To further isolate the effect of community imbalance, we fix $p$ and $q$ and vary only the community proportions. We use 
\[
    B=\begin{bmatrix}0.30&0.21\\0.21&0.30\end{bmatrix}
\]
with $n=350$, $\pi=0.5+0.04c$, and $c\in\{0,\ldots,10\}$. The results of this experiment are shown in Figure~\ref{Fig::experimentvarypi}, which plots the overall misclassification rate as a function of $\pi$. Lighter shades correspond to stronger degree normalization. This finite-sample experiment shows that when the communities are nearly balanced, the choice of $\alpha$ has little effect. However, as the imbalance increases, smaller values of $\alpha$ perform progressively worse. Overall, both the analysis of $\alpha^*$ and numerical evidence support the idea that stronger normalization can mitigate the deterioration of spectral clustering performance caused by community imbalance. 

\subsection{Unequal Within-Community Probabilities}
\label{Sec::Nonhomogeneous}
In this section, we consider an SBM on $n$ nodes with 
\[
    \pi_1=\pi_2=\frac{1}{2},\qquad B=\begin{bmatrix}p&q\\q & r\end{bmatrix},\qquad pr-q^2>0.
\]
Here, $r$ denotes a block-probability parameter rather than the latent position dimension introduced earlier. Since the embedding dimension is fixed at two throughout this section, the distinction should be clear from context. The condition $pr-q^2>0$ ensures that $B$ is positive definite. When $p\neq r$, the two communities have different expected-degree scales, so degree normalization may affect the two blocks differently. 

The left panel of Figure~\ref{Fig::pqplotstheory} shows that the preferred amount of normalization depends strongly on $B$. Larger values of $\alpha$ are favored throughout much of the low-$q$ region, whereas little or no normalization is preferred when both $q$ and $r$ are relatively large. Thus, even with balanced communities, no single value of $\alpha$ is uniformly preferable.

We next examine finite-sample performance in the setting
\[
    B=\begin{bmatrix}0.25&0.10\\0.10&0.10\end{bmatrix}
\]
with $n=200$ and $\alpha\in\{0,0.1,\ldots,1\}$. For each value of $\alpha$, we apply Algorithm~\ref{Alg::SpectralClustering} and average the overall misclassification rate over $500$ replications. As shown in Figure~\ref{Fig::fixedpmeans}, the average misclassification rate is minimized near $\alpha=0.4$, with $\alpha=0.5$ giving comparable performance. This minimizer does not coincide with the predicted $\alpha^*\approx0.8$, though some discrepancy is expected due to the assumptions used in computing $\alpha^*$. Nevertheless, both suggest that an intermediate amount of normalization is preferred, and possible sources of this discrepancy are discussed in Section~\ref{Sec::Discussion}.

\begin{figure}[t]
     \centering
     \begin{subfigure}[b]{0.48\textwidth}
         \centering
         \includegraphics[width=\textwidth]{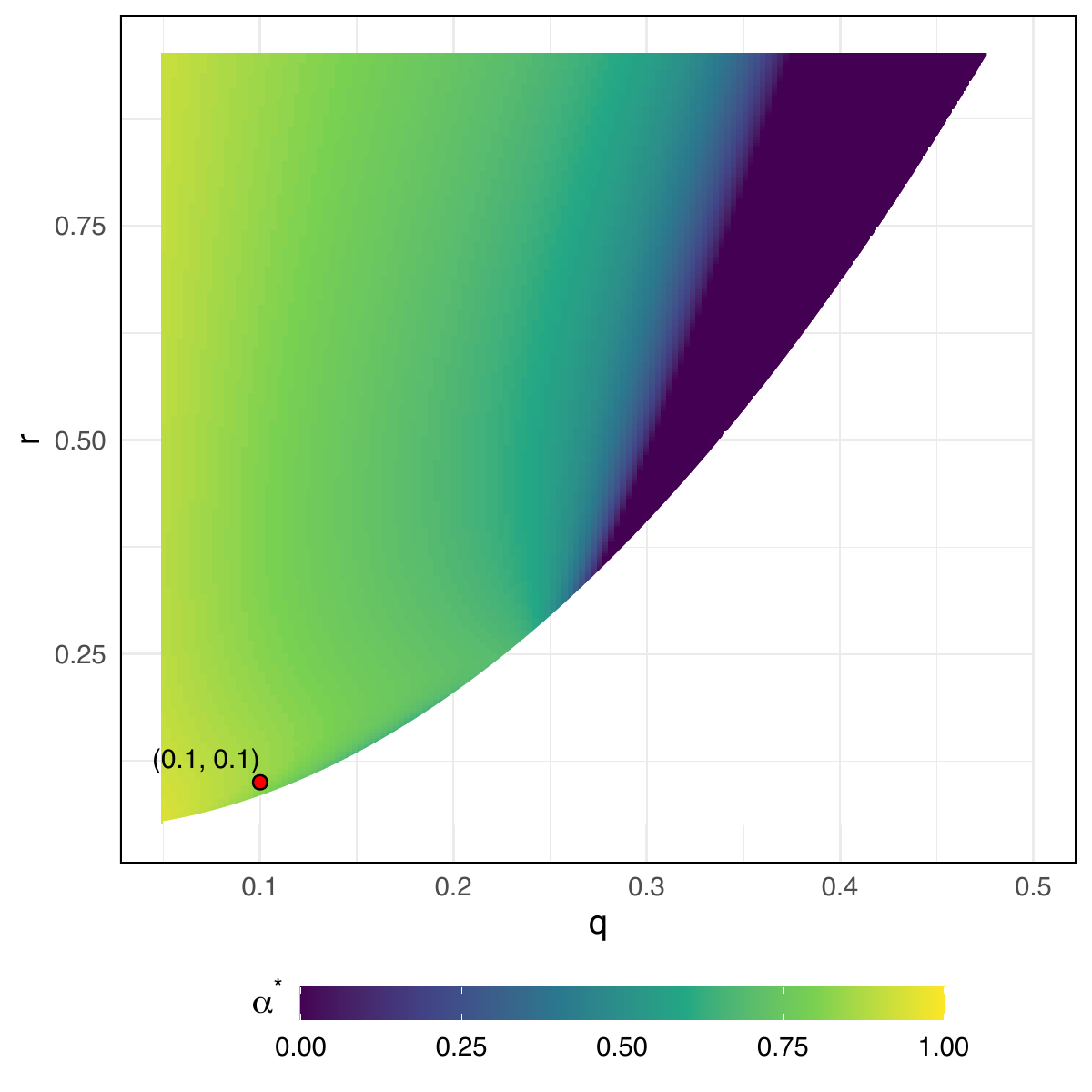}
         \caption{Fixed $p=0.25$}
         \label{Fig::025fixplot}
     \end{subfigure}
     \hfill 
     \begin{subfigure}[b]{0.48\textwidth}
         \centering
         \includegraphics[width=\textwidth]{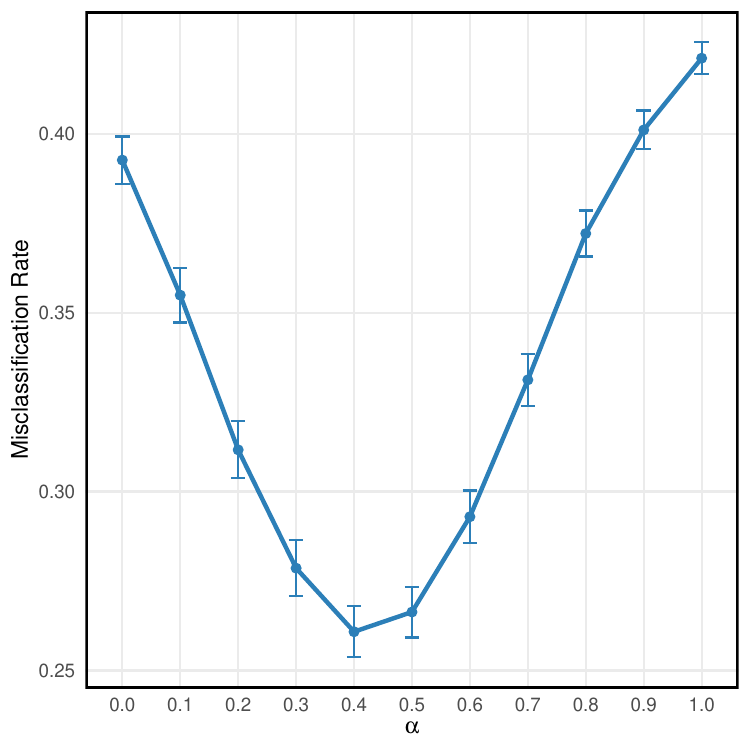}
         \caption{Numerical Results}
         \label{Fig::fixedpmeans}
     \end{subfigure}
     
     \caption{The left panel shows $\alpha^*$ as a function of $q$ and $r$ with $p$ fixed at $0.25$. The right panel reports the average misclassification rates over $500$ runs of spectral clustering for varying $\alpha$ with $p=0.25$, $q=0.1$, and $r=0.1$, corresponding to the labeled point in the left panel. Error bars indicate $\pm2$ standard errors across the $500$ replications. Both results suggest that an intermediate value of $\alpha$ is appropriate for these parameters, though the values do not agree exactly.}
     \label{Fig::pqplotstheory}
\end{figure}

\section{Discussion}
\label{Sec::Discussion}
We have studied a continuum of degree-normalized spectral embeddings that contains the adjacency and symmetric Laplacian embeddings as special cases. Under the random dot product graph model, we derived a row-wise central limit theorem describing how the choice of $\alpha$ changes both the population geometry and the local uncertainty of the embedded nodes. Specializing these expressions to the DCSBM, we showed that increasing $\alpha$ progressively removes the effect of degree heterogeneity from the population embedding. In particular, when $\alpha=1$, the population centers depend only on community membership and, when $B$ is full rank, lie in a common affine hyperplane. The same expressions describe the accompanying change in the shape and orientation of the limiting covariance, which can flatten toward that hyperplane as $\alpha$ increases.

These results also provide a basis for comparing degree normalizations in community detection. For the single-graph comparisons in Section~\ref{Sec::ComparingDegreeNorm}, we take $\rho_n\equiv1$ and interpret sparsity through the magnitudes of the entries of $B$. In the balanced two-community SBM, shown in Figure~\ref{Fig::05pqplot}, stronger normalization is preferred when the entries of $B$ are relatively small, whereas little or no normalization is preferred when they are large. Community imbalance (Section~\ref{Sec::ImbalancedCommunities}) and unequal within-community probabilities (Section~\ref{Sec::Nonhomogeneous}) alter the preferred normalization, so that it depends on community proportions and block-probability structure as well as on overall density. The finite-sample experiments indicate that the broad normalization regime suggested by $\alpha^*$ is also reflected in the performance of row-normalized spectral clustering. 

We now discuss several limitations and possible extensions of this work. Our analysis focused on the RDPG, which requires $B$ to be positive semidefinite, so many disassortative and other indefinite structures lie outside the present model. The generalized RDPG \citep{rubin2022statistical} would be the natural setting for such an extension.

The numerical comparison in Section~\ref{Sec::Nonhomogeneous} also illustrates how $\alpha^*$ should be interpreted. The quantity is computed under two idealizations. The first is that the row-wise Gaussian approximation of Theorem~\ref{Thm::Main} is exact at the sample size considered. The second is that clustering is carried out by the Bayes rule for the projected distributions rather than by $k$-means. Neither holds in finite samples, and $\alpha^*$ should therefore not be expected to coincide with the empirical minimizer. One possible source of the finite-sample discrepancy is the contribution of higher-order terms arising from the use of observed rather than expected degrees. Such higher-order terms may become more visible as $\alpha$ increases, particularly when the first-order covariance is nearly singular.

A related practical extension is degree regularization. Replacing $D$ by $D_\tau=D+\tau I$ for some $\tau>0$ (typically the average observed degree) is a standard approach in spectral clustering \citep{Amini2013,QinRohe2013}, as it limits the influence of nodes with very small observed degrees. We have deliberately worked throughout with the unregularized family $D^{-\alpha}AD^{-\alpha}$ in order to isolate the effect of degree normalization without introducing a second tuning parameter. Extending this family to
\[
    D_\tau^{-\alpha} A D_\tau^{-\alpha}
\]
and studying the joint roles of $\alpha$ and $\tau$ is a natural direction for future work. However, the population target associated with $D_\tau$ involves $T+\tau I$, which does not cancel the degree-correction parameter exactly, so the collapse of the population centers at $\alpha=1$ described in Corollary~\ref{Cor::DCSPopulationHyperplane} is recovered only when $\tau/\delta_n\to0$. Regularization would therefore trade part of the population geometry that motivates large $\alpha$ against the finite-sample stability described above. Characterizing this tradeoff, together with the corresponding projected Gaussian diagnostic, warrants further investigation.

Taken together, our results suggest that the amount of degree normalization affects both first-order geometry and second-order uncertainty of the corresponding embeddings. It should therefore be viewed as a model-dependent design choice rather than as a universal default.

% Bibliography using author-year style
\bibliographystyle{plainnat} % Author-year style
\bibliography{main} % Your BibTeX file

% Appendix section
\appendix
\section{Proof of Theorem~\ref{Thm::Main}}
The proof of Theorem~\ref{Thm::Main} is technical and notation-heavy, but, as outlined in Section~\ref{Sec::RDPGs}, the overall structure is relatively simple. This section of the appendix is organized as follows. In Section~\ref{Sec::CommonNotation}, we first list commonly used notation throughout the proof. Section~\ref{Sec::HelperLemmas} then proves several technical lemmas that will be used frequently throughout the proof of the theorem. Section~\ref{Sec::SpectralBounds} derives Equation~\eqref{Eq::SpecRemainder} and shows that the spectral remainder is negligible after scaling by $n^{\alpha+1/2}\rho_n^\alpha$. Finally, Section~\ref{Sec::TheoremProof} proves Theorem~\ref{Thm::Main}.

\subsection{Notation and Preliminaries for Proof}
\label{Sec::CommonNotation}
We first introduce some notation used throughout the appendix. For a matrix $M$, let $M_{i*}$ denote its $i$th row. We write $\norm{M}$ for the spectral norm of a matrix, $\norm{M}_F$ for the Frobenius norm, $\norm{M}_\infty:=\max_i\sum_j\left\lvert M_{ij}\right\rvert$ for the induced matrix infinity norm, and $\norm{M}_{2\to\infty}:=\max_i\norm{M_{i*}}_2$ for the two-to-infinity norm of a matrix \citep{Cape20192}. 

Throughout the proof, we fix $\alpha$. Additionally, we now make explicit the matrix $Q_n$ that appears in the statement of Theorem~\ref{Thm::Main}. Let
\[
    U_\alpha^\top\hat U_\alpha=W_{1,n}S_n W_{2,n}^\top
\]
be a singular value decomposition, and define $W_n:=W_{2,n}W_{1,n}^\top$. We choose $V_\alpha$ so that $U_\alpha=\T X V_\alpha\Lambda_\alpha^{-1/2}$. It follows that $X_\alpha=U_\alpha\Lambda_\alpha^{1/2}V_\alpha^\top$. We then define $Q_n:=W_nV_\alpha^\top$. Because $\alpha$ is fixed, we suppress the dependence of these matrices on $\alpha$ in the notation. The table below contains additional notation and descriptions.

\begin{table}[ht]
\setlength{\tabcolsep}{5pt}
\renewcommand{\arraystretch}{1.08}

\begin{tabularx}{\textwidth}{@{}
    >{\raggedright\arraybackslash}p{0.15\textwidth}
    >{\raggedright\arraybackslash}p{0.34\textwidth}
    >{\raggedright\arraybackslash}X
@{}}
\toprule
\textbf{Quantity} & \textbf{Definition} & \textbf{Description} \\
\midrule
$P_{\diag},P^\circ$&$P_{\diag}=\diag(P)$,\quad$P^\circ=P-P_{\diag}$&Diagonal correction and loop-free population matrix \\[0.5em]
$E_A$&$E_A=A-P^\circ$&Centered adjacency noise\\[0.5em]$d,D$&$d=A\mathbf 1$,\quad $D=\diag(d)$&Observed degrees\\[0.5em]
$t,T$&$t=P^\circ\mathbf 1$,\quad $T=\diag(t)$&Expected degrees\\[0.5em]
$\delta_n$&$\delta_n=n\rho_n$&Expected-degree scale\\[0.5em]
$E_\alpha$&$E_\alpha=\hat L_\alpha-L_\alpha$&Perturbation of the degree-$\alpha$ Laplacian\\[0.5em]
$G_{n,\alpha}$&$G_{n,\alpha}=X^\top T^{-2\alpha}X$&Gram matrix associated with $L_\alpha$\\[0.5em]$U_\alpha,V_\alpha,\Lambda_\alpha$&\(\begin{array}[t]{@{}l@{}}
L_\alpha=U_\alpha\Lambda_\alpha U_\alpha^\top,\\[2pt]
G_{n,\alpha}=V_\alpha\Lambda_\alpha V_\alpha^\top
\end{array}\)&Population eigendecompositions \\\addlinespace[0.35em]
$W_n$&\(\begin{array}[t]{@{}l@{}}U_\alpha^\top\hat U_\alpha=W_{1,n}S_nW_{2,n}^\top,\\[2pt]W_n=W_{2,n}W_{1,n}^\top\end{array}\)&Procrustes alignment of the sample and population eigenspaces.\\\addlinespace[0.35em]
$Q_n$&$Q_n=W_nV_\alpha^\top$&Alignment of $\hat{X}_\alpha$ and $X_\alpha$.\\\addlinespace[0.35em]$\Pi^\perp(U_\alpha)$&$\Pi^\perp(U_\alpha)=I_n-U_\alpha U_\alpha^\top$&Projection onto the orthogonal complement of the population
eigenspace. \\[0.5em]
$B_D,R_D$&\(\begin{array}[t]{@{}l@{}}B_D=\alpha T^{-\alpha-1}(T-D),\\[2pt]R_D=D^{-\alpha}-T^{-\alpha}-B_D\end{array}\)&First-order and remainder terms in the expansion of \(D^{-\alpha}\).\\
\bottomrule
\end{tabularx}

\caption{Notation used repeatedly in the proof.}
\label{Tab::CommonNotation}
\end{table}

\subsection{Helper Lemmas}
\label{Sec::HelperLemmas}
\subsubsection{Commonly Used Bounds}
\label{Sec::CommonBounds}
\begin{lemma}
    \label{Lm::CommonBounds}
    Under Condition~\ref{Assump::Standing}, the following bounds hold with high probability.
    
    \begin{subequations}
    \label{Eq::CommonBounds}
    \smallskip
    \noindent\emph{Population bounds}
    \begin{align}
        c\delta_n\leq\min_i t_i\leq\max_i t_i\leq C\delta_n,\qquad\norm{T^{-\alpha}}=O(\delta_n^{-\alpha})\label{Eq::TBound}\\
        \norm{P_{\diag}}=O(\rho_n),\qquad\norm{P}=O(\delta_n),\label{Eq::PBound}\\
        \lambda_{\min}(G_{n,\alpha})\asymp\lambda_{\max}(G_{n,\alpha})\asymp\delta_n^{1-2\alpha},\qquad \norm{G_{n,\alpha}^{-1}}=O(\delta_n^{2\alpha-1}),\label{Eq::GBound}\\
        \min\Lambda_\alpha\asymp \max \Lambda_\alpha\asymp\delta_n^{1-2\alpha},\qquad\norm{\Lambda_\alpha^{-1/2}}=O(\delta_n^{-1/2+\alpha}),\label{Eq::LambdaBound}\\
        \norm{U_\alpha}_{2\to\infty}=O(n^{-1/2}).\label{Eq::URowBound}
        \end{align}
        \smallskip
        \noindent\emph{Degree and adjacency concentration}
        \begin{align}
        \left\lvert d_i-t_i\right\rvert=O_P(\delta_n^{1/2})\text{ for each }i,\qquad\norm{D-T}=O_P(\delta_n^{1/2}\sqrt{\log{n}})\label{Eq::D-TBound}\\
        c\delta_n\leq\min_i d_i\leq\max_i d_i\leq C\delta_n,\qquad\norm{D^{-\alpha}}=O_P(\delta_n^{-\alpha}),\label{Eq::DBound}\\
        \norm{D^{-\alpha}-T^{-\alpha}}= O_P(\delta_n^{-\alpha-1/2}\sqrt{\log{n}}),\label{Eq::D-TAlphaBound}\\
        \norm{E_A}=O_P(\delta_n^{1/2}\sqrt{\log{n}}),\label{Eq::EABound}\\
        \max_i\sum_j |(E_A)_{ij}|=O_P(\delta_n).\label{Eq::EARowBound}
        \end{align}
        \smallskip
        \noindent\emph{Taylor expansion terms}
        \begin{align}
            |(B_D)_{ii}|=O_P(\delta_n^{-\alpha-1/2})\text{ for each }i,\qquad \norm{B_D}=O_P(\delta_n^{-\alpha-1/2}\sqrt{\log{n}}),\label{Eq::BDBound}\\
            |(R_D)_{ii}|=O_P(\delta_n^{-\alpha-1})\text{ for each }i,\qquad\norm{R_D}=O_P(\delta_n^{-\alpha-1}\log{n}),\label{Eq::RDBound}
        \end{align}
        \smallskip
        \noindent\emph{Error bounds}
        \begin{align}
            \norm{E_\alpha}=O_P(\delta_n^{1/2-2\alpha}\sqrt{\log{n}}),\label{Eq::EBound}\\
            \max_i\sum_j|(E_\alpha)_{ij}|=O_P(\delta_n^{1-2\alpha}),\label{Eq::ERowBound}
        \end{align}
    \end{subequations}
\end{lemma}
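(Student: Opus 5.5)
The plan is to prove the bounds of Lemma~\ref{Lm::CommonBounds} in the order listed. Each group will be reduced to the previous ones together with two probabilistic inputs: Bernstein's inequality for sums of independent Bernoulli variables, and a spectral-norm bound for the centered adjacency matrix.

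\emph{Population bounds.} Since $t_i=\rho_n\sum_{j\neq i}\langle\xi_i,\xi_j\rangle$, the law of large numbers gives $t_i/\delta_n\to\langle\xi_i,\mu\rangle$. Uniformity over $i$ follows from Hoeffding's inequality applied to the bounded summands $\langle\xi_i,\xi_j\rangle$ with $\xi_i$ fixed, combined with a union bound; the cost is $O(\sqrt{\log n/n})$. Condition~\ref{Assump::Nonzero} and bounded support then give $c\delta_n\le t_i\le C\delta_n$, which yields $\norm{T^{-\alpha}}=O(\delta_n^{-\alpha})$. The bound $\norm{P_{\diag}}=\rho_n\max_i\norm{\xi_i}^2=O(\rho_n)$ is immediate from bounded support, and $\norm{P}\le\norm{P}_\infty\le C\delta_n$. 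For $G_{n,\alpha}=\rho_n\sum_i \xi_i\xi_i^\top t_i^{-2\alpha}$, I would write $t_i^{-2\alpha}=\delta_n^{-2\alpha}\langle\xi_i,\mu\rangle^{-2\alpha}(1+o(1))$ uniformly. This gives $G_{n,\alpha}=\delta_n^{1-2\alpha}\bigl(\Upsilon_\alpha+o_P(1)\bigr)$ by the law of large numbers, and positive definiteness of $\Upsilon_\alpha$ delivers \eqref{Eq::GBound}. Since $L_\alpha=X_\alpha X_\alpha^\top$ and $G_{n,\alpha}=X_\alpha^\top X_\alpha$ share their nonzero eigenvalues, \eqref{Eq::LambdaBound} follows. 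Finally, $U_\alpha=T^{-\alpha}XV_\alpha\Lambda_\alpha^{-1/2}$, so each row has norm at most $\delta_n^{-\alpha}\rho_n^{1/2}C\cdot\delta_n^{\alpha-1/2}=O(n^{-1/2})$.

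\emph{Degree and adjacency concentration.} Conditional on the latent positions, $d_i-t_i$ is a sum of independent centered Bernoullis with variance at most $C\delta_n$, which gives the pointwise $O_P(\delta_n^{1/2})$ bound. Bernstein's inequality with a union bound gives $\max_i|d_i-t_i|=O_P(\sqrt{\delta_n\log n})$, since $\delta_n\gg\log n$; this is the diagonal operator norm bound. Because $\sqrt{\delta_n\log n}=o(\delta_n)$, we obtain \eqref{Eq::DBound}. A mean value theorem applied to $x\mapsto x^{-\alpha}$ on $[c\delta_n,C\delta_n]$ then yields \eqref{Eq::D-TAlphaBound}. For $\norm{E_A}$ I would invoke a matrix Bernstein inequality (or the Lu--Peng or Bandeira--van Handel bounds), which gives $O_P(\sqrt{\delta_n\log n})$ under $\delta_n\gg\log n$. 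The row-sum bound follows because $\sum_j|(E_A)_{ij}|\le d_i+t_i=O_P(\delta_n)$ uniformly.

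\emph{Taylor terms and error bounds.} The entries of $B_D$ are $\alpha t_i^{-\alpha-1}(t_i-d_i)$, so both bounds follow from the degree concentration above. The entries of $R_D$ are second-order Taylor remainders, $O(\delta_n^{-\alpha-2}(d_i-t_i)^2)$ uniformly, which gives $\delta_n^{-\alpha-1}$ pointwise and $\delta_n^{-\alpha-1}\log n$ in operator norm. For $E_\alpha$ I would decompose
\[
\hat L_\alpha-L_\alpha=(D^{-\alpha}-T^{-\alpha})AD^{-\alpha}+T^{-\alpha}(A-P)D^{-\alpha}+T^{-\alpha}P(D^{-\alpha}-T^{-\alpha}).
\]
Using $\norm{A}\le\norm{P}+\norm{E_A}=O_P(\delta_n)$, each of the three pieces is $O_P(\delta_n^{1/2-2\alpha}\sqrt{\log n})$; the diagonal $P_{\diag}$ contributes only $O(\rho_n\delta_n^{-2\alpha})$. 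For the row sums, every entry of $\hat L_\alpha$ and $L_\alpha$ carries the factor $O_P(\delta_n^{-2\alpha})$ uniformly, and the row sums of $A$ and $P$ are $O_P(\delta_n)$, which gives \eqref{Eq::ERowBound}.

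The main obstacle I expect is the spectral norm bound \eqref{Eq::EABound} in the sparse regime, together with making the uniform-in-$i$ degree concentration rigorous. Both rely on $\delta_n/\log n\to\infty$, which the assumed $(\log n)^2$ rate comfortably exceeds. I would cite an existing matrix concentration result rather than reprove it.
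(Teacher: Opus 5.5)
Your proposal is correct. The population and concentration parts match the paper essentially step for step: the LLN for $t_i/\delta_n$ and for $\delta_n^{2\alpha-1}G_{n,\alpha}$, the shared nonzero spectrum of $L_\alpha$ and $G_{n,\alpha}$, the factorization of $U_\alpha$, scalar and matrix Bernstein, and $\sum_j|(E_A)_{ij}|\le d_i+t_i$. The only small difference there is how you get uniformity in $i$ of $t_i/\delta_n\to\langle\xi_i,\mu\rangle$. You use Hoeffding plus a union bound. The paper writes $t_i/\delta_n=\xi_i^\top\bar\xi-\xi_i^\top\xi_i/n$, so a single vector LLN for $\bar\xi$ together with bounded support gives uniformity directly.

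The real divergence is in the error bounds. The paper expands $D^{-\alpha}$ around $T^{-\alpha}$. It splits $E_\alpha$ into the first-order pieces $T^{-\alpha}E_AT^{-\alpha}$, $B_DPT^{-\alpha}$, $T^{-\alpha}PB_D$, $B_DE_AT^{-\alpha}$, $T^{-\alpha}E_AB_D$, plus a remainder $\mathcal R_E$. It then controls $\mathcal R_E$ with Lemma~\ref{Lm::TaylorBookkeeping}, once in the operator norm and once in the infinity norm.

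Your exact telescoping identity needs only \eqref{Eq::D-TAlphaBound}, \eqref{Eq::DBound}, \eqref{Eq::EABound} and $\norm{A},\norm{P}=O_P(\delta_n)$, with no Taylor remainder at all. Your triangle-inequality proof of \eqref{Eq::ERowBound} correctly recognizes that this bound only reflects the sizes of $\hat L_\alpha$ and $L_\alpha$ separately, so no cancellation is needed.

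For this lemma, your route is shorter and more elementary. The paper's route pays off later: the same decomposition is reused in Lemmas~\ref{lem:EUrow} and~\ref{Lm::UTEU} and in the proof of Theorem~\ref{Thm::Main}. There the first-order terms must be isolated and handled by row-wise concentration or explicit variance computations to improve on crude norm bounds.
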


\begin{proof}
We first establish the population bounds, where probability is with respect to the latent positions. For the remaining bounds, we condition on the latent positions and work on the event that the population bounds hold. Since this event has probability tending to one and the conditional bounds are uniform over this event, the corresponding unconditional bounds hold.

\smallskip
\noindent\emph{Population bounds}\newline
For Equation~\eqref{Eq::TBound}, write 
\begin{align*}
    t_i&=\rho_n\sum_{j\neq i}\xi_i^\top\xi_j\\
    \frac{t_i}{\delta_n}&=\xi_i^\top\left(\frac{1}{n}\sum_{j=1}^n\xi_j\right)-\frac{1}{n}\xi_i^\top\xi_i.
\end{align*}
By the law of large numbers and Condition~\ref{Assump::BoundedSupport}, the right-hand side converges to $\xi_i^\top\mu$ uniformly in $i$. Thus,
\begin{equation}\label{Eq::TDeltaRatio}
    \max_i\left\lvert\frac{t_i}{\delta_n}-\xi_i^\top\mu\right\rvert\overset{p}{\to}0.
\end{equation}
By Conditions~\ref{Assump::BoundedSupport} and~\ref{Assump::Nonzero}, $\xi_i^\top\mu$ is uniformly bounded above and away from zero. Equation~\eqref{Eq::TDeltaRatio} therefore implies the first inequality in Equation~\eqref{Eq::TBound} with high probability. The second statement follows immediately.

For Equation~\eqref{Eq::PBound}, $p_{ii}=X_i^\top X_i=\rho_n\xi_i^\top\xi_i$. By Condition~\ref{Assump::BoundedSupport}, this is bounded above by a constant times $\rho_n$, and the first statement follows. For the second statement, the Gershgorin circle theorem states that the eigenvalues of a symmetric matrix with nonnegative entries are bounded above by its row sums. The bound then follows from Equation~\eqref{Eq::TBound} and the first statement, since the row sums of $P$ are given by $t_i+p_{ii}$. 

For Equation~\eqref{Eq::GBound}, we use the facts that $X_i=\rho_n^{1/2}\xi_i$ and $t_i\asymp\delta_n$ to write 
\begin{align*}
    G_{n,\alpha}=X^\top T^{-2\alpha}X&=\sum_{i=1}^n t_i^{-2\alpha}X_iX_i^\top\\
    &=\rho_n\delta_n^{-2\alpha}\sum_{i=1}^n\frac{\xi_i\xi_i^\top}{(t_i/\delta_n)^{2\alpha}}.
\end{align*}
By Equation~\eqref{Eq::TDeltaRatio},
\[
    \delta_n^{2\alpha-1}G_{n,\alpha}=\frac{1}{n}\sum_{i=1}^n\frac{\xi_i\xi_i^\top}{\langle \xi_i,\mu\rangle^{2\alpha}}+o_P(1).
\]
By Conditions~\ref{Assump::BoundedSupport} and~\ref{Assump::Nonzero}, the right-hand side converges to $\Upsilon_\alpha$, which is a fixed, $r$-dimensional positive definite matrix. The first statement follows, and the second statement follows immediately from the first.  

For Equation~\eqref{Eq::LambdaBound}, note that we want to bound the eigenvalues of $L_\alpha=T^{-\alpha}XX^\top T^{-\alpha}$. Since $G_{n,\alpha}=X^\top T^{-2\alpha}X$, the nonzero eigenvalues of $L_\alpha$ and $G_{n,\alpha}$ are the same. The statements then follow immediately from Equation~\eqref{Eq::GBound}. 

For Equation~\eqref{Eq::URowBound}, write $G_{n,\alpha}=V_\alpha\Lambda_\alpha V_\alpha^\top$ and $U_\alpha=T^{-\alpha}X V_\alpha\Lambda_\alpha^{-1/2}$. It is straightforward to verify $U_\alpha^\top U_\alpha=I_r$ and $L_\alpha U_\alpha=U_\alpha\Lambda_\alpha$. Thus, for each $i$,
\[
    \norm{(U_\alpha)_{i*}}\leq t_i^{-\alpha}\norm{X_i}\norm{\Lambda_\alpha^{-1/2}}.
\]
Using Equations~\eqref{Eq::TBound}, \eqref{Eq::LambdaBound} and Condition~\ref{Assump::Standing},
\[
    \norm{(U_\alpha)_{i*}}=O\left(\delta_n^{-\alpha}\rho_n^{1/2}\delta_n^{-1/2+\alpha}\right)=O(n^{-1/2}).
\]
The statement follows by taking the maximum over $i$.

\smallskip
\noindent\emph{Degree and adjacency concentration}\newline
Throughout the remainder of the proof, we write $C$ for a positive constant independent of $n$, whose values may change from line to line. Additionally, we write $M$ for a sufficiently large positive constant, chosen as needed. For Equation~\eqref{Eq::D-TBound}, note that  $d_i-t_i$ is a sum of mean-zero random variables. Furthermore, the summands are bounded by $1$ and the sum of the variances is bounded by $t_i$. By Equation~\eqref{Eq::TBound}, $\sum_{j\neq i}p_{ij}=t_{i}\leq C\delta_n$. Thus, by Bernstein's inequality (see, for example, \citet{boucheron2013inequalities}), for all $u>0$,
\[
    \mathbb P\left(\left\lvert d_i-t_i\right\rvert>u\right)\leq2\exp\left(-\frac{u^2}{2(C\delta_n+u/3)}\right).
\]
Choosing $u=M\delta_n^{1/2}$, we have, for $n$ sufficiently large,
\begin{align*}
    \mathbb P\left(|d_i-t_i|>M\delta_n^{1/2}\right)&\leq2\exp\left(-\frac{M^2\delta_n}{2(C\delta_n+M\delta_n^{1/2}/3)}\right)\\
    &\leq2\exp\left(-\frac{M^2}{2(C+M/(3\delta_n^{1/2}))}\right)\\
    &\leq2\exp\left(-CM^2\right).
\end{align*}
The first statement follows. The second statement is similar, choosing $u=M\delta_n^{1/2}\sqrt{\log{n}}$ and taking a union bound. 

Equations~\eqref{Eq::DBound} and \eqref{Eq::D-TAlphaBound} follow easily from Equations~\eqref{Eq::TBound} and \eqref{Eq::D-TBound}.

For Equation~\eqref{Eq::EABound}, note that we may write $E_A$ as a sum of independent, symmetric, mean-zero matrices by defining $Y_{ij}=(E_A)_{ij}(e_ie_j^\top+e_je_i^\top)$, where $e_k$ denotes the $k$th standard basis vector in $\mathbb{R}^n$. Then 
\[
    \norm{Y_{ij}}\leq|(E_A)_{ij}|\,\norm{e_ie_j^\top+e_je_i^\top}\leq 1.
\]
Additionally,
\[
    \sum_{i<j}\mathbb E(Y_{ij}^2)=\sum_{i<j}p_{ij}(1-p_{ij})(e_ie_i^\top+e_je_j^\top),
\]
so 
\[ 
    \norm{\sum_{i<j}\mathbb E(Y_{ij}^2)}\leq\max_i\sum_{j\neq i}p_{ij}(1-p_{ij})\leq\max_i\sum_{j\neq i}p_{ij}=\max_i t_i\leq C\delta_n.
\]

Thus, by the matrix Bernstein inequality (see, for example, \citet{tropp2012user}), for all $u>0$,
\[
    \mathbb P\left(\norm{E_A}>u\right)\leq 2n\exp\left(-\frac{u^2}{2(C\delta_n+u/3)}\right).
\]
Choosing $u=M\delta_n^{1/2}\sqrt{\log{n}}$,
\begin{align*}
    \mathbb P\left(\norm{E_A}>M\delta_n^{1/2}\sqrt{\log{n}}\right)&\leq 2n\exp\left(-\frac{M^2\delta_n\log{n}}{2(C\delta_n+M\delta_n^{1/2}\sqrt{\log{n}}/3)}\right)\\
    &\leq 2n\exp\left(-CM^2\log{n}\right).
\end{align*}
The statement follows by choosing $M$ sufficiently large.

For Equation~\eqref{Eq::EARowBound}, we write  
\[
    \sum_j |(E_A)_{ij}|=\sum_{j\neq i} |a_{ij}-p_{ij}|\leq\sum_{j\neq i} a_{ij}+\sum_{j\neq i} p_{ij}=d_i+t_i.
\]
By Equation~\eqref{Eq::TBound}, we have $t_i=O(\delta_n)$ and by Equation~\eqref{Eq::D-TBound}, we have $d_i=t_i+O_P(\delta_n^{1/2}\sqrt{\log{n}})=O_P(\delta_n)$, since $\sqrt{\log{n}}=o(\delta_n^{1/2})$. The statement follows. 

\smallskip
\noindent\emph{Taylor expansion terms}\newline
For Equation~\eqref{Eq::BDBound}, we write $(B_D)_{ii}=\alpha t_i^{-\alpha-1}(t_i-d_i)$ and use Equations~\eqref{Eq::TBound} and \eqref{Eq::D-TBound}.

For Equation~\eqref{Eq::RDBound}, we use Taylor's theorem to write 
\[
    d_i^{-\alpha}=t_i^{-\alpha}+\alpha t_i^{-\alpha-1}(t_i-d_i)+O\left(t_i^{-\alpha-2}(d_i-t_i)^2\right),
\]
then apply Equations~\eqref{Eq::TBound} and \eqref{Eq::D-TBound} to the last term.

\smallskip
\noindent\emph{Error bounds}\newline
For Equation~\eqref{Eq::EBound}, we first write
\[
    E_\alpha=\T E_A\T+B_D P\T+\T P B_D+B_D E_A\T+\T E_AB_D+\mathcal R_E,
\]
where
\begin{align*}
    \mathcal R_E&=R_D(P+E_A)\T+\T(P+E_A)R_D \\
    &\quad+B_D(P+E_A)B_D+B_D(P+E_A)R_D+R_D(P+E_A)B_D+R_D(P+E_A)R_D \\
    &\quad-(\T+B_D+R_D)P_{\diag}(\T+B_D+R_D).
\end{align*}

By Equations~\eqref{Eq::TBound} and
\eqref{Eq::EABound},
\[
    \norm{\T E_A\T}\leq \norm{\T}^2\norm{E_A}=O_P\left(\delta_n^{1/2-2\alpha}\sqrt{\log n}\right).
\]

Each of the remaining terms may be individually bounded. Because this decomposition appears several times throughout the appendix in slightly different forms, we prove a slightly more general form in Lemma~\ref{Lm::TaylorBookkeeping}. Using the notation of the lemma, let $H=P+E_A=A+P_{\diag}$. Since $H$ is symmetric and $\max_i\sum_j\left\lvert h_{ij}\right\rvert=O_P(\delta_n)$, the operator-norm version of Lemma~\ref{Lm::TaylorBookkeeping}, applied with $V=I_n$ gives 
\[
    \norm{B_DH\T}+\norm{\T HB_D}
    =O_P\left(\delta_n^{1/2-2\alpha}\sqrt{\log n}\right)
\]
and
\[
    \norm{\mathcal R_{H,I_n}}
    =O_P(\delta_n^{-2\alpha}\log n)
    =o_P\left(\delta_n^{1/2-2\alpha}\sqrt{\log n}\right).
\]
This bounds the first six terms of $\mathcal{R}_E$. For the last term, use Equations~\eqref{Eq::TBound}, \eqref{Eq::BDBound}, \eqref{Eq::RDBound}, and \eqref{Eq::PBound} to get
\begin{align*}
    \norm{(\T+B_D+R_D)P_{\diag}(\T+B_D+R_D)}&\leq\norm{\T+B_D+R_D}^2\norm{P_{\diag}} \\
    &=O_P(\delta_n^{-2\alpha}\rho_n)=O_P(\delta_n^{1/2-2\alpha}\sqrt{\log{n}}),
\end{align*}
where we have used the fact that $\rho_n\leq 1$ and $\sqrt{\log{n}}=o(\delta_n^{1/2})$.

The proof of Equation~\eqref{Eq::ERowBound} is the same as that of Equation~\eqref{Eq::EBound}, with the matrix infinity norm in place of the operator norm. In particular, for $H=P+E_A=A+P_{\diag}$,
\[
    \norm{H}_\infty=\max_i(d_i+p_{ii})=O_P(\delta_n),
\]
so the infinity-norm version of Lemma~\ref{Lm::TaylorBookkeeping} applies to the first six terms of $\mathcal R_E$ with $V=I_n$.
\end{proof}

\begin{remark}
    For the remainder of the appendix, whenever we establish concentration or remainder bounds, we condition on the latent positions and work on the event on which the population bounds in Lemma~\ref{Lm::CommonBounds} hold. The conditional estimates below are uniform over this event, and since the probability of this event goes to $1$, they imply the corresponding unconditional $O_P$ bounds. We suppress this conditioning from the notation.
\end{remark}

\subsubsection{Taylor Remainder Bound}
\label{Sec::TaylorRemainderBound}
\begin{lemma}
    \label{Lm::TaylorBookkeeping}
    Let
    \[
        (D^{-\alpha}HD^{-\alpha}-T^{-\alpha}HT^{-\alpha})V=B_DHT^{-\alpha}V+T^{-\alpha}HB_DV+\mathcal R_{H,V},
    \] 
    where $H$ is any symmetric random matrix satisfying $\max_i\sum_j\left\lvert h_{ij}\right\rvert=O_P(\delta_n)$, $V$ is any random matrix satisfying $\norm{V}_{2\to\infty}=O_P(v_n)$, and
    \begin{align*}
        \mathcal R_{H,V}&=R_DHT^{-\alpha}V+T^{-\alpha}HR_DV+B_DHB_DV+B_DHR_DV+R_DHB_DV+R_DHR_DV.
    \end{align*}
    The first two terms satisfy
    \[
        \norm{B_DHT^{-\alpha}V}_{2\to\infty}+\norm{T^{-\alpha}HB_DV}_{2\to\infty}=O_P\left(\delta_n^{1/2-2\alpha}\sqrt{\log{n}}\,v_n\right),
    \]
    and the remaining terms satisfy
    \[
        \norm{\mathcal R_{H,V}}_{2\to\infty}=O_P\left(\delta_n^{-2\alpha}\log{n}\,v_n\right).
    \]
    The same bounds hold if every occurrence of the $2\to\infty$ norm, including the assumption on $V$, is replaced by the operator norm or the matrix infinity norm. 
\end{lemma}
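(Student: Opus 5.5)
The plan is to bound each of the eight terms separately, using only submultiplicativity together with the diagonal structure of $B_D$, $R_D$ and $T^{-\alpha}$. The one fact doing the work is that left multiplication by a diagonal matrix $\Delta$ satisfies $\norm{\Delta M}_{2\to\infty}\le \norm{\Delta}\,\norm{M}_{2\to\infty}$. Right multiplication by a general matrix satisfies $\norm{MN}_{2\to\infty}\le\norm{M}_\infty\norm{N}_{2\to\infty}$, since each row of $MN$ is a combination of rows of $N$ with coefficients whose absolute sum is a row sum of $|M|$. The analogous statements for the operator norm and the matrix infinity norm hold by ordinary submultiplicativity, because diagonal matrices have the same operator and infinity norms. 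So the same chain of inequalities proves all three versions at once.

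For a generic term $\Delta_1 H \Delta_2 V$, with $\Delta_1,\Delta_2\in\{T^{-\alpha},B_D,R_D\}$ diagonal, I will write
\[
\norm{\Delta_1H\Delta_2V}_{2\to\infty}\le\norm{\Delta_1}\,\norm{H}_\infty\,\norm{\Delta_2}\,\norm{V}_{2\to\infty}.
\]
Here $\norm{\Delta_2V}_{2\to\infty}\le\norm{\Delta_2}\norm{V}_{2\to\infty}$, and $H$ is handled by its maximal absolute row sum $O_P(\delta_n)$. I then plug in the bounds from Lemma~\ref{Lm::CommonBounds}: $\norm{T^{-\alpha}}=O(\delta_n^{-\alpha})$ from Equation~\eqref{Eq::TBound}, $\norm{B_D}=O_P(\delta_n^{-\alpha-1/2}\sqrt{\log n})$ from Equation~\eqref{Eq::BDBound}, and $\norm{R_D}=O_P(\delta_n^{-\alpha-1}\log n)$ from Equation~\eqref{Eq::RDBound}. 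For the operator-norm version, the bound on $\norm{H}$ comes from the row-sum hypothesis, since for symmetric $H$ we have $\norm{H}\le\norm{H}_\infty$ (Gershgorin / Schur test).

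The first-order terms $B_DHT^{-\alpha}V$ and $T^{-\alpha}HB_DV$ then give
\[
\delta_n^{-\alpha-1/2}\sqrt{\log n}\cdot\delta_n\cdot\delta_n^{-\alpha}\,v_n=\delta_n^{1/2-2\alpha}\sqrt{\log n}\,v_n,
\]
which is the claimed bound. For the remainder, the products are as follows:
\begin{itemize}
\item $R_DHT^{-\alpha}V$ and $T^{-\alpha}HR_DV$ give $\delta_n^{-\alpha-1}\log n\cdot\delta_n\cdot\delta_n^{-\alpha}v_n=\delta_n^{-2\alpha}\log n\,v_n$.
\item $B_DHB_DV$ gives $\delta_n^{-2\alpha-1}\log n\cdot\delta_n\, v_n=\delta_n^{-2\alpha}\log n\,v_n$.
\item The mixed terms $B_DHR_DV$ and $R_DHB_DV$, and the term $R_DHR_DV$, carry extra factors $\delta_n^{-1/2}\sqrt{\log n}$ or $\delta_n^{-1}\log n$, both $o(1)$ under Condition~\ref{Assump::Sparsity}, so they are smaller still.
\end{itemize}
Summing the six terms gives $\norm{\mathcal R_{H,V}}_{2\to\infty}=O_P(\delta_n^{-2\alpha}\log n\,v_n)$.

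The only real subtlety is the uniform (operator-norm) bounds on $B_D$ and $R_D$. For $R_D$, the Taylor remainder must be controlled simultaneously over all $i$, which requires $\max_i|d_i-t_i|=O_P(\delta_n^{1/2}\sqrt{\log n})$. This is already in Equation~\eqref{Eq::D-TBound}, together with $\min_i d_i\asymp\delta_n$ so that the intermediate point in the mean value form stays of order $\delta_n$. I will cite these bounds rather than reprove them. I will also note that $V$ and $H$ may be random and dependent on $A$, which is harmless because only deterministic norm inequalities are used.
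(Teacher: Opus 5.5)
Your proposal is correct and follows essentially the same argument as the paper. Both bound each term $M_LHM_RV$ by $\norm{M_L}\,\bigl(\max_i\sum_j|h_{ij}|\bigr)\,\norm{M_R}\,\norm{V}_{2\to\infty}$ using the diagonal structure of $T^{-\alpha}$, $B_D$ and $R_D$, plug in the bounds from Lemma~\ref{Lm::CommonBounds}, and use $\norm{H}\le\max_i\sum_j|h_{ij}|$ for symmetric $H$ to get the operator-norm version. Your explicit observation that the mixed $B_D/R_D$ terms carry an extra $o(1)$ factor just spells out the paper's claim that $\norm{M_L}\norm{M_R}=O_P(\delta_n^{-2\alpha-1}\log n)$ holds in every remainder term.
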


\begin{proof}
    We begin with the two terms $ \norm{B_DHT^{-\alpha}V}_{2\to\infty}+\norm{T^{-\alpha}HB_DV}_{2\to\infty}$. Write
    \[
        M_L,M_R\in\{T^{-\alpha},B_D,R_D\}.
    \]
For each row $i$,
\begin{align*}
    \norm{(M_LHM_RV)_{i*}}&\leq|(M_L)_{ii}|\sum_{j=1}^n|h_{ij}||(M_R)_{jj}|\norm{V_{j*}}  \\
    &\leq\norm{M_L}\left(\max_i\sum_{j=1}^n |h_{ij}|\right)\norm{M_R}\norm{V}_{2\to\infty}.
\end{align*}
By assumption,
\[
    \max_i\sum_{j=1}^n |h_{ij}|=O_P(\delta_n)\qquad\text{and}\qquad\norm{V}_{2\to\infty}=O_P(v_n).
\]

By Equations~\eqref{Eq::BDBound} and \eqref{Eq::TBound}, if one of $M_L,M_R$ is $B_D$ and the other is $T^{-\alpha}$, then
\begin{align*}
    \norm{M_LHM_RV}_{2\to\infty}&=O_P\left(\delta_n^{-\alpha-1/2}\sqrt{\log{n}}\cdot \delta_n\cdot \delta_n^{-\alpha}\cdot v_n\right) \\
    &=O_P\left(\delta_n^{1/2-2\alpha}\sqrt{\log{n}} v_n\right).
\end{align*}
This proves the statement on the first two terms. 

The remaining terms contain $R_D$ or at least two factors from $\{B_D,R_D\}$. In each case, $\norm{M_L}\norm{M_R} =O_P(\delta_n^{-2\alpha-1}\log n)$. Thus,
\begin{align*}
    \norm{M_LHM_RV}_{2\to\infty}&=O_P\left(\delta_n^{-2\alpha-1}\log{n}\cdot \delta_n\cdot v_n\right) \\
    &=O_P\left(\delta_n^{-2\alpha}\log{n} v_n\right).
\end{align*}
This proves the statement for the remaining terms.

Since $H$ is symmetric,
\[
    \norm{H}\leq \max_i\sum_j |h_{ij}|=O_P(\delta_n),
\]
so the same argument, together with submultiplicativity, applies with every occurrence of the $2\to\infty$ norm replaced by the appropriate norm.
\end{proof}

\subsection{Derivation of the Spectral Expansion and Proof of Negligibility}
\label{Sec::SpectralBounds}
\begin{lemma}
    \label{Lm::SpectralExpansion}
    The following identity holds:
    \[
        \hat{U}_\alpha\hat{\Lambda}_\alpha^{1/2}W_nV_\alpha^\top-U_\alpha\Lambda_\alpha^{1/2}V_\alpha^\top=E_\alpha U_\alpha\Lambda_\alpha^{-1/2}V_\alpha^\top+\mathcal{R}_{\mathrm{sp},\alpha}V_\alpha^\top,
    \]
    where
    \begin{align*}
        \mathcal{R}_{\mathrm{sp},\alpha}&=U_\alpha\left(W_n^\top\hat{\Lambda}_\alpha^{1/2}W_n-\Lambda_\alpha^{1/2}-U_\alpha^\top E_\alpha U_\alpha \Lambda_\alpha^{-1/2}+(U_\alpha^\top\hat U_\alpha W_n-I_r)W_n^\top\hat{\Lambda}_\alpha^{1/2}W_n\right)\\
        &\qquad+\Pi^\perp(U_\alpha)E_\alpha\left(\hat{U}_\alpha\hat{\Lambda}_\alpha^{-1/2}W_n-U_\alpha\Lambda_\alpha^{-1/2}\right).
    \end{align*}
\end{lemma}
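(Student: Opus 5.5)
The identity is purely algebraic, so the plan is to verify it by direct manipulation, splitting $\hat U_\alpha\hat\Lambda_\alpha^{1/2}W_n$ into its components inside and orthogonal to the population eigenspace. Since every term on both sides is right-multiplied by $V_\alpha^\top$, I will drop this common factor and prove
\[
\hat U_\alpha\hat\Lambda_\alpha^{1/2}W_n-U_\alpha\Lambda_\alpha^{1/2}=E_\alpha U_\alpha\Lambda_\alpha^{-1/2}+\mathcal R_{\mathrm{sp},\alpha}.
\]
The only non-algebraic input is that $\hat\Lambda_\alpha$ is invertible. This holds because, by Definition~\ref{Def::DALSE}, its diagonal entries are the $R=r$ largest positive eigenvalues of $\hat L_\alpha$, and these exist on the high-probability event where $\hat L_\alpha$ is well defined.

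First I decompose with $I_n=U_\alpha U_\alpha^\top+\Pi^\perp(U_\alpha)$:
\[
\hat U_\alpha\hat\Lambda_\alpha^{1/2}W_n=U_\alpha U_\alpha^\top\hat U_\alpha\hat\Lambda_\alpha^{1/2}W_n+\Pi^\perp(U_\alpha)\hat U_\alpha\hat\Lambda_\alpha^{1/2}W_n.
\]

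For the orthogonal part, I use the eigen-equation $\hat L_\alpha\hat U_\alpha=\hat U_\alpha\hat\Lambda_\alpha$ to write $\hat U_\alpha\hat\Lambda_\alpha^{1/2}=\hat L_\alpha\hat U_\alpha\hat\Lambda_\alpha^{-1/2}$. Substituting $\hat L_\alpha=L_\alpha+E_\alpha$ and noting $\Pi^\perp(U_\alpha)L_\alpha=0$, since $L_\alpha=U_\alpha\Lambda_\alpha U_\alpha^\top$, gives
\[
\Pi^\perp(U_\alpha)\hat U_\alpha\hat\Lambda_\alpha^{1/2}W_n=\Pi^\perp(U_\alpha)E_\alpha\hat U_\alpha\hat\Lambda_\alpha^{-1/2}W_n.
\]
I then add and subtract $\Pi^\perp(U_\alpha)E_\alpha U_\alpha\Lambda_\alpha^{-1/2}$. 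This produces the second line of $\mathcal R_{\mathrm{sp},\alpha}$, plus the leftover term
\[
\Pi^\perp(U_\alpha)E_\alpha U_\alpha\Lambda_\alpha^{-1/2}=E_\alpha U_\alpha\Lambda_\alpha^{-1/2}-U_\alpha U_\alpha^\top E_\alpha U_\alpha\Lambda_\alpha^{-1/2}.
\]
The first piece of this leftover is the claimed leading term.

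For the parallel part, I insert $W_nW_n^\top=I_r$, using that $W_n$ is orthogonal as a product of orthogonal factors from the singular value decomposition. This gives
\[
U_\alpha^\top\hat U_\alpha\hat\Lambda_\alpha^{1/2}W_n=(U_\alpha^\top\hat U_\alpha W_n-I_r)W_n^\top\hat\Lambda_\alpha^{1/2}W_n+W_n^\top\hat\Lambda_\alpha^{1/2}W_n.
\]
Subtracting $U_\alpha\Lambda_\alpha^{1/2}$ and collecting it with the term $-U_\alpha U_\alpha^\top E_\alpha U_\alpha\Lambda_\alpha^{-1/2}$ from the previous step yields exactly the first line of $\mathcal R_{\mathrm{sp},\alpha}$.

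I expect no genuine obstacle here, since each step is an exact identity. The only points needing care are the invertibility of $\hat\Lambda_\alpha$ and the bookkeeping of signs so that the collected terms match $\mathcal R_{\mathrm{sp},\alpha}$. The substantive work, namely showing that $n^{\alpha+1/2}\rho_n^\alpha\|\mathcal R_{\mathrm{sp},\alpha}\|_{2\to\infty}\to0$, is deferred to subsequent lemmas. The identity is also what connects to Equation~\eqref{Eq::SpecRemainder}: there the leading term appears as $E_\alpha T^{-\alpha}XG_{n,\alpha}^{-1}$, and this equals $E_\alpha U_\alpha\Lambda_\alpha^{-1/2}V_\alpha^\top$ because $U_\alpha=T^{-\alpha}XV_\alpha\Lambda_\alpha^{-1/2}$ and $G_{n,\alpha}^{-1}=V_\alpha\Lambda_\alpha^{-1}V_\alpha^\top$.
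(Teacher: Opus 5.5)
Your proposal is correct and follows the paper's proof essentially step for step. It uses the same splitting via $U_\alpha U_\alpha^\top+\Pi^\perp(U_\alpha)$, the same rewriting $\hat U_\alpha\hat\Lambda_\alpha^{1/2}=\hat L_\alpha\hat U_\alpha\hat\Lambda_\alpha^{-1/2}$ with $\Pi^\perp(U_\alpha)L_\alpha=0$, the same insertion of $W_nW_n^\top=I_r$, and the same add-and-subtract of $U_\alpha\Lambda_\alpha^{-1/2}$. One small imprecision, which the paper also leaves implicit: having $r$ positive eigenvalues of $\hat L_\alpha$, and hence an invertible $\hat\Lambda_\alpha$, does not follow merely from $\hat L_\alpha$ being well defined; it follows from Weyl's inequality together with $\norm{E_\alpha}=o_P(\delta_n^{1-2\alpha})$.
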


\begin{proof}
Using $\Pi^\perp(U_\alpha)=I_n-U_\alpha U_\alpha^\top$,
\begin{align*}
    \hat{U}_\alpha\hat{\Lambda}_\alpha^{1/2}W_n-U_\alpha\Lambda_\alpha^{1/2}&=U_\alpha U_\alpha^\top\left(\hat{U}_\alpha\hat{\Lambda}_\alpha^{1/2}W_n-U_\alpha\Lambda_\alpha^{1/2}\right)+\Pi^\perp(U_\alpha)\left(\hat{U}_\alpha\hat{\Lambda}_\alpha^{1/2}W_n-U_\alpha\Lambda_\alpha^{1/2}\right).
\end{align*}

Note that $\Pi^\perp(U_\alpha) U_\alpha=0$. Since $\hat{L}_\alpha\hat{U}_\alpha=\hat{U}_\alpha\hat{\Lambda}_\alpha$, we have $\hat{U}_\alpha\hat{\Lambda}_\alpha^{1/2}W_n=\hat L_\alpha \hat U_\alpha\hat{\Lambda}_\alpha^{-1/2}W_n$. Thus,
\begin{align*}
    \hat{U}_\alpha\hat{\Lambda}_\alpha^{1/2}W_n-U_\alpha\Lambda_\alpha^{1/2}&=U_\alpha U_\alpha^\top\left(\hat{U}_\alpha\hat{\Lambda}_\alpha^{1/2}W_n-U_\alpha\Lambda_\alpha^{1/2}\right)+\Pi^\perp(U_\alpha)\hat{U}_\alpha\hat{\Lambda}_\alpha^{1/2}W_n\\
    &=U_\alpha U_\alpha^\top\left(\hat{U}_\alpha\hat{\Lambda}_\alpha^{1/2}W_n-U_\alpha\Lambda_\alpha^{1/2}\right)+\Pi^\perp(U_\alpha) \hat{L}_\alpha\hat U_\alpha\hat{\Lambda}_\alpha^{-1/2}W_n\\
    &=U_\alpha U_\alpha^\top\left(\hat{U}_\alpha\hat{\Lambda}_\alpha^{1/2}W_n-U_\alpha\Lambda_\alpha^{1/2}\right)+\Pi^\perp(U_\alpha) (L_\alpha+E_\alpha)\hat U_\alpha\hat{\Lambda}_\alpha^{-1/2}W_n\\
    &=U_\alpha U_\alpha^\top\left(\hat{U}_\alpha\hat{\Lambda}_\alpha^{1/2}W_n-U_\alpha\Lambda_\alpha^{1/2}\right)+\Pi^\perp(U_\alpha) E_\alpha\hat U_\alpha\hat{\Lambda}_\alpha^{-1/2}W_n,
\end{align*}
where in the last step we have used the fact that $\Pi^\perp(U_\alpha) L_\alpha=\Pi^\perp(U_\alpha) U_\alpha\Lambda_\alpha U_\alpha^\top=0$. 

For the first term,
\begin{align}
    U_\alpha U_\alpha^\top\left(\hat{U}_\alpha\hat{\Lambda}_\alpha^{1/2}W_n-U_\alpha\Lambda_\alpha^{1/2}\right)&=U_\alpha\left(U_\alpha^\top\hat U_\alpha\hat{\Lambda}_\alpha^{1/2}W_n-\Lambda_\alpha^{1/2}\right)\nonumber\\
    &=U_\alpha\left(U_\alpha^\top\hat U_\alpha W_nW_n^\top \hat{\Lambda}_\alpha^{1/2}W_n-\Lambda_\alpha^{1/2}\right)\nonumber\\
    &=U_\alpha\left(W_n^\top \hat{\Lambda}_\alpha^{1/2}W_n
    -\Lambda_\alpha^{1/2}+(U_\alpha^\top\hat U_\alpha W_n-I_r)W_n^\top \hat{\Lambda}_\alpha^{1/2}W_n\right)\label{Eq::SpecTerm1}.
\end{align}

For the second term, adding and subtracting $U_\alpha\Lambda_\alpha^{-1/2}$, we get
\begin{align}
    \Pi^\perp(U_\alpha)&E_\alpha\hat U_\alpha\hat{\Lambda}_\alpha^{-1/2}W_n\nonumber\\
    &\qquad=\Pi^\perp(U_\alpha) E_\alpha U_\alpha\Lambda_\alpha^{-1/2}+\Pi^\perp(U_\alpha) E_\alpha\left(\hat U_\alpha\hat{\Lambda}_\alpha^{-1/2}W_n-U_\alpha\Lambda_\alpha^{-1/2}\right)\nonumber\\
    &\qquad=E_\alpha U_\alpha\Lambda_\alpha^{-1/2}-U_\alpha U_\alpha^\top E_\alpha U_\alpha\Lambda_\alpha^{-1/2}+\Pi^\perp(U_\alpha) E_\alpha\left(\hat U_\alpha\hat{\Lambda}_\alpha^{-1/2}W_n-U_\alpha\Lambda_\alpha^{-1/2}\right)\label{Eq::SpecTerm2}.
\end{align}

Combining Equations~\eqref{Eq::SpecTerm1} and \eqref{Eq::SpecTerm2} gives 
\begin{align*}
    \hat{U}_\alpha\hat{\Lambda}_\alpha^{1/2}W_n&-U_\alpha\Lambda_\alpha^{1/2}=E_\alpha U_\alpha\Lambda_\alpha^{-1/2} -U_\alpha U_\alpha^\top E_\alpha U_\alpha\Lambda_\alpha^{-1/2}+\Pi^\perp(U_\alpha) E_\alpha\left(\hat U_\alpha\hat{\Lambda}_\alpha^{-1/2}W_n-U_\alpha\Lambda_\alpha^{-1/2}\right)\\
    &\quad+U_\alpha\left(W_n^\top \hat{\Lambda}_\alpha^{1/2}W_n-\Lambda_\alpha^{1/2}+(U_\alpha^\top\hat U_\alpha W_n-I_r)W_n^\top \hat{\Lambda}_\alpha^{1/2}W_n\right),
\end{align*}

and factoring the second term finally gives
\begin{align*}
    \hat{U}_\alpha\hat{\Lambda}_\alpha^{1/2}W_n&-U_\alpha\Lambda_\alpha^{1/2}=E_\alpha U_\alpha\Lambda_\alpha^{-1/2}\\
    &\quad+U_\alpha\left(W_n^\top\hat\Lambda_\alpha^{1/2}W_n-\Lambda_\alpha^{1/2}-U_\alpha^\top E_\alpha U_\alpha\Lambda_\alpha^{-1/2}+(U_\alpha^\top\hat U_\alpha W_n-I_r)W_n^\top\hat\Lambda_\alpha^{1/2}W_n\right)\\
    &\quad+\Pi^\perp(U_\alpha) E_\alpha\left(\hat U_\alpha\hat{\Lambda}_\alpha^{-1/2}W_n-U_\alpha\Lambda_\alpha^{-1/2}\right).
\end{align*}
Multiplication on the right by $V_{\alpha}^\top$ completes the proof. 
\end{proof}

\begin{lemma}
    \label{lem:EUrow}
    Under Condition~\ref{Assump::Standing},
    \[
        \norm{E_\alpha U_\alpha}_{2\to\infty}=O_P\left(n^{-1/2}\delta_n^{1/2-2\alpha}\sqrt{\log{n}}\right)=O_P\left(\rho_n^{1/2}\delta_n^{-2\alpha}\sqrt{\log{n}}\right).
    \]
\end{lemma}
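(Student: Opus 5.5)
The plan is to expand $E_\alpha U_\alpha$ with the same Taylor decomposition used to prove Equation~\eqref{Eq::EBound}, but with $V=U_\alpha$. Throughout we use $\norm{U_\alpha}_{2\to\infty}=O(n^{-1/2})$ from Equation~\eqref{Eq::URowBound}. Writing $H=P+E_A=A+P_{\diag}$, we have
\[
    E_\alpha U_\alpha=\T E_A\T U_\alpha+\left(D^{-\alpha}HD^{-\alpha}-\T H\T\right)U_\alpha-D^{-\alpha}P_{\diag}D^{-\alpha}U_\alpha .
\]
The three terms will be bounded separately in the $2\to\infty$ norm.

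\emph{The Taylor term.} Apply Lemma~\ref{Lm::TaylorBookkeeping} with $V=U_\alpha$ and $v_n=n^{-1/2}$. Since $\norm{H}_\infty=O_P(\delta_n)$, the lemma immediately gives a bound of
\[
    O_P\left(\delta_n^{1/2-2\alpha}\sqrt{\log n}\,n^{-1/2}\right)+O_P\left(\delta_n^{-2\alpha}\log n\,n^{-1/2}\right).
\]
The second piece is of smaller order because $\sqrt{\log n}=o(\delta_n^{1/2})$.

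\emph{The diagonal term.} This term is bounded directly:
\[
    \norm{D^{-\alpha}P_{\diag}D^{-\alpha}U_\alpha}_{2\to\infty}=O_P\left(\delta_n^{-2\alpha}\rho_n n^{-1/2}\right).
\]
This is negligible for the same reason, since $\rho_n\le1$.

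\emph{The main term, and the main obstacle.} The only genuine work is $\T E_A\T U_\alpha$. Its $i$th row is $t_i^{-\alpha}\sum_{j\neq i}(a_{ij}-p_{ij})t_j^{-\alpha}(U_\alpha)_{j*}$. Conditional on the latent positions, this is a sum of independent mean-zero vectors in $\mathbb{R}^r$, because $T$ and $U_\alpha$ are deterministic given the $\xi$'s. Each summand has norm at most $C\delta_n^{-\alpha}n^{-1/2}$, and the total variance is at most $\sum_j p_{ij}\,\delta_n^{-2\alpha}n^{-1}\le C\delta_n^{1-2\alpha}n^{-1}$. We apply Bernstein's inequality coordinatewise (or vector Bernstein, since $r$ is fixed) at level $u=M\delta_n^{1/2-\alpha}n^{-1/2}\sqrt{\log n}$. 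The variance part of the exponent is then of order $M^2\log n$. The boundedness part is $u\,\delta_n^{-\alpha}n^{-1/2}$, which is dominated by the variance because $\sqrt{\log n}\le\delta_n^{1/2}$. A union bound over the $n$ rows therefore succeeds for large $M$. Multiplying by $t_i^{-\alpha}=O(\delta_n^{-\alpha})$ gives $O_P(n^{-1/2}\delta_n^{1/2-2\alpha}\sqrt{\log n})$ uniformly in $i$. The key point is that we must not use the crude bound $\norm{E_A}\,\norm{U_\alpha}$, which loses a factor of $\sqrt n$ when passing to rows. Row-wise concentration that exploits the delocalization of $U_\alpha$ is what is needed.

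Combining the three bounds gives the first expression in the statement. The second expression follows from the identity $n^{-1/2}\delta_n^{1/2}=\rho_n^{1/2}$.
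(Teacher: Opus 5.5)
Your proof is correct and follows essentially the same route as the paper. Both use the same Taylor decomposition of $E_\alpha$ with $H=A+P_{\diag}$, a direct bound on the $D^{-\alpha}P_{\diag}D^{-\alpha}U_\alpha$ term, and coordinatewise Bernstein with a union bound over rows (exploiting $\norm{U_\alpha}_{2\to\infty}=O(n^{-1/2})$) for $T^{-\alpha}E_AT^{-\alpha}U_\alpha$. The only difference is cosmetic: you push the full first-order terms $B_DHT^{-\alpha}U_\alpha$ and $T^{-\alpha}HB_DU_\alpha$ through Lemma~\ref{Lm::TaylorBookkeeping} in one step, whereas the paper splits off the $P$-parts and bounds them separately (e.g., via $PT^{-\alpha}U_\alpha=T^{\alpha}U_\alpha\Lambda_\alpha$), arriving at the same rate.
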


\begin{proof}
We first write
\[
    E_\alpha U_\alpha=\T E_A\T U_\alpha+B_D P\T U_\alpha+\T P B_DU_\alpha+B_D E_A\T U_\alpha+\T E_AB_D U_\alpha+\mathcal R_E U_\alpha,
\]
where
\begin{align*}
\mathcal R_E&=R_D(P+E_A)\T+\T(P+E_A)R_D \\
&\quad+B_D(P+E_A)B_D+B_D(P+E_A)R_D+R_D(P+E_A)B_D+R_D(P+E_A)R_D \\
&\quad-(\T+B_D+R_D)P_{\diag}(\T+B_D+R_D).
\end{align*}
For the first term, conditional on the latent positions, the $i$th row is a weighted sum of independent mean-zero vectors,
\[
    e_i^\top \T E_A\T U_\alpha=t_i^{-\alpha}\sum_j (E_A)_{ij}t_j^{-\alpha}(U_\alpha)_{j*}.
\]
The weights satisfy
\[
    \max_{i,j}\norm{t_i^{-\alpha}t_j^{-\alpha}(U_\alpha)_{j*}}\leq Cn^{-1/2}\delta_n^{-2\alpha}
\]
and
\[
    \max_i t_i^{-2\alpha}\sum_j p_{ij}t_j^{-2\alpha}\norm{(U_\alpha)_{j*}}^2\leq Cn^{-1}\delta_n^{1-4\alpha}.
\]
Applying Bernstein's inequality (see, e.g., \citet[Theorem~2.8.4]{vershynin2018high}) coordinatewise and taking a union bound over $i$ and the fixed number of coordinates gives
\begin{align*}
    \norm{\T E_A\T U_\alpha}_{2\to\infty}&=O_P\left(n^{-1/2}\delta_n^{1/2-2\alpha}\sqrt{\log{n}}+n^{-1/2}\delta_n^{-2\alpha}\log n\right)\\
    &=O_P\left(n^{-1/2}\delta_n^{1/2-2\alpha}\sqrt{\log{n}}\right),
\end{align*}
where the second equality uses the assumption that $\delta_n=\omega((\log n)^2)$. For the second term, since $L_\alpha U_\alpha=U_\alpha\Lambda_\alpha$ and $L_\alpha=\T P\T$, $P\T U_\alpha=T^\alpha U_\alpha\Lambda_\alpha$. Thus, we may write
\[
    \norm{(P\T U_\alpha)_{i*}}\leq C\delta_n^\alpha n^{-1/2}\delta_n^{1-2\alpha}=Cn^{-1/2}\delta_n^{1-\alpha}.
\]
Using Equation~\eqref{Eq::BDBound},
\[
    \norm{B_D P\T U_\alpha}_{2\to\infty}\leq\norm{B_D}\max_i\norm{(P\T U_\alpha)_{i*}}=O_P\left(n^{-1/2}\delta_n^{1/2-2\alpha}\sqrt{\log{n}}\right).
\]
Similarly, for the third term,
\begin{align*}
    \norm{(\T PB_DU_\alpha)_{i*}}&\leq C\delta_n^{-\alpha}\sum_j p_{ij}\norm{B_D}\norm{(U_\alpha)_{j*}}\\
    &\leq C\delta_n^{-\alpha}\cdot\delta_n\cdot\delta_n^{-\alpha-1/2}\sqrt{\log{n}}\cdot n^{-1/2}=O_P\left(n^{-1/2}\delta_n^{1/2-2\alpha}\sqrt{\log{n}}\right).
\end{align*}
For the fourth and fifth terms, using Equation~\eqref{Eq::EARowBound} gives
\begin{align*}
    \norm{B_DE_A\T U_\alpha}_{2\to\infty}+\norm{\T E_AB_DU_\alpha}_{2\to\infty}&\leq2\norm{B_D}\left(\max_i\sum_j|(E_A)_{ij}|\right)\norm{\T}\norm{U_\alpha}_{2\to\infty}\\
    &=O_P\left(n^{-1/2}\delta_n^{1/2-2\alpha}\sqrt{\log n}\right).
\end{align*}
For the final term involving $\mathcal{R}_E$, let $H=P+E_A=A+P_{\diag}$, and note that the first six terms are then $\mathcal R_{H,U_\alpha}$ from Lemma~\ref{Lm::TaylorBookkeeping}. 
By Equations~\eqref{Eq::DBound} and \eqref{Eq::PBound}, we have
\[
    \max_i\sum_j|h_{ij}|=\max_i(d_i+p_{ii})=O_P(\delta_n).
\]
Lemma~\ref{Lm::TaylorBookkeeping} and Equation~\eqref{Eq::URowBound} then give
\[
    \norm{\mathcal R_{H,U_\alpha}}_{2\to\infty}=O_P\left(n^{-1/2}\delta_n^{-2\alpha}\log n\right)=o_P\left(n^{-1/2}\delta_n^{1/2-2\alpha}\sqrt{\log n}\right).
\]

Finally, since $\T+B_D+R_D=D^{-\alpha}$, Equations~\eqref{Eq::DBound}, \eqref{Eq::PBound}, and \eqref{Eq::URowBound} give
\begin{align*}
    \norm{D^{-\alpha}P_{\diag}D^{-\alpha}U_\alpha}_{2\to\infty}
    &\leq\norm{D^{-\alpha}}^2\norm{P_{\diag}}\norm{U_\alpha}_{2\to\infty}\\
    &=O_P\left(n^{-1/2}\rho_n\delta_n^{-2\alpha}\right)\\
    &=o_P\left(n^{-1/2}\delta_n^{1/2-2\alpha}\sqrt{\log n}\right).
\end{align*}
Combining the preceding bounds proves the result.
\end{proof}

\begin{lemma}
    \label{Lm::UTEU}
    Under Condition~\ref{Assump::Standing},
    \[
        \norm{U_\alpha^\top E_\alpha U_\alpha}=O_P\left(\delta_n^{-2\alpha}\log{n}\right).
    \]
\end{lemma}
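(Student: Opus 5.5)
We use the same decomposition of $E_\alpha$ as in the proof of Lemma~\ref{lem:EUrow}:
\[
U_\alpha^\top E_\alpha U_\alpha=U_\alpha^\top\T E_A\T U_\alpha+U_\alpha^\top(B_DP\T+\T PB_D)U_\alpha+U_\alpha^\top(B_DE_A\T+\T E_AB_D)U_\alpha+U_\alpha^\top\mathcal R_EU_\alpha .
\]
We bound each of the four pieces separately. Crude operator-norm bounds (Equation~\eqref{Eq::EBound}) would only give $\delta_n^{1/2-2\alpha}\sqrt{\log n}$, which is too large. The point is to exploit the projection onto the $r$-dimensional space $U_\alpha$.

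\emph{Leading noise term.} Conditional on the latent positions, each of the $r^2$ entries of $U_\alpha^\top\T E_A\T U_\alpha$ is a sum over $i<j$ of independent mean-zero terms $(E_A)_{ij}t_i^{-\alpha}t_j^{-\alpha}\big[(U_\alpha)_{ik}(U_\alpha)_{j\ell}+(U_\alpha)_{jk}(U_\alpha)_{i\ell}\big]$. By Equations~\eqref{Eq::TBound} and~\eqref{Eq::URowBound}, each summand is bounded by $Cn^{-1}\delta_n^{-2\alpha}$. The variance is at most $C\delta_n^{-4\alpha}\sum_{i,j}p_{ij}(U_\alpha)_{ik}^2(U_\alpha)_{j\ell}^2\le C\delta_n^{-4\alpha}\rho_n$, using $\|U_\alpha\|_F^2=r$. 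Bernstein's inequality then gives $O_P(\delta_n^{-2\alpha}\rho_n^{1/2}+n^{-1}\delta_n^{-2\alpha})=O_P(\delta_n^{-2\alpha})$, which is comfortably within the target.

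\emph{First-order degree terms.} Write $P\T U_\alpha=T^\alpha U_\alpha\Lambda_\alpha$. Then
\[
U_\alpha^\top B_DP\T U_\alpha=\alpha\,U_\alpha^\top T^{-1}(T-D)U_\alpha\Lambda_\alpha .
\]
The diagonal matrix $T^{-1}(T-D)$ is what makes this term small. The entries of $U_\alpha^\top T^{-1}(T-D)U_\alpha$ are $\sum_i t_i^{-1}(t_i-d_i)(U_\alpha)_{ik}(U_\alpha)_{i\ell}$. Since $t_i-d_i=-\sum_j(E_A)_{ij}$, this is a linear form in the independent entries $(E_A)_{ij}$, $i<j$, with coefficients of size $O(\delta_n^{-1}n^{-1})$. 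Its variance is bounded by $C\delta_n^{-2}\sum_{i,j}p_{ij}n^{-2}=O(\delta_n^{-1}n^{-1})$. Bernstein therefore gives $O_P(\delta_n^{-1/2}n^{-1/2})$. Multiplying by $\|\Lambda_\alpha\|=O(\delta_n^{1-2\alpha})$ yields $O_P(\delta_n^{1/2-2\alpha}n^{-1/2})=O_P(\rho_n^{1/2}\delta_n^{-2\alpha})$, which is fine. The transpose term is identical.

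\emph{Mixed terms and remainder.} For $U_\alpha^\top B_DE_A\T U_\alpha$ we have $\|B_DU_\alpha\|\le\|B_D\|=O_P(\delta_n^{-\alpha-1/2}\sqrt{\log n})$ from Equation~\eqref{Eq::BDBound}. Lemma~\ref{lem:EUrow}-type arguments are not needed here; we only use $\|E_A\T U_\alpha\|\le\|E_A\|\,\|\T\|=O_P(\delta_n^{1/2-\alpha}\sqrt{\log n})$ from Equations~\eqref{Eq::EABound} and~\eqref{Eq::TBound}. The product is $O_P(\delta_n^{-2\alpha}\log n)$, which is exactly the target rate and explains the $\log n$ in the statement.

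For $\mathcal R_E$, the operator-norm version of Lemma~\ref{Lm::TaylorBookkeeping} with $H=A+P_{\diag}$ and $V=U_\alpha$ ($v_n=1$) gives $\|\mathcal R_{H,U_\alpha}\|=O_P(\delta_n^{-2\alpha}\log n)$. The diagonal correction contributes $\|D^{-\alpha}\|^2\|P_{\diag}\|=O_P(\delta_n^{-2\alpha}\rho_n)$.

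\emph{Main obstacle.} The delicate step is the first-order degree term. A naive bound $\|B_D\|\,\|P\T U_\alpha\|$ gives $\delta_n^{1/2-2\alpha}\sqrt{\log n}$, which exceeds the target. We must use the identity $P\T U_\alpha=T^\alpha U_\alpha\Lambda_\alpha$ and then exploit averaging of the degree fluctuations against $U_\alpha$, via concentration of a linear form in $E_A$, rather than their worst-case size.
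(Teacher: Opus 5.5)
Your proposal is correct and follows essentially the same route as the paper. You use the same decomposition of $U_\alpha^\top E_\alpha U_\alpha$, the identity $P\T U_\alpha=T^\alpha U_\alpha\Lambda_\alpha$ to turn the first-order degree terms into linear forms in the entries of $E_A$, the crude bound $\norm{B_D}\norm{E_A}\norm{\T}$ on the mixed terms (the source of the $\log n$), and Lemma~\ref{Lm::TaylorBookkeeping} for the remainder. The only differences are cosmetic: you use Bernstein where the paper uses Chebyshev, which is equivalent here since $r$ is fixed, and you apply the bookkeeping lemma with $V=U_\alpha$ rather than $V=I_n$.
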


\begin{proof}
We first write
\begin{align*}
    U_\alpha^\top E_\alpha U_\alpha&=U_\alpha^\top T^{-\alpha} E_AT^{-\alpha} U_\alpha+U_\alpha^\top B_D PT^{-\alpha} U_\alpha+U_\alpha^\top T^{-\alpha}P B_D U_\alpha\\
    &\quad+U_\alpha^\top B_D E_AT^{-\alpha} U_\alpha+U_\alpha^\top T^{-\alpha}E_AB_D U_\alpha+U_\alpha^\top\mathcal R_E U_\alpha,
\end{align*}
where
\begin{align*}
    \mathcal R_E&=R_D(P+E_A)T^{-\alpha}+T^{-\alpha}(P+E_A)R_D \\
    &\quad+B_D(P+E_A)B_D+B_D(P+E_A)R_D+R_D(P+E_A)B_D+R_D(P+E_A)R_D \\
    &\quad-(T^{-\alpha}+B_D+R_D)P_{\diag}(T^{-\alpha}+B_D+R_D).
\end{align*}
For the first term, for $k,\ell\in\{1,\ldots,r\}$,
\[
    \left(U_\alpha^\top T^{-\alpha}E_AT^{-\alpha}U_\alpha\right)_{k\ell}=\sum_{a<b} (E_A)_{ab}\,t_a^{-\alpha}t_b^{-\alpha}\left((U_\alpha)_{ak}(U_\alpha)_{b\ell}+(U_\alpha)_{bk}(U_\alpha)_{a\ell}\right).
\]
Conditional on the latent positions, this is a weighted sum of independent mean-zero random variables. By Equations~\eqref{Eq::TBound} and \eqref{Eq::URowBound}, the weights satisfy
\[
    \left\lvert t_a^{-\alpha}t_b^{-\alpha}\left((U_\alpha)_{ak}(U_\alpha)_{b\ell}+(U_\alpha)_{bk}(U_\alpha)_{a\ell}\right)\right\rvert\leq Cn^{-1}\delta_n^{-2\alpha}.
\]
Consequently,
\begin{align*}
    \operatorname{Var}\left(\left(U_\alpha^\top T^{-\alpha}E_AT^{-\alpha}U_\alpha\right)_{k\ell}\right)&\leq\sum_{a<b} t_a^{-2\alpha}p_{ab}t_b^{-2\alpha}\left\lvert\left((U_\alpha)_{ak}(U_\alpha)_{b\ell}+(U_\alpha)_{bk}(U_\alpha)_{a\ell}\right)\right\rvert^2\\
    &\leq C\delta_n^{-4\alpha}n^{-2}\sum_{a<b} p_{ab} \\
    &\leq Cn^{-1}\delta_n^{1-4\alpha}.
\end{align*}

By Chebyshev's inequality, for every $M>0$,
\begin{align*}
    \mathbb P\left(\left\lvert\left(U_\alpha^\top T^{-\alpha}E_AT^{-\alpha}U_\alpha\right)_{k\ell}\right\rvert>M\rho_n^{1/2}\delta_n^{-2\alpha}\right)\leq\frac{Cn^{-1}\delta_n^{1-4\alpha}}{M^2\rho_n\delta_n^{-4\alpha}}=\frac{C}{M^2}.
\end{align*}
Hence, 
\begin{equation}
    \label{Eq::UTEATUEntryBound}
    \left(U_\alpha^\top T^{-\alpha}E_AT^{-\alpha}U_\alpha
    \right)_{k\ell}=O_P(\rho_n^{1/2}\delta_n^{-2\alpha}).
\end{equation}
Because $r$ is fixed,
\[
    \norm{U_\alpha^\top T^{-\alpha}E_AT^{-\alpha}U_\alpha}\leq r\cdot\max_{k,\ell}\left\lvert \left(U_\alpha^\top T^{-\alpha}E_AT^{-\alpha}U_\alpha\right)_{k\ell}\right\rvert.
\]
Combining this with Equation~\eqref{Eq::UTEATUEntryBound} and noting that $\rho_n\leq1$, we have 
\begin{equation}
    \label{Eq::UTZTUBound}
    \norm{U_\alpha^\top T^{-\alpha}E_AT^{-\alpha}U_\alpha}=O_P\left(\delta_n^{-2\alpha}\right).
\end{equation}

For the second term, we write $U_\alpha^\top B_DP\T U_\alpha=U_\alpha^\top B_DT^\alpha U_\alpha\Lambda_\alpha$, and
\begin{align*}
    (U_\alpha^\top B_DT^\alpha U_\alpha)_{k\ell}&=\alpha\sum_{a} (U_\alpha)_{ak}(U_\alpha)_{a\ell}\frac{t_a-d_a}{t_a}\\
    &=-\alpha\sum_{a}(U_\alpha)_{ak}(U_\alpha)_{a\ell}t_a^{-1}\sum_{b\neq a}(E_A)_{ab}\\
    &=-\alpha\sum_{a<b}(E_A)_{ab}\left(t_a^{-1}(U_\alpha)_{ak}(U_\alpha)_{a\ell}+t_b^{-1}(U_\alpha)_{bk}(U_\alpha)_{b\ell}\right).
\end{align*}
Conditional on the latent positions, this is again a weighted sum of independent mean-zero random variables. By Equations~\eqref{Eq::TBound} and \eqref{Eq::URowBound}, the weights satisfy 
\[
    \left\lvert t_a^{-1}(U_\alpha)_{ak}(U_\alpha)_{a\ell}+t_b^{-1}(U_\alpha)_{bk}(U_\alpha)_{b\ell}\right\rvert\leq Cn^{-1}\delta_n^{-1}
\]
and 
\begin{align*}
    \sum_{a<b} p_{ab}\left\lvert\left(t_a^{-1}(U_\alpha)_{ak}(U_\alpha)_{a\ell}+t_b^{-1}(U_\alpha)_{bk}(U_\alpha)_{b\ell}\right)\right\rvert^2&\leq C\delta_n^{-2}n^{-2}\sum_{a<b} p_{ab} \\
    &\leq Cn^{-1}\delta_n^{-1}.
\end{align*}
An analogous argument to that used to obtain Equation~\eqref{Eq::UTZTUBound} gives 
\[
    \norm{U_\alpha^\top B_DT^\alpha U_\alpha}=O_{P}\left(\delta_n^{-1}\right).
\]

After multiplication by $\Lambda_\alpha$, Equation~\eqref{Eq::LambdaBound} gives
\[
    \norm{U_\alpha^\top B_DP\T U_\alpha}=O_{P}\left(\delta_n^{-2\alpha}\right).
\]

The third term, $U_\alpha^\top\T PB_DU_\alpha$, is the transpose of the second and thus shares the same bound.

The fourth and fifth terms are bounded by 
\[
    \norm{U_\alpha^\top B_DE_A\T U_\alpha}+\norm{U_\alpha^\top \T E_AB_DU_\alpha}\leq2\norm{B_D}\norm{E_A}\norm{\T}=O_P(\delta_n^{-2\alpha}\log{n}).
\]
For the final term involving $\mathcal{R}_E$, let $H=P+E_A=A+P_{\diag}$, and note that the first six terms are then $\mathcal{R}_{H,I_n}$ from Lemma~\ref{Lm::TaylorBookkeeping}. Then
\[
    \norm{\mathcal{R}_{H,I_n}}=O_P(\delta_n^{-2\alpha}\log{n}).
\]
Moreover,
\[
    \norm{D^{-\alpha}P_{\diag}D^{-\alpha}}=O_P(\delta_n^{-2\alpha}\rho_n)
\]
and
\[
    \norm{\mathcal R_E}=O_P(\delta_n^{-2\alpha}\log n).
\]

Thus, 
\[
    \norm{U_\alpha^\top E_\alpha U_\alpha}=O_P(\delta_n^{-2\alpha}\log{n}).
\]
\end{proof}

\begin{lemma}
    \label{lem:signal}
    Under Condition~\ref{Assump::Standing}, for any fixed $q\in\mathbb R$,
    \[
        \norm{U_\alpha^\top\hat U_\alpha\hat\Lambda_\alpha^q-\Lambda_\alpha^q U_\alpha^\top\hat U_\alpha}=O_P\left(\delta_n^{q(1-2\alpha)-1}\log{n}\right).
    \]
    Consequently, the following three bounds hold:
    \begin{align}
        \norm{W_n^\top\hat\Lambda_\alpha^q W_n-\Lambda_\alpha^q}&=O_P\left(\delta_n^{q(1-2\alpha)-1}\log{n}\right)\label{Eq::WLW-LBound}\\
        \norm{\left(U_\alpha^\top\hat U_\alpha W_n-I_r\right)W_n^\top\hat\Lambda_\alpha^{1/2}W_n}&=O_P\left(\delta_n^{-\alpha-1/2}\log{n}\right)\label{Eq::UUW-IWLWBound}\\
        \norm{\hat U_\alpha\hat\Lambda_\alpha^{-1/2}W_n-U_\alpha\Lambda_\alpha^{-1/2}}_F&=O_P(\delta_n^{\alpha-1}\sqrt{\log{n}})\label{Eq::ULW-ULBound}.
    \end{align}
\end{lemma}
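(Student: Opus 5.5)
The plan is to follow the standard Sylvester-equation argument of \citet{Tang2018}, working at the level of individual entries so that the general power $q$ can be handled.

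\textbf{Step 1: the commutator for $q=1$.} Since $\hat L_\alpha\hat U_\alpha=\hat U_\alpha\hat\Lambda_\alpha$ and $U_\alpha^\top L_\alpha=\Lambda_\alpha U_\alpha^\top$, we have
\[
U_\alpha^\top\hat U_\alpha\hat\Lambda_\alpha-\Lambda_\alpha U_\alpha^\top\hat U_\alpha=U_\alpha^\top E_\alpha\hat U_\alpha .
\]
I will split the right-hand side as
\[
U_\alpha^\top E_\alpha\hat U_\alpha=U_\alpha^\top E_\alpha U_\alpha\,U_\alpha^\top\hat U_\alpha+U_\alpha^\top E_\alpha\Pi^\perp(U_\alpha)\hat U_\alpha .
\]
The first piece is $O_P(\delta_n^{-2\alpha}\log n)$ by Lemma~\ref{Lm::UTEU}. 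For the second piece I will bound it by $\norm{E_\alpha}\,\norm{\Pi^\perp(U_\alpha)\hat U_\alpha}$.

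The factor $\norm{\Pi^\perp(U_\alpha)\hat U_\alpha}$ is controlled by Davis--Kahan. The population nonzero eigenvalues are $\asymp\delta_n^{1-2\alpha}$ and the remaining population eigenvalues are $0$, by Equation~\eqref{Eq::LambdaBound}. By Equation~\eqref{Eq::EBound}, $\norm{E_\alpha}/\delta_n^{1-2\alpha}=O_P(\delta_n^{-1/2}\sqrt{\log n})\to0$. Weyl's inequality then shows that $\hat\lambda_{\alpha,1},\dots,\hat\lambda_{\alpha,r}$ are positive, of order $\delta_n^{1-2\alpha}$, and separated from the rest of the spectrum. This justifies $\norm{\Pi^\perp(U_\alpha)\hat U_\alpha}=O_P(\delta_n^{-1/2}\sqrt{\log n})$. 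The product with $\norm{E_\alpha}$ is therefore again $O_P(\delta_n^{-2\alpha}\log n)$, which is exactly the claim at $q=1$.

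\textbf{Step 2: general $q$.} Entrywise, Step 1 reads
\[
(U_\alpha^\top\hat U_\alpha)_{k\ell}\,(\hat\lambda_\ell-\lambda_k)=(U_\alpha^\top E_\alpha\hat U_\alpha)_{k\ell}.
\]
Multiplying by $(\hat\lambda_\ell^q-\lambda_k^q)/(\hat\lambda_\ell-\lambda_k)$ gives the entries of the $q$-commutator. By the mean value theorem this ratio equals $q\eta^{q-1}$ for some $\eta$ between $\lambda_k$ and $\hat\lambda_\ell$, with the natural interpretation $q\lambda_k^{q-1}$ when the two coincide. Both endpoints are $\asymp\delta_n^{1-2\alpha}$, so the ratio is $O_P(\delta_n^{(q-1)(1-2\alpha)})$. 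Because $r$ is fixed, the entrywise bound transfers to the spectral norm, giving $O_P(\delta_n^{q(1-2\alpha)-1}\log n)$.

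\textbf{Step 3: the consequences.} Write $U_\alpha^\top\hat U_\alpha=W_{1,n}S_nW_{2,n}^\top$. Then
\[
U_\alpha^\top\hat U_\alpha-W_n^\top=W_{1,n}(S_n-I)W_{2,n}^\top ,
\]
and $\norm{I-S_n}\le\norm{\sin\Theta}^2=O_P(\delta_n^{-1}\log n)$ by Step 1's Davis--Kahan bound.

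For Equation~\eqref{Eq::WLW-LBound}, decompose
\[
W_n^\top\hat\Lambda_\alpha^q-\Lambda_\alpha^qW_n^\top=(W_n^\top-U_\alpha^\top\hat U_\alpha)\hat\Lambda_\alpha^q+\bigl(U_\alpha^\top\hat U_\alpha\hat\Lambda_\alpha^q-\Lambda_\alpha^qU_\alpha^\top\hat U_\alpha\bigr)+\Lambda_\alpha^q(U_\alpha^\top\hat U_\alpha-W_n^\top),
\]
and then multiply on the right by $W_n$. Each term is $O_P(\delta_n^{q(1-2\alpha)-1}\log n)$, using $\norm{\hat\Lambda_\alpha^q},\norm{\Lambda_\alpha^q}\asymp\delta_n^{q(1-2\alpha)}$.

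For Equation~\eqref{Eq::UUW-IWLWBound}, note that $U_\alpha^\top\hat U_\alpha W_n-I=(U_\alpha^\top\hat U_\alpha-W_n^\top)W_n$. This has norm $O_P(\delta_n^{-1}\log n)$, and multiplying by $\norm{\hat\Lambda_\alpha^{1/2}}=O_P(\delta_n^{1/2-\alpha})$ gives the stated rate.

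For Equation~\eqref{Eq::ULW-ULBound}, split $\hat U_\alpha\hat\Lambda_\alpha^{-1/2}W_n-U_\alpha\Lambda_\alpha^{-1/2}$ using $I=U_\alpha U_\alpha^\top+\Pi^\perp(U_\alpha)$.
\begin{itemize}
\item The perpendicular part is bounded by $\norm{\Pi^\perp(U_\alpha)\hat U_\alpha}\,\norm{\hat\Lambda_\alpha^{-1/2}}=O_P(\delta_n^{-1/2}\sqrt{\log n}\cdot\delta_n^{\alpha-1/2})=O_P(\delta_n^{\alpha-1}\sqrt{\log n})$.
\item The parallel part equals
\[
U_\alpha\Bigl[\bigl(U_\alpha^\top\hat U_\alpha\hat\Lambda_\alpha^{-1/2}-\Lambda_\alpha^{-1/2}U_\alpha^\top\hat U_\alpha\bigr)W_n+\Lambda_\alpha^{-1/2}(U_\alpha^\top\hat U_\alpha W_n-I)\Bigr].
\]
By Step 2 with $q=-1/2$ and the bound above, this is $O_P(\delta_n^{\alpha-3/2}\log n)$, which is smaller.
\end{itemize}
The matrix has rank at most $2r$, so the Frobenius norm is within a constant of the spectral norm.

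\textbf{Main obstacle.} I expect the main obstacle to be Step 1's Davis--Kahan/Weyl step. One must verify carefully that the top $r$ sample eigenvalues are positive and isolated at scale $\delta_n^{1-2\alpha}$ uniformly in $\alpha\in[0,1]$. For $\alpha>1/2$ this scale tends to zero, but it is the ratio $\norm{E_\alpha}/\delta_n^{1-2\alpha}$ that matters, and that ratio still tends to zero. This same ratio also keeps the mean-value factor $\eta^{q-1}$ under control for negative $q$.
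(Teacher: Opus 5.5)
Your proposal is correct and follows essentially the same route as the paper. Both arguments use the Sylvester identity split via Lemma~\ref{Lm::UTEU} and Davis--Kahan, the divided-difference (Hadamard-product) factor for general $q$, and $1-\cos\phi\le\sin^2\phi$ for the Procrustes bounds. The only cosmetic difference is in \eqref{Eq::ULW-ULBound}: you split along $U_\alpha U_\alpha^\top+\Pi^\perp(U_\alpha)$ and use the $q=-1/2$ commutator directly, whereas the paper writes the difference as $(\hat U_\alpha W_n-U_\alpha)\Lambda_\alpha^{-1/2}+\hat U_\alpha W_n(W_n^\top\hat\Lambda_\alpha^{-1/2}W_n-\Lambda_\alpha^{-1/2})$, and both give the same rate.
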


\begin{proof}
Let $\left\{\phi_1,\phi_2,\ldots,\phi_r\right\}$ be the principal angles between the two subspaces spanned by the columns of $U_\alpha$ and $\hat{U}_\alpha$. Then the singular values of $\Pi^\perp(U_\alpha)\hat{U}_\alpha=(I_n-U_\alpha U_\alpha^\top)\hat U_\alpha$ are given by $\left\{\sin(\phi_1),\ldots,\sin(\phi_r)\right\}$. Thus, by Davis--Kahan (see, for example, \citet{yu2015useful}),
\begin{equation}
    \label{Eq::DK}
    \norm{(I_n-U_\alpha U_\alpha^\top)\hat U_\alpha}=O_P\left(\frac{\norm{E_\alpha}}{\min\left\lvert\Lambda_\alpha\right\rvert}\right)=O_P(\delta_n^{-1/2}\sqrt{\log{n}}).
\end{equation}

We now write
\begin{align*}
    (L_\alpha+E_\alpha)\hat U_\alpha&=\hat U_\alpha\hat\Lambda_\alpha\\
    U_\alpha^\top\hat U_\alpha\hat\Lambda_\alpha-\Lambda_\alpha U_\alpha^\top\hat U_\alpha&=U_\alpha^\top E_\alpha\hat U_\alpha\\
    U_\alpha^\top\hat U_\alpha\hat\Lambda_\alpha-\Lambda_\alpha U_\alpha^\top\hat U_\alpha&=U_\alpha^\top E_\alpha\left(U_\alpha U_\alpha^\top\hat U_\alpha+\Pi^\perp(U_\alpha)\hat U_\alpha\right)\\
    U_\alpha^\top\hat U_\alpha\hat\Lambda_\alpha-\Lambda_\alpha U_\alpha^\top\hat U_\alpha&=U_\alpha^\top E_\alpha U_\alpha U_\alpha^\top\hat U_\alpha+U_\alpha^\top E_\alpha\Pi^\perp(U_\alpha)\hat U_\alpha.
\end{align*}
Thus, by Lemma~\ref{Lm::UTEU} and Equations~\eqref{Eq::EBound} and \eqref{Eq::DK},
\begin{align*}
    \norm{U_\alpha^\top\hat U_\alpha\hat\Lambda_\alpha-\Lambda_\alpha U_\alpha^\top\hat U_\alpha}
    &\leq\norm{U_\alpha^\top E_\alpha U_\alpha}\norm{U_\alpha^\top\hat{U}_\alpha}+\norm{E_\alpha}\norm{\Pi^\perp(U_\alpha)\hat U_\alpha} \\
    &=O_P(\delta_n^{-2\alpha}\log{n}).
\end{align*}

We now show this for a general fixed $q$. We may explicitly calculate
\[
    U_\alpha^\top\hat U_\alpha\hat\Lambda_\alpha^q-\Lambda_\alpha^q U_\alpha^\top\hat U_\alpha=H_q\circ\left(U_\alpha^\top\hat U_\alpha\hat\Lambda_\alpha-\Lambda_\alpha U_\alpha^\top\hat U_\alpha\right),
\]
where $\circ$ denotes the entrywise product and
\[
    (H_q)_{ij}=
    \begin{cases}
    \displaystyle
    \frac{\hat\lambda_{j,\alpha}^{\,q}-\lambda_{i,\alpha}^{\,q}}
         {\hat\lambda_{j,\alpha}-\lambda_{i,\alpha}},
    & \hat\lambda_{j,\alpha}\neq \lambda_{i,\alpha}, \\[2ex]
    q\lambda_{i,\alpha}^{q-1},
    & \hat\lambda_{j,\alpha}= \lambda_{i,\alpha}.
    \end{cases}
\]

Now Weyl's inequality gives
\[
    \max_{1\leq k\leq r}\left\lvert\lambda_k(\hat\Lambda_\alpha)-\lambda_k(\Lambda_\alpha)\right\rvert\leq \norm{E_\alpha}.
\]
By Equations~\eqref{Eq::EBound} and \eqref{Eq::LambdaBound}, it follows that, with high probability,
\[
    \lambda_{\min}(\hat\Lambda_\alpha)\asymp\lambda_{\max}(\hat\Lambda_\alpha)\asymp \delta_n^{1-2\alpha}.
\]

Thus, for each fixed $q$, there exists a constant $C_q>0$ such that
\[
    \max_{i,j}|(H_q)_{ij}|\leq C_q \delta_n^{(q-1)(1-2\alpha)}.
\]
Since $r$ is fixed, this implies
\begin{align}
    \label{Eq::UUlambdaH-LambdaHUUBound}
    \norm{U_\alpha^\top\hat U_\alpha\hat\Lambda_\alpha^q-\Lambda_\alpha^q U_\alpha^\top\hat U_\alpha}=O_P\left(\delta_n^{q(1-2\alpha)-1}\log{n}\right),
\end{align}
which proves the first bound. 

We now show that $\norm{U_\alpha^\top\hat{U}_{\alpha}W_n-I_r}=O_P(\delta_n^{-1}\log{n})$. This follows by definition, because $W_n$ is the matrix that solves the orthogonal Procrustes problem between $\hat{U}_\alpha$ and $U_\alpha$. It is easy to see that $\norm{U_\alpha^\top\hat{U}_{\alpha}W_n-I_r}=\max_i\left\lvert\cos(\phi_i)-1\right\rvert$. Since $1-\cos(x)\leq\sin^2(x)$, by Equation~\eqref{Eq::DK}, we have
\begin{equation}
    \label{Eq::UUW-IBound}
    \norm{U_\alpha^\top\hat{U}_{\alpha}W_n-I_r}=O_P(\delta_n^{-1}\log{n}).
\end{equation}

We now show Equation~\eqref{Eq::WLW-LBound}. Since $W_n$ is orthogonal, we may multiply the bound in Equation~\eqref{Eq::UUlambdaH-LambdaHUUBound} by $W_n$ on the right to get 
\begin{align}
    \label{Eq::UUlambdaQW-LambdaQUUBoundW}
    \norm{U_\alpha^\top\hat U_\alpha W_n W_n^\top\hat\Lambda_\alpha^qW_n-\Lambda_\alpha^q U_\alpha^\top\hat U_\alpha W_n}=O_P\left(\delta_n^{q(1-2\alpha)-1}\log{n}\right).
\end{align}
We then have 
\begin{align*}
    W_n^\top\hat\Lambda_\alpha^q W_n-\Lambda_\alpha^q&=\left[U_\alpha^\top\hat U_\alpha W_n\left(W_n^\top\hat\Lambda_\alpha^qW_n\right)-\Lambda_\alpha^q U_\alpha^\top\hat U_\alpha W_n\right]+\left[I_r-U_\alpha^\top\hat U_\alpha W_n\right]\left(W_n^\top\hat\Lambda_\alpha^qW_n\right) \\
    &\quad+\Lambda_\alpha^q\left[U_\alpha^\top\hat U_\alpha W_n-I_r\right].
\end{align*}

The second and third terms are $O_P(\delta_n^{q(1-2\alpha)-1}\log{n})$ by Equations~\eqref{Eq::UUW-IBound} and \eqref{Eq::LambdaBound}. Combining this with Equation~\eqref{Eq::UUlambdaQW-LambdaQUUBoundW} proves Equation~\eqref{Eq::WLW-LBound}. Taking $q=1/2$, the bound on the second term is exactly Equation~\eqref{Eq::UUW-IWLWBound}.

Finally, for Equation~\eqref{Eq::ULW-ULBound}, using $W_nW_n^\top=I_r$, we write
\begin{align*}
    \hat U_\alpha\hat\Lambda_\alpha^{-1/2}W_n-U_\alpha\Lambda_\alpha^{-1/2} &=\hat U_\alpha W_n\left(W_n^\top\hat\Lambda_\alpha^{-1/2}W_n\right)-U_\alpha\Lambda_\alpha^{-1/2} \\
    &=(\hat U_\alpha W_n-U_\alpha)\Lambda_\alpha^{-1/2}+\hat U_\alpha W_n\left(W_n^\top\hat\Lambda_\alpha^{-1/2}W_n-\Lambda_\alpha^{-1/2}\right).
\end{align*}
Thus,
\begin{align*}
    \norm{\hat U_\alpha\hat\Lambda_\alpha^{-1/2}W_n-U_\alpha\Lambda_\alpha^{-1/2}}_F&\leq\norm{\hat U_\alpha W_n-U_\alpha}_F\norm{\Lambda_\alpha^{-1/2}}+\norm{\hat U_\alpha W_n}\norm{W_n^\top\hat\Lambda_\alpha^{-1/2}W_n-\Lambda_\alpha^{-1/2}}_F.
\end{align*}
By Equations~\eqref{Eq::DK} and \eqref{Eq::LambdaBound}, the first term is $O_P(\delta_n^{\alpha-1}\sqrt{\log{n}})$. By Equation~\eqref{Eq::WLW-LBound}, the second term is $O_P(\delta_n^{\alpha-1}\sqrt{\log{n}})$. Thus,
\[
    \norm{\hat U_\alpha\hat\Lambda_\alpha^{-1/2}W_n-U_\alpha\Lambda_\alpha^{-1/2}}_F=O_P(\delta_n^{\alpha-1}\sqrt{\log{n}}),
\]
which completes the proof.
\end{proof}

\begin{lemma}
    \label{Lem::FrobSpectralRemainder}
    Under Condition~\ref{Assump::Standing},
    \[
        \norm{\mathcal{R}_{\mathrm{sp},\alpha}}_F=O_P\left(\delta_n^{-\alpha-1/2}\log{n}\right).
    \]
\end{lemma}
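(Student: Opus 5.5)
Recall from Lemma~\ref{Lm::SpectralExpansion} that $\mathcal R_{\mathrm{sp},\alpha}$ is a sum of two pieces: a term of the form $U_\alpha(\cdots)$ lying in the population eigenspace, and a term $\Pi^\perp(U_\alpha)E_\alpha(\hat U_\alpha\hat\Lambda_\alpha^{-1/2}W_n-U_\alpha\Lambda_\alpha^{-1/2})$ lying in its orthogonal complement. I will bound each Frobenius norm separately and combine them with the triangle inequality. Two facts are used throughout. First, $U_\alpha$ has orthonormal columns and $\Pi^\perp(U_\alpha)$ is a projection, so $\norm{U_\alpha M}_F=\norm{M}_F$ and $\norm{\Pi^\perp(U_\alpha)M}_F\leq\norm{M}_F$. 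Second, every $r\times r$ matrix satisfies $\norm{\cdot}_F\leq\sqrt r\norm{\cdot}$, and $r$ is fixed, so operator-norm bounds on $r\times r$ matrices transfer directly.

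\emph{First piece.} Write the bracket as
\[
\Bigl(W_n^\top\hat\Lambda_\alpha^{1/2}W_n-\Lambda_\alpha^{1/2}\Bigr)-U_\alpha^\top E_\alpha U_\alpha\Lambda_\alpha^{-1/2}+(U_\alpha^\top\hat U_\alpha W_n-I_r)W_n^\top\hat\Lambda_\alpha^{1/2}W_n .
\]
\begin{itemize}
\item For the first summand, Equation~\eqref{Eq::WLW-LBound} with $q=1/2$ gives $O_P(\delta_n^{(1-2\alpha)/2-1}\log n)=O_P(\delta_n^{-\alpha-1/2}\log n)$.
\item For the second summand, Lemma~\ref{Lm::UTEU} and Equation~\eqref{Eq::LambdaBound} give $O_P(\delta_n^{-2\alpha}\log n\cdot\delta_n^{\alpha-1/2})=O_P(\delta_n^{-\alpha-1/2}\log n)$.
\item For the third summand, Equation~\eqref{Eq::UUW-IWLWBound} gives $O_P(\delta_n^{-\alpha-1/2}\log n)$ directly.
\end{itemize}
All three summands are therefore of the target order.

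\emph{Second piece.} Bound it by $\norm{E_\alpha}\,\norm{\hat U_\alpha\hat\Lambda_\alpha^{-1/2}W_n-U_\alpha\Lambda_\alpha^{-1/2}}_F$. Equations~\eqref{Eq::EBound} and \eqref{Eq::ULW-ULBound} then give
\[
O_P\bigl(\delta_n^{1/2-2\alpha}\sqrt{\log n}\cdot\delta_n^{\alpha-1}\sqrt{\log n}\bigr)=O_P(\delta_n^{-\alpha-1/2}\log n),
\]
which again matches the target.

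\emph{Main point of care.} The only real obstacle is making sure the powers of $\Lambda_\alpha^{-1/2}$ are tracked correctly for $\alpha>1/2$, where $\norm{\Lambda_\alpha^{-1/2}}$ diverges. In the second summand, the factor $\delta_n^{\alpha-1/2}$ must be offset by the sharper bound $\delta_n^{-2\alpha}\log n$ from Lemma~\ref{Lm::UTEU}. The crude alternative, $\norm{U_\alpha^\top E_\alpha U_\alpha}\leq\norm{E_\alpha}$, would not suffice. With that sharper bound in place, the exponents cancel exactly as computed above, and summing the two pieces yields the lemma.
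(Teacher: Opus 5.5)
Your proposal is correct and follows the paper's proof: you split $\mathcal R_{\mathrm{sp},\alpha}$ into the same two pieces. The first piece is controlled by Equation~\eqref{Eq::WLW-LBound} at $q=1/2$, Lemma~\ref{Lm::UTEU} with Equation~\eqref{Eq::LambdaBound}, and Equation~\eqref{Eq::UUW-IWLWBound}, and the second by Equations~\eqref{Eq::EBound} and~\eqref{Eq::ULW-ULBound}; your Frobenius/operator-norm conversions ($U_\alpha$ has orthonormal columns, and $\norm{\cdot}_F\leq\sqrt r\norm{\cdot}$ for fixed $r$) simply make explicit what the paper leaves implicit.
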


\begin{proof}
    We recall that 
    \begin{align*}
        \mathcal{R}_{\mathrm{sp},\alpha}&=U_\alpha\left(W_n^\top\hat{\Lambda}_\alpha^{1/2}W_n-\Lambda_\alpha^{1/2}-U_\alpha^\top E_\alpha U_\alpha \Lambda_\alpha^{-1/2}+(U_\alpha^\top\hat U_\alpha W_n-I_r)W_n^\top\hat{\Lambda}_\alpha^{1/2}W_n\right)\\
        &\qquad+\Pi^\perp(U_\alpha)E_\alpha\left(\hat{U}_\alpha\hat{\Lambda}_\alpha^{-1/2}W_n-U_\alpha\Lambda_\alpha^{-1/2}\right).
    \end{align*}

    The first term is $O_P(\delta_n^{-\alpha-1/2}\log{n})$ by Lemma~\ref{lem:signal} (with $q=1/2$), Lemma~\ref{Lm::UTEU} and Equation~\eqref{Eq::LambdaBound}, and Equation~\eqref{Eq::UUW-IWLWBound} of Lemma~\ref{lem:signal}. The second term is $O_P(\delta_n^{-\alpha-1/2}\log{n})$ by Equation~\eqref{Eq::EBound} and Equation~\eqref{Eq::ULW-ULBound} of Lemma~\ref{lem:signal}. Combining these two bounds completes the proof. 
\end{proof}

\begin{proposition}
    \label{Prop::SpecBounds}
    Under Condition~\ref{Assump::Standing}, for each fixed row index $i$,
    \[
        n^{1/2}\delta_n^\alpha\norm{\left(\mathcal{R}_{\mathrm{sp},\alpha}\right)_{i*}}=o_P(1).
    \]
    Consequently, for every $\varepsilon>0$, 
    \[
        \mathbb{P}\left(n^{1/2}\delta_n^\alpha\norm{\left(\mathcal{R}_{\mathrm{sp},\alpha}\right)_{i*}}>\varepsilon\mid\xi_i\right)\overset{p}\to0.
    \]
\end{proposition}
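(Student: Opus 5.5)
The plan is to deduce the row-wise bound from the Frobenius bound of Lemma~\ref{Lem::FrobSpectralRemainder} by an exchangeability argument, in the spirit of \citet{Tang2018}. Note why some extra argument is needed. Lemma~\ref{Lem::FrobSpectralRemainder} bounds $\norm{(\mathcal R_{\mathrm{sp},\alpha})_{i*}}\leq\norm{\mathcal R_{\mathrm{sp},\alpha}}_F$ by $O_P(\delta_n^{-\alpha-1/2}\log n)$. After scaling by $n^{1/2}\delta_n^\alpha$ this leaves $O_P(\rho_n^{-1/2}\log n)$, which does not vanish. We gain the missing factor $n^{-1/2}$ by showing that a fixed row carries only about a $1/n$ share of the squared Frobenius norm.

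\emph{Step 1 (easy pieces, done row-wise directly).} Split $\mathcal R_{\mathrm{sp},\alpha}$ as in Lemma~\ref{Lm::SpectralExpansion}.
\begin{itemize}
\item The term $U_\alpha(\cdots)$ has $i$th row bounded by $\norm{U_\alpha}_{2\to\infty}$ times an $r\times r$ matrix. By Equation~\eqref{Eq::URowBound}, Lemma~\ref{lem:signal} and Lemma~\ref{Lm::UTEU}, this is $O_P(n^{-1/2}\delta_n^{-\alpha-1/2}\log n)$.
\item The same bound holds for the piece $U_\alpha U_\alpha^\top E_\alpha(\hat U_\alpha\hat\Lambda_\alpha^{-1/2}W_n-U_\alpha\Lambda_\alpha^{-1/2})$. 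This uses Equations~\eqref{Eq::EBound} and~\eqref{Eq::ULW-ULBound}.
\end{itemize}
After scaling by $n^{1/2}\delta_n^\alpha$, both are $O_P(\delta_n^{-1/2}\log n)=o_P(1)$ by Condition~\ref{Assump::Sparsity}.

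The remaining piece is $e_i^\top E_\alpha(\hat U_\alpha\hat\Lambda_\alpha^{-1/2}W_n-U_\alpha\Lambda_\alpha^{-1/2})$. Here a naive bound loses the $n^{-1/2}$, so I will not bound it directly. Instead I apply the exchangeability argument of Step 2 to the whole matrix $\mathcal R_{\mathrm{sp},\alpha}$, whose Frobenius norm is already controlled.

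\emph{Step 2 (exchangeability).} The pairs $(\xi_j)_j$ are iid and the edges are generated symmetrically, so the joint law of $(\Xi,A)$ is invariant under simultaneous permutation of node labels. All constructions are permutation-equivariant:
\begin{itemize}
\item $D$, $T$, $\hat L_\alpha$ and $L_\alpha$ transform as $\Pi M\Pi^\top$;
\item eigenvector matrices transform as $\Pi U_\alpha$ and $\Pi\hat U_\alpha$;
\item the $r\times r$ matrices $U_\alpha^\top\hat U_\alpha$, $W_n$, $\Lambda_\alpha$, $\hat\Lambda_\alpha$ and $V_\alpha$ are unchanged.
\end{itemize}
Hence $\Pi\mathcal R_{\mathrm{sp},\alpha}$ is the remainder of the permuted graph, and the rows $\norm{(\mathcal R_{\mathrm{sp},\alpha})_{j*}}$, $j=1,\ldots,n$, are exchangeable.

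Set $\varepsilon_n=\eta\, n^{-1/2}\delta_n^{-\alpha}$. Then
\[
\mathbb P\bigl(\norm{(\mathcal R_{\mathrm{sp},\alpha})_{i*}}>\varepsilon_n\bigr)=\frac1n\E\,\#\{j:\norm{(\mathcal R_{\mathrm{sp},\alpha})_{j*}}>\varepsilon_n\}\leq \E\min\Bigl(1,\frac{\norm{\mathcal R_{\mathrm{sp},\alpha}}_F^2}{n\varepsilon_n^2}\Bigr).
\]
The quantity inside the minimum equals $\eta^{-2}\delta_n^{2\alpha}\norm{\mathcal R_{\mathrm{sp},\alpha}}_F^2=O_P(\delta_n^{-1}(\log n)^2)=o_P(1)$ by Lemma~\ref{Lem::FrobSpectralRemainder}. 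By bounded convergence the expectation tends to $0$, which proves the first claim.

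We also need to handle the event where $\hat L_\alpha$ is undefined or the high-probability bounds fail. We define $\mathcal R_{\mathrm{sp},\alpha}=0$ there; this event is itself permutation-invariant and has vanishing probability, so it does not affect the argument.

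\emph{Step 3 (conditional version).} Let $Y_n=\mathbb P(n^{1/2}\delta_n^\alpha\norm{(\mathcal R_{\mathrm{sp},\alpha})_{i*}}>\varepsilon\mid\xi_i)$. This is a nonnegative random variable with $\E Y_n$ equal to the unconditional probability, which tends to $0$ by Step 2. So $Y_n\to0$ in $L^1$ and hence in probability.

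\emph{Main obstacle.} I expect the main obstacle to be rigorously justifying the equivariance in Step 2. Eigenvectors, and hence $\hat U_\alpha$ and $V_\alpha$, are defined only up to sign or rotation within eigenspaces, and under ties $W_n$ need not be unique. The fix I would try first is to observe that $\mathcal R_{\mathrm{sp},\alpha}$ is determined by quantities invariant under the choice of basis. Its definition is the difference between the aligned embedding $\hat U_\alpha\hat\Lambda_\alpha^{1/2}W_nV_\alpha^\top$ and $X_\alpha$, minus $E_\alpha U_\alpha\Lambda_\alpha^{-1/2}V_\alpha^\top$. Under a change of orthonormal eigenbasis, these transform compatibly, since $U_\alpha\Lambda_\alpha^{-1/2}V_\alpha^\top=T^{-\alpha}XG_{n,\alpha}^{-1}$ is basis-free. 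Alternatively, one can fix a measurable, permutation-equivariant tie-breaking rule, since ties occur with vanishing probability anyway.
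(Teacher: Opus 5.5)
Your proposal is correct and is essentially the paper's proof: you combine the Frobenius bound of Lemma~\ref{Lem::FrobSpectralRemainder} with exchangeability of the rows, the counting bound $1\wedge\delta_n^{2\alpha}\norm{\mathcal R_{\mathrm{sp},\alpha}}_F^2/\varepsilon^2$ and bounded convergence, and then obtain the conditional statement from $\E[\mathbb P(\cdot\mid\xi_i)]\to0$. Your Step~1 is superfluous, since Step~2 is applied to the whole remainder anyway; your treatment of the bad event and of basis/tie invariance (via $U_\alpha\Lambda_\alpha^{-1/2}V_\alpha^\top=T^{-\alpha}XG_{n,\alpha}^{-1}$) usefully spells out details that the paper only asserts.
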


\begin{proof}
By Lemma~\ref{Lem::FrobSpectralRemainder},
\[
    \delta_n^{2\alpha}\norm{\mathcal{R}_{\mathrm{sp},\alpha}}_F^2=O_P\left(\delta_n^{-1}(\log{n})^2\right)=o_P(1),
\]
where the last equality uses Condition~\ref{Assump::Sparsity}.

Because the latent positions are iid and the graph model is invariant under a relabeling of the vertices, and because $W_n$ is the usual alignment between $\hat{U}_\alpha$ and $U_\alpha$, the rows of $\mathcal{R}_{\mathrm{sp},\alpha}$ are exchangeable. Thus, for any $\varepsilon>0$,
\begin{align*}
    \mathbb P\left(n^{1/2}\delta_n^\alpha\norm{(\mathcal{R}_{\mathrm{sp},\alpha})_{i*}}>\varepsilon\right)&=\mathbb E\left[\frac1n\sum_{j=1}^n\mathbf 1\left\{n^{1/2}\delta_n^\alpha\norm{(\mathcal{R}_{\mathrm{sp},\alpha})_{j*}}>\varepsilon\right\}\right].
\end{align*}
For every realization of $\mathcal{R}_{\mathrm{sp},\alpha}$,
\begin{align*}
    \frac1n\sum_{j=1}^n\mathbf 1\left\{n^{1/2}\delta_n^\alpha\norm{(\mathcal{R}_{\mathrm{sp},\alpha})_{j*}}>\varepsilon\right\}&\leq1\wedge\frac{1}{n}\sum_{j=1}^n\frac{n\delta_n^{2\alpha}\norm{(\mathcal{R}_{\mathrm{sp},\alpha})_{j*}}^2}{\varepsilon^2} \\
    &=1\wedge\frac{\delta_n^{2\alpha}\norm{\mathcal{R}_{\mathrm{sp},\alpha}}_F^2}{\varepsilon^2}.
\end{align*}
The right-hand side converges to zero in probability and is bounded by $1$. Therefore its expectation converges to zero. Thus,
\[
    \mathbb P\left(n^{1/2}\delta_n^\alpha\norm{(\mathcal{R}_{\mathrm{sp},\alpha})_{i*}}>\varepsilon\right)\to 0,
\]
which proves the proposition. The conditional statement follows from the fact that 
\[
    \E\left[\mathbb{P}\left(n^{1/2}\delta_n^\alpha\norm{(\mathcal{R}_{\mathrm{sp},\alpha})_{i*}}>\varepsilon\mid\xi_i\right)\right]\to0.
\]
An application of Markov's inequality completes the proof. 
\end{proof}

\subsection{Proof of Theorem~\ref{Thm::Main}}
\label{Sec::TheoremProof}
We begin this section with a proof that three of the terms arising from the expansion of $\hat{X}_\alpha W_nV_\alpha^\top-X_\alpha$ are negligible. For these three lemmas, we note that since
\[
    U_\alpha\Lambda_\alpha^{-1/2}V_\alpha^\top=T^{-\alpha}XG_{n,\alpha}^{-1}
\]
and $V_\alpha$ is orthogonal, for any $n\times n$ matrix $M$ and any row $i$,
\[
    \norm{\left(MU_\alpha\Lambda_\alpha^{-1/2}\right)_{i*}}=\norm{\left(MT^{-\alpha}XG_{n,\alpha}^{-1}\right)_{i*}}.
\]
Intuitively, we may drop the $V_\alpha$ term in the proof for clarity, with the understanding that this orthogonal matrix does not affect the row norms. 
\begin{lemma}
    \label{Lm::TPBULBound}
    For each fixed row $i$,
    \[
        n^{\alpha+1/2}\rho_n^\alpha\norm{\left(T^{-\alpha}P^\circ B_DU_\alpha\Lambda_\alpha^{-1/2}\right)_{i*}}=o_P(1).
    \]
\end{lemma}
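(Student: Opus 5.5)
The plan is to exploit the fact that $B_D$ is diagonal with entries linear in the centered degrees, so that row $i$ of the matrix is a weighted sum of the independent edge fluctuations $(E_A)_{jk}$. This averaging gives a much better bound than any direct norm inequality.

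First, as noted just before the statement, $U_\alpha\Lambda_\alpha^{-1/2}V_\alpha^\top=T^{-\alpha}XG_{n,\alpha}^{-1}$ and $V_\alpha$ is orthogonal. It therefore suffices to bound the $i$th row of $T^{-\alpha}P^\circ B_D T^{-\alpha}XG_{n,\alpha}^{-1}$. Using $(B_D)_{jj}=\alpha t_j^{-\alpha-1}(t_j-d_j)$ and $t_j-d_j=-\sum_{k\neq j}(E_A)_{jk}$, this row equals
\[
    -t_i^{-\alpha}\sum_{j\neq i}w_j\sum_{k\neq j}(E_A)_{jk},\qquad w_j:=\alpha\,p_{ij}\,t_j^{-2\alpha-1}X_j^\top G_{n,\alpha}^{-1}.
\]
Conditional on the latent positions, the $w_j$ are deterministic. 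Set $w_i:=0$, since $P^\circ$ has zero diagonal. Regrouping over unordered pairs gives
\[
    \sum_{j<k}(E_A)_{jk}(w_j+w_k),
\]
a sum of independent mean-zero row vectors.

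Next I bound the weights on the high-probability event of Lemma~\ref{Lm::CommonBounds}. There $p_{ij}\le C\rho_n$, $t_j^{-2\alpha-1}\le C\delta_n^{-2\alpha-1}$, $\norm{X_j}\le C\rho_n^{1/2}$, and $\norm{G_{n,\alpha}^{-1}}=O(\delta_n^{2\alpha-1})$ by Equation~\eqref{Eq::GBound}. Together these give
\[
    \max_j\norm{w_j}\le C\rho_n^{3/2}\delta_n^{-2}.
\]
The conditional covariance trace is then at most
\[
    \sum_{j<k}p_{jk}\norm{w_j+w_k}^2\le 2\sum_j\norm{w_j}^2\sum_k p_{jk}\le C\,n\,\delta_n\,\rho_n^3\delta_n^{-4}=C\,n^{-2},
\]
using $\delta_n=n\rho_n$. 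By Chebyshev's inequality, applied coordinatewise over the fixed $r$ coordinates as in the proof of Lemma~\ref{Lm::UTEU}, the sum is $O_P(n^{-1})$. Multiplying by $t_i^{-\alpha}=O(\delta_n^{-\alpha})$ and by the scaling $n^{\alpha+1/2}\rho_n^\alpha=n^{1/2}\delta_n^\alpha$ gives a total of $O_P(n^{-1/2})=o_P(1)$. Finally, I remove the conditioning on the latent-position event as in the remark after Lemma~\ref{Lm::CommonBounds}.

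The main obstacle is resisting the crude bound. Applying $\norm{B_D}=O_P(\delta_n^{-\alpha-1/2}\sqrt{\log n})$ together with row sums of $P$ yields only $O_P(\delta_n^{1/2-\alpha}\sqrt{\log n})$ after scaling, which diverges. The key is therefore the rewriting of $B_D$ as a linear function of $E_A$ and the resulting variance computation; the double sum must be reindexed correctly so that each edge variable $(E_A)_{jk}$ appears exactly once.
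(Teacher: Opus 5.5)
Your proposal is correct and is essentially the paper's argument. Both write row $i$ as $t_i^{-\alpha}\sum_{j\neq i}w_j(t_j-d_j)$ with the same weight bound $\max_j\norm{w_j}=O(\rho_n^{3/2}\delta_n^{-2})$, and both control its conditional second moment to get $O_P(n^{-1/2})$ after scaling. Your regrouping into the independent-edge sum $\sum_{j<k}(E_A)_{jk}(w_j+w_k)$ is just a tidier route to the paper's diagonal/off-diagonal covariance computation for the centered degrees.

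There is one harmless slip in your closing remark. Once the factor $\Lambda_\alpha^{-1/2}$ is included, the crude norm bound gives $O_P(\sqrt{\log n})$ after scaling, not $O_P(\delta_n^{1/2-\alpha}\sqrt{\log n})$. Your stated rate would actually vanish for $\alpha>1/2$, contradicting your claim that it diverges.
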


\begin{proof}
Since $U_\alpha \Lambda_\alpha^{-1/2}V_\alpha^\top=T^{-\alpha}XG_{n,\alpha}^{-1}$,
we have
\[
    B_DU_\alpha \Lambda_\alpha^{-1/2}V_\alpha^\top=\alpha T^{-\alpha-1}(T-D)T^{-\alpha}XG_{n,\alpha}^{-1}=\alpha T^{-2\alpha-1}(T-D)XG_{n,\alpha}^{-1}.
\]
We thus write
\[
    \left(T^{-\alpha}P^\circ B_DU_\alpha\Lambda_\alpha^{-1/2}V_\alpha^\top\right)_{i*}=\alpha t_i^{-\alpha}\sum_{j\neq i}p_{ij}(t_j-d_j)\left(t_j^{-2\alpha-1}X_j^\top G_{n,\alpha}^{-1}\right).
\]
Define $b_j:=t_j^{-2\alpha-1}X_j^\top G_{n,\alpha}^{-1}$. We begin by bounding the quantity $\mathbb E\norm{\sum_{j\neq i}p_{ij}(t_j-d_j)b_j}^2$.
\begin{align}
    \E\norm{\sum_{j\neq i}p_{ij}(t_j-d_j)b_j}^2&=\mathbb E\left[\sum_{j\neq i}\sum_{\ell\neq i}p_{ij}p_{i\ell}(t_j-d_j)(t_\ell-d_\ell)b_jb_\ell^\top\right]\nonumber\\
    &=\mathbb E\left[\sum_{j\neq i}\sum_{\ell\neq i}p_{ij}p_{i\ell}\mathrm{Cov}\left(t_j-d_j,t_\ell-d_\ell\right)b_jb_\ell^\top\right]\label{Eq::EBBound}
\end{align}

If $j=\ell$, then the covariance term is simply $\mathrm{Var}(t_j-d_j)$. Since $t_j$ is fixed, this simplifies to 
\[
    \sum_{k\neq j}\mathrm{Var}\left(a_{jk}\right)=\sum_{k\neq j}p_{jk}(1-p_{jk})\leq\sum_{k\neq j}p_{jk}=t_j.
\] 
By Equation~\eqref{Eq::TBound}, this is $O(\delta_n)$.

If $j\neq\ell$, then all terms are independent except for the shared edge $(j,\ell)$. Thus, the covariance term becomes 
\[
    \mathrm{Cov}\left(t_j-d_j,t_\ell-d_\ell\right)=\mathrm{Cov}\left(p_{j\ell}-a_{j\ell},p_{\ell j}-a_{\ell j}\right)=\mathrm{Var}(a_{j\ell})\leq p_{j\ell}.
\]
Since $p_{j\ell}=X_j^\top X_{\ell}=\rho_n\xi_j^\top \xi_\ell$ and we assume the latent positions are uniformly bounded, we have 
\[
    \mathrm{Cov}\left(t_j-d_j,t_\ell-d_\ell\right)=O(\rho_n). 
\]

Plugging these bounds into Equation~\eqref{Eq::EBBound} gives
\begin{align}
    \mathbb E\norm{\sum_{j\neq i}p_{ij}(t_j-d_j)b_j}^2&\leq\sum_{j\neq i}p_{ij}^2\mathrm{Var}(t_j-d_j)\norm{b_j}^2+\sum_{j\neq i}\sum_{\ell\neq i,j}p_{ij}p_{i\ell}\mathrm{Cov}\left(t_j-d_j,t_\ell-d_\ell\right)b_j b_\ell^\top\nonumber\\
    &\leq C\delta_n\sum_{j\neq i}p_{ij}^2\norm{b_j}^2+C'\rho_n\sum_{j\neq i}\sum_{\ell\neq i}p_{ij}p_{i\ell}\norm{b_j}\norm{b_\ell}\nonumber\\
    &\leq C\delta_n\sum_{j\neq i}p_{ij}^2\norm{b_j}^2+C'\rho_n\left(\sum_{j\neq i}p_{ij}\norm{b_j}\right)^2\label{Eq::EPT-DBBound}.
\end{align}

We now bound $\max_{j\neq i}\norm{b_j}$. By Equations~\eqref{Eq::TBound}, \eqref{Eq::GBound}, and the fact that the latent positions are uniformly bounded, we have 
\begin{equation}
    \label{Eq::BJBound}
    \max_{j\neq i}\norm{b_j}\leq C\delta_n^{-2\alpha-1}\cdot\rho_n^{1/2}\cdot\delta_n^{2\alpha-1}=C\rho_n^{1/2}\delta_n^{-2}.
\end{equation}
Combining these bounds with Equations~\eqref{Eq::PBound} and \eqref{Eq::EPT-DBBound} gives
\begin{align*}
    \mathbb E\norm{\sum_{j\neq i}p_{ij}(t_j-d_j)b_j}^2
    &\leq C\delta_n\sum_{j\neq i}p_{ij}^2\norm{b_j}^2+C'\rho_n\left(\sum_{j\neq i}p_{ij}\norm{b_j}\right)^2  \\
    &\leq C\delta_n\cdot n\rho_n^2\cdot \rho_n\delta_n^{-4}+C'\rho_n\cdot\left(n\rho_n\cdot\rho_n^{1/2}\delta_n^{-2}\right)^2\\
    &\leq C\rho_n^2\delta_n^{-2}
\end{align*}

Thus, 
\[
    \norm{\sum_{j\neq i}p_{ij}(t_j-d_j)b_j}=O_P(\rho_n\delta_n^{-1}).
\]
Recalling that 
\[
    \left(T^{-\alpha}P^\circ B_DU_\alpha\Lambda_\alpha^{-1/2}V_\alpha^\top\right)_{i*}=\alpha t_i^{-\alpha}\sum_{j\neq i}p_{ij}(t_j-d_j)\left(t_j^{-2\alpha-1}X_j^\top G_{n,\alpha}^{-1}\right),
\]
we have 
\begin{align*}
    \norm{\left(T^{-\alpha}P^\circ B_DU_\alpha\Lambda_\alpha^{-1/2}V_\alpha^\top\right)_{i*}}=O_P(\delta_n^{-\alpha}\cdot\rho_n\delta_n^{-1})=O_P(\rho_n\delta_n^{-\alpha-1}).
\end{align*}
Finally, we have
\[
    n^{\alpha+1/2}\rho_n^\alpha\left\|\left(T^{-\alpha}P^\circ B_DU_\alpha\Lambda_\alpha^{-1/2}\right)_{i*}\right\|=O_P(n^{-1/2})=o_P(1),
\]
which completes the proof. 
\end{proof}

\begin{lemma}
    \label{Lm::BAPTULBound}
    For each fixed row $i$,
    \[
        n^{\alpha+1/2}\rho_n^\alpha\norm{\left(B_D(A-P^\circ)T^{-\alpha}U_\alpha \Lambda_\alpha^{-1/2}\right)_{i*}}=o_P(1).
    \]
\end{lemma}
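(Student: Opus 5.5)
Since $B_D$ is diagonal, the $i$th row factors as
\[
    \left(B_D(A-P^\circ)T^{-\alpha}U_\alpha\Lambda_\alpha^{-1/2}\right)_{i*}=(B_D)_{ii}\,\left(E_AT^{-\alpha}U_\alpha\Lambda_\alpha^{-1/2}\right)_{i*}.
\]
The plan is to bound the scalar $(B_D)_{ii}$ and the row vector separately and multiply the rates. By Equation~\eqref{Eq::BDBound}, $|(B_D)_{ii}|=O_P(\delta_n^{-\alpha-1/2})$ for the fixed index $i$. This pointwise bound avoids the $\sqrt{\log n}$ loss of the operator-norm bound. For the second factor I will use the identity $U_\alpha\Lambda_\alpha^{-1/2}V_\alpha^\top=T^{-\alpha}XG_{n,\alpha}^{-1}$ to drop $V_\alpha$. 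The row then becomes
\[
    \Big(\sum_{j\neq i}(a_{ij}-p_{ij})\,t_j^{-2\alpha}X_j^\top\Big)G_{n,\alpha}^{-1}.
\]

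Conditional on the latent positions, the sum over $j$ is a sum of independent mean-zero vectors. I will bound its second moment directly:
\[
    \E\Big\|\sum_{j\neq i}(a_{ij}-p_{ij})t_j^{-2\alpha}X_j\Big\|^2\le\sum_{j\neq i}p_{ij}t_j^{-4\alpha}\|X_j\|^2\le C\,\delta_n\cdot\delta_n^{-4\alpha}\rho_n,
\]
using Equation~\eqref{Eq::TBound}, $\sum_j p_{ij}=t_i\le C\delta_n$, and $\|X_j\|^2\le C\rho_n$. Chebyshev's inequality then gives $O_P(\rho_n^{1/2}\delta_n^{1/2-2\alpha})$. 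Multiplying by $\norm{G_{n,\alpha}^{-1}}=O(\delta_n^{2\alpha-1})$ from Equation~\eqref{Eq::GBound}, the row is $O_P(\rho_n^{1/2}\delta_n^{-1/2})=O_P(n^{-1/2})$. Note that this is exactly the order of the leading CLT term divided by $t_i^{-\alpha}$, which is consistent with the expansion~\eqref{Eq::FinalRowExpansion}.

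Combining the two factors, the full row is $O_P(\delta_n^{-\alpha-1/2}n^{-1/2})$. Multiplying by $n^{\alpha+1/2}\rho_n^\alpha=n^{1/2}\delta_n^\alpha$ gives $O_P(\delta_n^{-1/2})=o_P(1)$, because $\delta_n\to\infty$ by Condition~\ref{Assump::Sparsity}.

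The one subtle point is the dependence between $(B_D)_{ii}=\alpha t_i^{-\alpha-1}(t_i-\sum_j a_{ij})$ and the vector sum, since both involve the edges $a_{ij}$. This is harmless, however: each factor is separately $O_P$ of its rate, and a product of $O_P$ terms is $O_P$ of the product of the rates without any independence assumption. So no decoupling is needed, and this step is not a real obstacle. The bounds are conditional on latent positions on the high-probability event of Lemma~\ref{Lm::CommonBounds}, and they are uniform over that event, so they yield the unconditional statement as in the earlier lemmas.
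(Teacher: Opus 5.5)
Your proposal is correct and follows essentially the same route as the paper. You factor out $(B_D)_{ii}=O_P(\delta_n^{-\alpha-1/2})$, rewrite the row via $U_\alpha\Lambda_\alpha^{-1/2}V_\alpha^\top=T^{-\alpha}XG_{n,\alpha}^{-1}$, bound the conditional second moment of $\sum_{j\neq i}(a_{ij}-p_{ij})t_j^{-2\alpha}X_j$, and multiply the rates to get $O_P(\delta_n^{-1/2})$; the only differences are that you pull $G_{n,\alpha}^{-1}$ outside the sum (the paper bounds $\|t_j^{-2\alpha}X_j^\top G_{n,\alpha}^{-1}\|$ termwise), and that you state explicitly why the dependence between $(B_D)_{ii}$ and the sum is harmless, which the paper leaves implicit.
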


\begin{proof}
Since $U_\alpha \Lambda_\alpha^{-1/2}V_\alpha^\top=T^{-\alpha}XG_{n,\alpha}^{-1}$, we have $T^{-\alpha}U_\alpha\Lambda_\alpha^{-1/2}V_\alpha^\top=T^{-2\alpha}XG_{n,\alpha}^{-1}$. We thus write
\begin{align*}
    \left(B_DE_AT^{-\alpha}U_\alpha\Lambda_\alpha^{-1/2}V_\alpha^\top\right)_{i*}=\alpha t_i^{-\alpha-1}(t_i-d_i)\sum_{j\neq i}(E_A)_{ij}t_j^{-2\alpha}X_j^\top G_{n,\alpha}^{-1},
\end{align*}
where $(E_A)_{ij}$ satisfies
\[
    \mathbb E[(E_A)_{ij}^2]=p_{ij}(1-p_{ij})\leq p_{ij}.
\]
Additionally, by Equations~\eqref{Eq::TBound}, \eqref{Eq::GBound}, and the fact that the latent positions are uniformly bounded, we have
\[
    \norm{t_j^{-2\alpha}X_j^\top G_{n,\alpha}^{-1}}\leq C\delta_n^{-2\alpha}\rho_n^{1/2}\delta_n^{2\alpha-1}=C\rho_n^{1/2}\delta_n^{-1}.
\]
Thus,
\begin{align*}
    \mathbb E\norm{\sum_{j\neq i}(E_A)_{ij}t_j^{-2\alpha}X_j^\top G_{n,\alpha}^{-1}}^2&\leq\sum_{j\neq i}p_{ij}\norm{t_j^{-2\alpha}X_j^\top G_{n,\alpha}^{-1}}^2 \\
    &\leq
    C\rho_n\delta_n^{-2}
    \sum_{j\neq i}p_{ij} \\
    &=
    C\rho_n\delta_n^{-2}t_i \\
    &\leq
    C\rho_n\delta_n^{-1}.
\end{align*}

Combining this with Equations~\eqref{Eq::TBound} and \eqref{Eq::D-TBound} gives 
\[
    \norm{\left[B_DE_AT^{-\alpha}U_\alpha\Lambda_\alpha^{-1/2}\right]_{i*}}=O_P\left(\rho_n^{1/2}\delta_n^{-\alpha-1}\right).
\]
Finally,
\begin{align*}
    n^{\alpha+1/2}\rho_n^\alpha\norm{\left(B_DE_AT^{-\alpha}U_\alpha\Lambda_\alpha^{-1/2}\right)_{i*}}&=O_P\left(n^{\alpha+1/2}\rho_n^\alpha\cdot\rho_n^{1/2}\delta_n^{-\alpha-1}\right) \\
    &=O_P\left(\delta_n^{\alpha+1/2}\delta_n^{-\alpha-1}\right) \\
    &=O_P(\delta_n^{-1/2})=o_P(1),
\end{align*}
which completes the proof.
\end{proof}

\begin{lemma}
    \label{Lm::TAPBULBound}
    For each fixed row $i$, 
    \begin{align*}
        n^{\alpha+1/2}\rho_n^\alpha\left\lVert\left(T^{-\alpha}(A-P^\circ)B_DU_\alpha \Lambda_\alpha^{-1/2}\right)_{i*}\right\rVert&=o_P(1).
    \end{align*}
\end{lemma}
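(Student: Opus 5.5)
The plan is to mirror the proof of Lemma~\ref{Lm::BAPTULBound}, with one extra step: here the noise $(E_A)_{ij}$ and the degree fluctuation $t_j-d_j$ share the edge $(i,j)$, so they are not independent. As in the preceding lemmas, I would first use $U_\alpha\Lambda_\alpha^{-1/2}V_\alpha^\top=T^{-\alpha}XG_{n,\alpha}^{-1}$ to drop the orthogonal factor $V_\alpha$. This gives
\[
\left(T^{-\alpha}E_AB_DU_\alpha\Lambda_\alpha^{-1/2}V_\alpha^\top\right)_{i*}
= t_i^{-\alpha}\sum_{j\neq i}(E_A)_{ij}\,(t_j-d_j)\,c_j,\qquad c_j:=\alpha t_j^{-2\alpha-1}X_j^\top G_{n,\alpha}^{-1}.
\]
By Equations~\eqref{Eq::TBound} and \eqref{Eq::GBound}, together with bounded support, we have $\max_j\norm{c_j}\leq C\rho_n^{1/2}\delta_n^{-2}$, as in Equation~\eqref{Eq::BJBound}. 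The target is therefore
\[
\norm{\sum_{j\neq i}(E_A)_{ij}(t_j-d_j)c_j}=O_P(\rho_n^{1/2}\delta_n^{-1}).
\]
Multiplying by $t_i^{-\alpha}=O(\delta_n^{-\alpha})$ and by $n^{\alpha+1/2}\rho_n^\alpha=\delta_n^{\alpha+1/2}\rho_n^{-1/2}$ then gives $O_P(\delta_n^{-1/2})=o_P(1)$, by Condition~\ref{Assump::Sparsity}.

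To remove the dependence, I would split off the shared edge. Since $t_j-d_j=-\sum_{k\neq j}(E_A)_{jk}$, we can write
\[
t_j-d_j=-(E_A)_{ij}-\tau_j^{(-i)},\qquad \tau_j^{(-i)}:=\sum_{k\neq i,j}(E_A)_{jk}.
\]
This splits the sum into two pieces.

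\emph{Diagonal piece.} The first piece is $-\sum_{j\neq i}(E_A)_{ij}^2c_j$. Its norm is at most $\max_j\norm{c_j}\sum_{j\neq i}(E_A)_{ij}^2$. The sum has expectation at most $\sum_j p_{ij}=t_i\leq C\delta_n$, so by Markov's inequality this piece is $O_P(\rho_n^{1/2}\delta_n^{-1})$.

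\emph{Cross piece.} The second piece is $-\sum_{j\neq i}(E_A)_{ij}\tau_j^{(-i)}c_j$. The variables $\tau_j^{(-i)}$ are built only from edges not incident to $i$. Hence they are independent of the row-$i$ entries $\{(E_A)_{ij}\}_j$, which are themselves independent and mean zero. Conditioning on all edges not incident to $i$ therefore kills the cross terms, and
\[
\E\norm{\sum_{j\neq i}(E_A)_{ij}\tau_j^{(-i)}c_j}^2=\sum_{j\neq i}p_{ij}(1-p_{ij})\,\E\bigl[(\tau_j^{(-i)})^2\bigr]\norm{c_j}^2\leq C\delta_n\cdot\delta_n\cdot\rho_n\delta_n^{-4}=C\rho_n\delta_n^{-2},
\]
using $\E[(\tau_j^{(-i)})^2]\leq t_j\leq C\delta_n$. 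Chebyshev's inequality then gives $O_P(\rho_n^{1/2}\delta_n^{-1})$ for this piece as well.

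The main obstacle is exactly this decoupling: bounding $|t_j-d_j|$ crudely via Equation~\eqref{Eq::D-TBound} and summing absolute values would lose a factor of order $\delta_n^{1/2}$. The leave-one-out splitting is what recovers the mean-zero structure needed for the square-root cancellation. The dependence among the different $\tau_j^{(-i)}$, which share edges $(j,\ell)$, does not matter, because only their second moments enter once we condition. Combining the two pieces with the scaling computed above yields the claim. The conditional-on-$\xi_i$ version follows as at the end of Proposition~\ref{Prop::SpecBounds}, since all bounds are uniform on the high-probability event of Lemma~\ref{Lm::CommonBounds}.
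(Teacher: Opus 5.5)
Your proposal is correct and follows essentially the same route as the paper: the same leave-one-out split $t_j-d_j=-(E_A)_{ij}-D_j^{(-i)}$, a first-moment (Markov) bound of order $\rho_n^{1/2}\delta_n^{-1}$ for the squared-noise piece, and a second-moment bound of order $\rho_n\delta_n^{-2}$ for the cross piece, using independence of the row-$i$ edges from the leave-$i$-out degree fluctuations. Your final scaling $n^{\alpha+1/2}\rho_n^\alpha\cdot\delta_n^{-\alpha}\cdot\rho_n^{1/2}\delta_n^{-1}=\delta_n^{-1/2}=o(1)$ also matches the paper's.
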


\begin{proof}
Since $U_\alpha \Lambda_\alpha^{-1/2}V_\alpha^\top=T^{-\alpha}XG_{n,\alpha}^{-1}$, we have
\[
    B_DU_\alpha \Lambda_\alpha^{-1/2}V_\alpha^\top=\alpha T^{-\alpha-1}(T-D)T^{-\alpha}XG_{n,\alpha}^{-1}=\alpha T^{-2\alpha-1}(T-D)XG_{n,\alpha}^{-1}.
\]
We thus write
\begin{align*}
    \left(T^{-\alpha}(A-P^\circ)B_DU_\alpha \Lambda_\alpha^{-1/2}V_\alpha^\top\right)_{i*} &=\alpha t_i^{-\alpha}\sum_{j\neq i}(a_{ij}-p_{ij})t_j^{-2\alpha-1}(t_j-d_j)X_j^\top G_{n,\alpha}^{-1}\\
    &=\alpha t_i^{-\alpha}\sum_{j\neq i}(E_A)_{ij}(t_j-d_j)b_j,
\end{align*}
where $(E_A)_{ij}=a_{ij}-p_{ij}$ and $b_j=t_j^{-2\alpha-1}X_j^\top G_{n,\alpha}^{-1}$. 

For fixed $i$ and $j\neq i$, define
\[
    D_j^{(-i)}=\sum_{k\neq i,j}(a_{jk}-p_{jk}),
\]
so that 
\[
    t_j-d_j= -(E_A)_{ij}-D_j^{(-i)}.
\]
Thus,
\[
    \sum_{j\neq i}(E_A)_{ij}(t_j-d_j)b_j=-\sum_{j\neq i}(E_A)_{ij}^2b_j-\sum_{j\neq i}(E_A)_{ij}D_j^{(-i)}b_j.
\]

We bound the two terms separately. For the first term,
\begin{equation}
    \label{Eq::EAsquaredbound}
    \mathbb E[(E_A)_{ij}^2]=p_{ij}(1-p_{ij})\leq p_{ij}.
\end{equation}
Recalling Equation~\eqref{Eq::BJBound}, which states $\max_{j\neq i}\norm{b_j}\leq C\rho_n^{1/2}\delta_n^{-2}$, we have
\begin{align*}
    \mathbb E\left\lVert\sum_{j\neq i}(E_A)_{ij}^2b_j\right\rVert&\leq\sum_{j\neq i}\mathbb E[(E_A)_{ij}^2]\norm{b_j}\\
    &\leq\sum_{j\neq i}p_{ij}\|b_j\| \\
    &\leq\left(\max_{j\neq i}\|b_j\|\right) \sum_{j\neq i}p_{ij}\\
    &\leq C\rho_n^{1/2}\delta_n^{-2}\cdot \delta_n=C\rho_n^{1/2}\delta_n^{-1}.
\end{align*}
Thus,
\begin{equation}
    \label{Eq::H1Bound}
    \sum_{j\neq i}(E_A)_{ij}^2b_j=O_P\left(\rho_n^{1/2}\delta_n^{-1}\right).
\end{equation}

For the second term, we have
\[
    \mathbb E\left\lVert\sum_{j\neq i}(E_A)_{ij}D_j^{(-i)}b_j\right\rVert^2
    =\sum_{j,\ell\neq i}b_jb_\ell^\top\mathbb E\left[(E_A)_{ij}(E_A)_{i\ell} D_j^{(-i)}D_\ell^{(-i)}\right].
\]
If $j\neq\ell$, then $((E_A)_{ij},(E_A)_{i\ell})$ is independent of $(D_j^{(-i)},D_\ell^{(-i)})$. Since $(E_A)_{ij}$ and $(E_A)_{i\ell}$ are independent and mean-zero, we have 
\begin{align}
    \label{Eq::BZDBound}
    \mathbb E\norm{\sum_{j\neq i}(E_A)_{ij}D_j^{(-i)}b_j}^2&=\sum_{j\neq i}\norm{b_j}^2\mathbb E\left[(E_A)_{ij}^2(D_j^{(-i)})^2\right].
\end{align}

By Equation~\eqref{Eq::EAsquaredbound}, $\mathbb E[(E_A)_{ij}^2]\leq C\rho_n$. Additionally,
\[
    \mathbb E[(D_j^{(-i)})^2]=\sum_{k\neq i,j}p_{jk}(1-p_{jk})\leq t_j\leq C\delta_n.
\]
Thus,
\[
    \mathbb E[(E_A)_{ij}^2(D_j^{(-i)})^2]\leq C\rho_n\delta_n.
\]

Plugging this into Equation~\eqref{Eq::BZDBound} and using Equation~\eqref{Eq::BJBound} gives 
\[
    \mathbb E\left\lVert\sum_{j\neq i}(E_A)_{ij}D_j^{(-i)}b_j\right\rVert^2\leq Cn\rho_n\delta_n^{-4}\cdot\rho_n\delta_n=C\rho_n\delta_n^{-2}.
\]
Thus,
\begin{equation}
    \label{Eq::H2Bound}
    \sum_{j\neq i}(E_A)_{ij}D_j^{(-i)}b_j=O_P\left(\rho_n^{1/2}\delta_n^{-1}\right).
\end{equation}
Combining Equations~\eqref{Eq::H1Bound} and \eqref{Eq::H2Bound} gives
\[
    \sum_{j\neq i}(E_A)_{ij}(t_j-d_j)b_j=O_P\left(\rho_n^{1/2}\delta_n^{-1}\right).
\]

Recalling that 
\begin{align*}
    \left(T^{-\alpha}(A-P^\circ)B_DU_\alpha \Lambda_\alpha^{-1/2}V_\alpha^\top\right)_{i*}=\alpha t_i^{-\alpha}\sum_{j\neq i}(E_A)_{ij}(t_j-d_j)b_j
\end{align*}
and using Equation~\eqref{Eq::TBound}, we have
\[
    \norm{\left(T^{-\alpha}(A-P^\circ)B_DU_\alpha \Lambda_\alpha^{-1/2}\right)_{i*}}=O_P\left(\delta_n^{-\alpha}\rho_n^{1/2}\delta_n^{-1}\right)=O_P\left(\rho_n^{1/2}\delta_n^{-\alpha-1}\right).
\]

Finally,
\begin{align*}
    n^{\alpha+1/2}\rho_n^\alpha\left\|\left(T^{-\alpha}(A-P^\circ)B_DU_\alpha \Lambda_\alpha^{-1/2}\right)_{i*}\right\|&=O_P\left(n^{\alpha+1/2}\rho_n^\alpha\rho_n^{1/2}\delta_n^{-\alpha-1}\right)\\
    & =O_P\left(\delta_n^{\alpha+1/2}\delta_n^{-\alpha-1}\right) \\
    &=O_P\left(\delta_n^{-1/2}\right)=o_P(1),
\end{align*}
which completes the proof.
\end{proof}

We are now ready to prove Theorem~\ref{Thm::Main}.
\begin{proof}
We begin by writing 
\[
    \hat{X}_\alpha Q_n-X_\alpha=\hat{U}_\alpha\hat{\Lambda}_\alpha^{1/2} W_nV_\alpha^\top-U_\alpha\Lambda_\alpha^{1/2}V_\alpha^\top.
\]
Again, since $V_\alpha$ is orthogonal, the extra factor preserves row norms. Just as in the previous lemmas, for clarity, we omit this factor, with the understanding that we multiply by this matrix at the end to obtain the final result. By Lemma~\ref{Lm::SpectralExpansion},
\[
    \hat{U}_\alpha\hat{\Lambda}_\alpha^{1/2}W_n-U_\alpha\Lambda_\alpha^{1/2}=E_\alpha U_\alpha\Lambda_\alpha^{-1/2}+\mathcal{R}_{\mathrm{sp},\alpha},
\]
where
\begin{align*}
    \mathcal{R}_{\mathrm{sp},\alpha}&=U_\alpha\left(W_n^\top\hat{\Lambda}_\alpha^{1/2}W_n-\Lambda_\alpha^{1/2}-U_\alpha^\top E_\alpha U_\alpha \Lambda_\alpha^{-1/2}+(U_\alpha^\top\hat U_\alpha W_n-I_r)W_n^\top\hat{\Lambda}_\alpha^{1/2}W_n\right)\\
    &\qquad+\Pi^\perp(U_\alpha) E_\alpha\left(\hat{U}_\alpha\hat{\Lambda}_\alpha^{-1/2}W_n-U_\alpha\Lambda_\alpha^{-1/2}\right).
\end{align*}
By Proposition~\ref{Prop::SpecBounds}, for every $\varepsilon>0$,
\[
    \mathbb P\left(n^{\alpha+1/2}\rho_n^\alpha\norm{\left(\mathcal{R}_{\mathrm{sp},\alpha}\right)_{i*}}>\varepsilon\,\middle|\, \xi_i\right)\overset{p}\longrightarrow 0.
\] 
Thus,
\[
    n^{\alpha+1/2}\rho_n^\alpha\left[\left(\hat U_\alpha\hat \Lambda_\alpha^{1/2}W_n\right)_{i*}-\left(U_\alpha \Lambda_\alpha^{1/2}\right)_{i*}\right]=n^{\alpha+1/2}\rho_n^\alpha\left(E_\alpha U_\alpha \Lambda_\alpha^{-1/2}\right)_{i*}+r_{n,i},
\]
where $\mathbb{P}(\norm{r_{n,i}}>\varepsilon\mid\xi_i)\overset{p}\to0$ for every $\varepsilon>0$.

We expand $E_\alpha$ using a Taylor expansion to get
\begin{align*}
    E_\alpha&=D^{-\alpha}AD^{-\alpha}-T^{-\alpha}PT^{-\alpha} \\
    &=\left(D^{-\alpha}AD^{-\alpha}-T^{-\alpha}AT^{-\alpha}\right)
    +T^{-\alpha}(A-P^\circ)T^{-\alpha}
    -T^{-\alpha}P_{\diag}T^{-\alpha} \\
    &=B_DAT^{-\alpha}+T^{-\alpha}AB_D+T^{-\alpha}(A-P^\circ)T^{-\alpha}-T^{-\alpha}P_{\diag}T^{-\alpha}+\mathcal R_{\mathrm{Tay},\alpha},
    \end{align*}
where
    \begin{align*}
        \mathcal R_{\mathrm{Tay},\alpha}&=B_DAB_D+R_D A T^{-\alpha}+T^{-\alpha}A R_D+R_D A B_D+B_D A R_D+R_D A R_D.
    \end{align*}
Using $A=P^\circ+(A-P^\circ)$, we get
    \begin{align*}
        E_\alpha&=T^{-\alpha}(A-P^\circ)T^{-\alpha}+B_D P^\circ T^{-\alpha}+T^{-\alpha}P^\circ B_D \\
        &\qquad+B_D(A-P^\circ)T^{-\alpha}+T^{-\alpha}(A-P^\circ)B_D-T^{-\alpha}P_{\diag}T^{-\alpha}+\mathcal R_{\mathrm{Tay},\alpha}.
    \end{align*}
Finally, we multiply on the right by $U_\alpha\Lambda_\alpha^{-1/2}$ to get 
\begin{equation}
    \label{Eq::FinalTermsWithRemainder}
    E_\alpha U_\alpha\Lambda_\alpha^{-1/2}=T^{-\alpha}(A-P^\circ)T^{-\alpha}U_\alpha\Lambda_\alpha^{-1/2}+B_D P^\circ T^{-\alpha}U_\alpha\Lambda_\alpha^{-1/2}+\tilde{\mathcal R}_{\alpha,n},
\end{equation}
where
\begin{align*}
    \tilde{\mathcal R}_{\alpha,n}&=T^{-\alpha}P^\circ B_DU_\alpha\Lambda_\alpha^{-1/2}+B_D(A-P^\circ)T^{-\alpha}U_\alpha\Lambda_\alpha^{-1/2}+T^{-\alpha}(A-P^\circ)B_DU_\alpha\Lambda_\alpha^{-1/2} \\
    &\qquad-T^{-\alpha}P_{\diag}T^{-\alpha}U_\alpha\Lambda_\alpha^{-1/2}+\mathcal R_{\mathrm{Tay},\alpha}U_\alpha\Lambda_\alpha^{-1/2}.
\end{align*}
We now show that $n^{\alpha+1/2}\rho_n^\alpha\norm{\left(\tilde{\mathcal R}_{\alpha,n}\right)_{i*}}=o_P(1)$ by considering each term separately. The first three terms are handled by Lemmas~\ref{Lm::TPBULBound},~\ref{Lm::BAPTULBound} and~\ref{Lm::TAPBULBound}, respectively.  Since $A$ is symmetric and, by Equation~\eqref{Eq::DBound},
\[
    \max_i\sum_j|a_{ij}|=\max_i d_i=O_P(\delta_n),
\]
Lemma~\ref{Lm::TaylorBookkeeping} applies with $H=A$ and $V=U_\alpha\Lambda_\alpha^{-1/2}$. By Equations~\eqref{Eq::URowBound} and \eqref{Eq::LambdaBound}, we have $\norm{V}_{2\to\infty}=O_P(n^{-1/2}\delta_n^{\alpha-1/2})$. Thus, 
\begin{align*}
    n^{\alpha+1/2}\rho_n^{\alpha}\norm{\left(\mathcal R_{\mathrm{Tay},\alpha}U_\alpha\Lambda_\alpha^{-1/2}\right)_{i*}}&=O_P(n^{\alpha+1/2}\rho_n^{\alpha}\cdot\delta_n^{-2\alpha}\log{n}\cdot n^{-1/2}\delta_n^{\alpha-1/2})\\
    &=O_P(\delta_n^{-1/2}\log{n})=o_P(1).
\end{align*}

The diagonal term is easily bounded using Equations~\eqref{Eq::TBound}, \eqref{Eq::PBound}, \eqref{Eq::URowBound} and \eqref{Eq::LambdaBound}. The same bounds, together with Equation~\eqref{Eq::BDBound} show that the additional term $B_DP_{\diag}\T U_\alpha\Lambda_\alpha^{-1/2}$ is also negligible after scaling.

Using the identity
\[
    U_\alpha\Lambda_\alpha^{-1/2}V_\alpha^\top=T^{-\alpha}XG_{n,\alpha}^{-1},
\]
together with the preceding fixed-row remainder bounds, we obtain, for each fixed row $i$,
\begin{align*}
    \left(E_\alpha U_\alpha\Lambda_\alpha^{-1/2}V_\alpha^\top\right)_{i*}^\top&=t_i^{-\alpha}G_{n,\alpha}^{-1}\sum_{j\neq i}(a_{ij}-p_{ij})t_j^{-2\alpha}X_j\\
    &\quad+\alpha t_i^{-\alpha-1}(t_i-d_i)X_i
    +o_P\left(n^{-\alpha-1/2}\rho_n^{-\alpha}\right).
\end{align*}
Then
\begin{align*}
    n^{\alpha+1/2}\rho_n^\alpha&\left(E_\alpha U_\alpha \Lambda_\alpha^{-1/2}V_\alpha^\top\right)_{i*}^\top\\&=\left(\frac{t_i}{\delta_n}\right)^{-\alpha}\left(\delta_n^{1-2\alpha}G_{n,\alpha}^{-1}\right)\sum_{j\neq i}\frac{a_{ij}-p_{ij}}{\sqrt{\delta_n}}\left(\frac{t_j}{\delta_n}\right)^{-2\alpha}\xi_j \\
    &\qquad-\alpha\left(\frac{t_i}{\delta_n}\right)^{-\alpha-1}\xi_i\sum_{j\neq i}\frac{a_{ij}-p_{ij}}{\sqrt{\delta_n}}+o_P(1)\\
    &=\left(\frac{t_i}{\delta_n}\right)^{-\alpha}\left(\delta_n^{1-2\alpha}G_{n,\alpha}^{-1}\right)\sum_{j\neq i}\frac{a_{ij}-p_{ij}}{\sqrt{\delta_n}}\left[\left(\frac{t_j}{\delta_n}\right)^{-2\alpha}\xi_j-\frac{\alpha\delta_n^{2\alpha-1}G_{n,\alpha}}{\frac{t_i}{\delta_n}}\xi_i\right]\\&\quad+o_P(1).
\end{align*}

Conditional on $\xi_i$, we have $\frac{t_i}{\delta_n}\overset{p}{\to}\langle \xi_i,\mu\rangle$ and $\delta_n^{2\alpha-1}G_{n,\alpha}\overset{p}{\to}\Upsilon_\alpha$. These convergences follow from Equation~\eqref{Eq::TDeltaRatio}, the law of large numbers, and the argument used to prove Equation~\eqref{Eq::GBound}.

It only remains to justify replacing $\left(t_j/\delta_n\right)^{-2\alpha}$ inside the sum. By Equation~\eqref{Eq::TDeltaRatio} and Conditions~\ref{Assump::BoundedSupport} and~\ref{Assump::Nonzero}, the error caused by this replacement is $o_P(1)$ uniformly in $j$. Conditional on the latent positions, the resulting difference is a centered sum whose conditional variance is bounded by $C(t_i/\delta_n)$ times the square of this error, and is therefore $o_P(1)$.

Slutsky's theorem then gives
\begin{align*}
    &n^{\alpha+1/2}\rho_n^\alpha\left[\left(\hat{X}_\alpha Q_n\right)_{i*}-\left(X_\alpha\right)_{i*}\right]^\top\\
    &\quad=\frac{1}{\langle \xi_i,\mu\rangle^\alpha}\Upsilon_\alpha^{-1}\sum_{j\neq i}\frac{a_{ij}-p_{ij}}{\sqrt{\delta_n}}\left(\frac{\xi_j}{\langle \xi_j,\mu\rangle^{2\alpha}}-\frac{\alpha \Upsilon_\alpha \xi_i}{\langle \xi_i,\mu\rangle}\right)+o_P(1).
\end{align*}

Now consider the sum 
\[
    \sum_{j\neq i}\frac{a_{ij}-p_{ij}}{\sqrt{\delta_n}}\left(\frac{\xi_j}{\langle \xi_j,\mu\rangle^{2\alpha}}-\frac{\alpha \Upsilon_\alpha \xi_i}{\langle \xi_i,\mu\rangle}\right)=\frac{1}{\sqrt n}\sum_{j\neq i}\frac{a_{ij}-p_{ij}}{\sqrt{\rho_n}}\left(\frac{\xi_j}{\langle \xi_j,\mu\rangle^{2\alpha}}-\frac{\alpha \Upsilon_\alpha \xi_i}{\langle \xi_i,\mu\rangle}\right).
\]
Conditional on $\xi_i$, the summands are independent, mean-zero random vectors. By Conditions~\ref{Assump::BoundedSupport} and~\ref{Assump::Nonzero}, the vector in parentheses is uniformly bounded. Thus, each summand has norm bounded by $C/\sqrt{n\rho_n}\to0$, and the conditional Lindeberg condition holds. Additionally, $p_{ij}=\rho_n \langle \xi_i,\xi_j\rangle$. Writing $\xi'$ for an independent copy of a latent position from distribution $F$, the conditional covariance is 
\begin{align*}
    &\operatorname{Cov}
    \left(\frac{a_{ij}-p_{ij}}{\sqrt{\rho_n}}\left(\frac{\xi_j}{\langle \xi_j,\mu\rangle^{2\alpha}}-\frac{\alpha \Upsilon_\alpha \xi_i}{\langle \xi_i,\mu\rangle}\right)\right) \\
    &\quad =\mathbb E\left[\langle \xi_i,\xi'\rangle\left(1-\rho_n\langle \xi_i,\xi'\rangle\right)\left(\frac{\xi'}{\langle \xi',\mu\rangle^{2\alpha}}-\frac{\alpha \Upsilon_\alpha \xi_i}{\langle \xi_i,\mu\rangle}\right)\left(\frac{\xi'}{\langle \xi',\mu\rangle^{2\alpha}}-\frac{\alpha \Upsilon_\alpha \xi_i}{\langle \xi_i,\mu\rangle}\right)^\top\right].
\end{align*}

The conditional covariance of the sum thus converges to $\Gamma_{\rho,\alpha}(\xi_i)$. Consequently, an application of the Lindeberg--Feller central limit theorem gives 
\[
    \sum_{j\neq i}\frac{a_{ij}-p_{ij}}{\sqrt{\delta_n}}\left(\frac{\xi_j}{\langle \xi_j,\mu\rangle^{2\alpha}}-\frac{\alpha \Upsilon_\alpha \xi_i}{\langle \xi_i,\mu\rangle}\right)\overset{d}{\to}N\left(0,\Gamma_{\rho,\alpha}(\xi_i)\right).
\]
Another application of Slutsky's theorem gives
\[
    n^{\alpha+1/2}\rho_n^\alpha\left[\left(\hat X_\alpha Q_n\right)_{i*}-\left(X_\alpha\right)_{i*}\right]^\top\overset{d}{\to}N\left(0,\frac{\Upsilon_\alpha^{-1}\Gamma_{\rho,\alpha}(\xi_i)\Upsilon_\alpha^{-1}}{\langle \xi_i,\mu\rangle^{2\alpha}}\right),
\]
which completes the proof. 
\end{proof}

\section{Additional Calculations for Block Models}
\label{Sec::PropertiesofDiagnostic}
\subsection{Numerical Computation}
\label{Sec::AdditionalNumericalDetails}
In this section, we describe how the quantity $\alpha^*$ in Equation~\eqref{Eq::AlphaStar} is computed. We evaluate the diagnostic on a grid of $100$ equally spaced values of $\alpha$ in $[0,1]$ and approximate the solution to Equation~\eqref{Eq::AlphaStar} by the grid point at which $\mathcal E_\alpha^{(n)}$ is smallest. For each such $\alpha$, the population centers $m_{\alpha,k}$ and covariances $\Sigma_{\alpha,k}$ are given in closed form by Corollary~\ref{Cor::DCS}. The projected law $\mathcal{Q}_{k,\alpha}^{(n)}$ also has a closed-form density on the circle, obtained by integrating the distribution $N\!\left(m_{\alpha,k},\tfrac1n\Sigma_{\alpha,k}\right)$ along each ray through the origin. We therefore compute $\mathcal E_\alpha^{(n)}$ by locating the decision boundaries between the two projected densities weighted by their community proportions and integrating. As $\alpha\to1$, the covariance $\Sigma_{\alpha,k}$ can become numerically singular, so we add a small ridge $\varepsilon I_2$ with $\varepsilon=10^{-6}$ before inverting the covariance matrix. The selected $\alpha^*$ is insensitive to $\varepsilon$ across several orders of magnitude.

As described in Section~\ref{Sec::NumEval}, we set $n=200$ in these computations. To confirm that this is a sensible choice, we recompute $\alpha^*$ over the sample sizes $n\in\{50,100,200,400\}$ for a coarse collection of twelve $(p,q)$ configurations with $p>q$, fixing $\pi_1=0.7$. Figure~\ref{Fig::AlphaStableInN} plots $\alpha^*$ against $n$ for each $(p,q)$ pair. In every configuration, the value is relatively stable between $n=200$ and $n=400$, justifying its choice. 

\begin{figure}[t]
    \centering
    \includegraphics[width=0.8\textwidth]{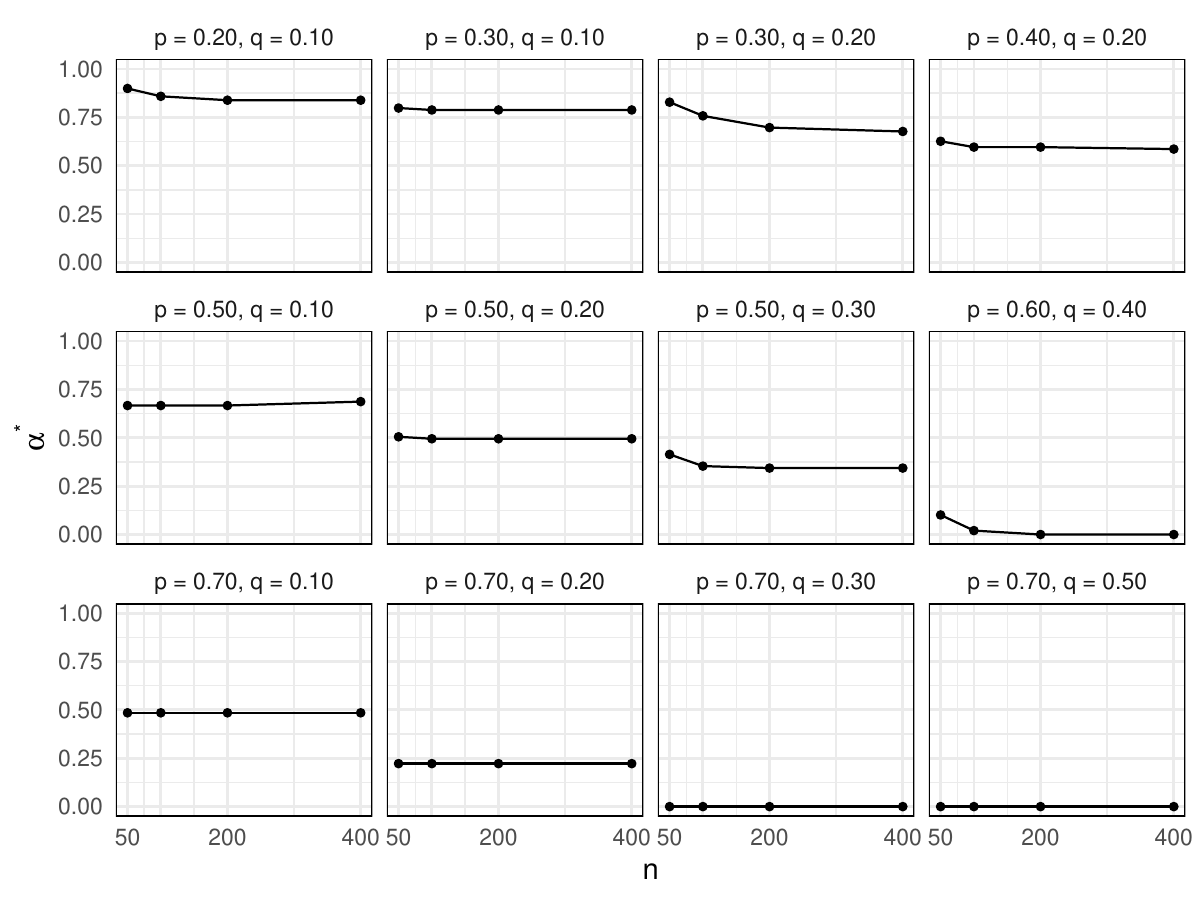}
    \caption{Value of $\alpha^*$ across $n\in\{50,100,200,400\}$ for selected block matrices with $\pi_1=0.7$. Although the minimizer can vary at smaller $n$, it is stable between $n=200$ and $n=400$ for the parameter settings considered.}
    \label{Fig::AlphaStableInN}
\end{figure}

\subsection{Calculations for Two-Community Models}
\label{Sec::SymmetricTwoCommunityCalculations}
In this section, we derive two analytic results for the balanced symmetric two-community model. The first gives the explicit expression for $\mathcal{F}_\alpha$ in Section~\ref{Sec::BlockModels}, while the second derives the Chernoff-information benchmark used below. Both calculations use the following common RDPG representation. The model is
\[
    \pi_1=\pi_2=\frac12,\qquad B=\begin{bmatrix}p&q\\q&p\end{bmatrix},\qquad p>q>0,
\]
and we write
\[
    s=\frac{p+q}{2},\qquad h=\frac{p-q}{2}, \qquad\kappa=\frac{h}{s}=\frac{p-q}{p+q}.
\]
A convenient RDPG representation is
\[
    \nu_1=\begin{bmatrix}\sqrt{s}\\\sqrt{h}\end{bmatrix},
    \qquad
    \nu_2=\begin{bmatrix}\sqrt{s}\\-\sqrt{h}\end{bmatrix},
\]
so that 
\[
    \nu_1^\top\nu_1=\nu_2^\top\nu_2=p,\qquad\nu_1^\top\nu_2=q.
\]

\subsubsection{Derivation of the Variance Ratio}
\label{Sec::FlatteningRatioDerivation}
We work in the sparse regime $\rho_n\to0$. Our goal is to derive the closed-form expression for $\mathcal F_\alpha$ given in Section~\ref{Sec::BlockModels}. We show that, using the representation chosen in the preceding subsection, finding the normal and tangential components of the covariance matrix reduces to finding the diagonal entries of $\Sigma_{\alpha}(\vartheta,1)$. We then compute the corresponding entries. By symmetry between the two communities, it is sufficient to consider a node from community $1$.

Let $\bar\theta=\mathbb E[\Theta]$. By symmetry,
\[
    \mu=\frac{\bar\theta}{2}(\nu_1+\nu_2)=\bar\theta\begin{bmatrix}\sqrt{s}\\0\end{bmatrix},\qquad\gamma:=\gamma_1=\gamma_2=\langle\nu_k,\mu\rangle=\bar\theta s.
\]
Define the two moments
\[
    a_\alpha=\mathbb E[\Theta^{2-2\alpha}],
    \qquad
    b_\alpha=\mathbb E[\Theta^{3-4\alpha}].
\]

We then have
\[
    \tilde\Upsilon_\alpha=\frac{a_\alpha}{\gamma^{2\alpha}}\begin{bmatrix}s&0\\0&h\end{bmatrix}.
\]

Recall that $h_\alpha^\top\nu_k=\gamma_k^\alpha$ for $k=1,2$. Then
\[
    h_\alpha=\frac{\gamma^\alpha}{\sqrt{s}}\begin{bmatrix}1\\0\end{bmatrix},
    \qquad
    u_\alpha=\begin{bmatrix}1\\0\end{bmatrix},
    \qquad
    \Pi_\alpha=\begin{bmatrix}0&0\\0&1\end{bmatrix}.
\]
Recalling that
\[
    \sigma^2_{N,\alpha}(\vartheta)=u_\alpha^\top\Sigma_\alpha(\vartheta)u_\alpha\qquad\text{and}\qquad\sigma^2_{T,\alpha}(\vartheta)=\operatorname{tr}\!\left(\Pi_\alpha\Sigma_\alpha(\vartheta)\Pi_\alpha\right),
\]
finding $\mathcal F_\alpha$ is equivalent to computing the ratio of the first and second diagonal entries of $\Sigma_\alpha(\vartheta,1)$. Here, the quantities are independent of $k$ by symmetry of the model. 

We now compute these values. Let $(\Theta',Z')$ be an independent copy of $(\Theta,Z)$. Then in the sparse regime, Corollary~\ref{Cor::DCS} gives
\[
    \Sigma_\alpha(\vartheta,1)=\frac{\vartheta^{1-2\alpha}}{\gamma^{2\alpha}}\mathbb E_{\Theta',Z'}\left[\Delta_{\alpha,1}\Delta_{\alpha,1}^\top\right].
\]
Note that, when taking the ratio $\sigma^2_{N,\alpha,k}/\sigma^2_{T,\alpha,k}$, the scalar factor will cancel. We now only need to compute the quantities 
\[
    \E_{\Theta',Z'}\left[(\Delta_{\alpha,1})_1^2\right],
    \qquad
    \E_{\Theta',Z'}\left[(\Delta_{\alpha,1})_2^2\right].
\]

Substituting the preceding expressions into the definition of $\Delta_{\alpha,1}$ gives
\begin{align}
    \Delta_{\alpha,1}&=\sqrt{\Theta'p}\begin{bmatrix}\displaystyle\frac{1}{\sqrt{s}}\left(\frac{(\Theta')^{1-2\alpha}}{a_\alpha}-\frac{\alpha}{\bar\theta}\right)\\[1.2em]
    \displaystyle\frac{1}{\sqrt{h}}\left(\frac{(\Theta')^{1-2\alpha}}{a_\alpha}-\frac{\alpha\kappa}{\bar\theta}\right)
    \end{bmatrix},
    && Z'=1,\label{Eq::Deltaalpha11}\\[1em]
    \Delta_{\alpha,1}
    &=\sqrt{\Theta'q}\begin{bmatrix}\displaystyle\frac{1}{\sqrt{s}}\left(\frac{(\Theta')^{1-2\alpha}}{a_\alpha}-\frac{\alpha}{\bar\theta}\right)\\[1.2em]
    \displaystyle
    \frac{1}{\sqrt{h}}\left(-\frac{(\Theta')^{1-2\alpha}}{a_\alpha}-\frac{\alpha\kappa}{\bar\theta}\right)\end{bmatrix},
    && Z'=2\label{Eq::Deltaalpha12}.
\end{align}

We thus have
\begin{align}
   \mathbb E_{\Theta',Z'}\left[(\Delta_{\alpha,1})_1^2\right]
   &=\mathbb E_{\Theta'}\left[\frac{\Theta'}{s}\left(\frac{(\Theta')^{1-2\alpha}}{a_\alpha}-\frac{\alpha}{\bar\theta}\right)^2\left(\frac{p+q}{2}\right)\right]\nonumber\\
    &=\mathbb E_{\Theta'}\left[\Theta'\left(\frac{(\Theta')^{1-2\alpha}}{a_\alpha}-\frac{\alpha}{\bar\theta}\right)^2\right]\nonumber\\
    &=\frac{\E\left[(\Theta')^{3-4\alpha}\right]}{a^2_\alpha}-\frac{2\alpha\E\left[(\Theta')^{2-2\alpha}\right]}{a_\alpha\bar\theta}+\frac{\alpha^2\E[\Theta']}{\bar\theta^2}\nonumber\\
    &=\frac{b_\alpha}{a_\alpha^2}-\frac{2\alpha}{\bar\theta}+\frac{\alpha^2}{\bar\theta}\nonumber\\
    &=\frac{b_\alpha}{a_\alpha^2}-\frac{\alpha(2-\alpha)}{\bar\theta}\label{Eq::NormalVar}.
\end{align}

Additionally,
\begin{align}
    \E_{\Theta',Z'}\left[(\Delta_{\alpha,1})_2^2\right]&=\mathbb E_{\Theta'}\left[\frac{\Theta'}{2h}\left(p\left(\frac{(\Theta')^{1-2\alpha}}{a_\alpha}-\frac{\alpha\kappa}{\bar\theta}\right)^2+q\left(-\frac{(\Theta')^{1-2\alpha}}{a_\alpha}-\frac{\alpha\kappa}{\bar\theta}\right)^2\right)\right]\nonumber\\
    &=\mathbb E_{\Theta'}
    \left[\Theta'\left(\frac{1}{\kappa}\frac{(\Theta')^{2-4\alpha}}{a_\alpha^2}-\frac{2\alpha\kappa}{\bar\theta}\frac{(\Theta')^{1-2\alpha}}{a_\alpha}+\frac{\alpha^2\kappa}{\bar\theta^2}\right)\right]\nonumber\\
    &=\frac{b_\alpha}{\kappa a_\alpha^2}-\frac{2\alpha\kappa}{\bar\theta}+\frac{\alpha^2\kappa}{\bar\theta}\nonumber\\
    &=\frac{b_\alpha}{\kappa a_\alpha^2}-\frac{\kappa\alpha(2-\alpha)}{\bar\theta}\label{Eq::TanVar}.
\end{align}

Defining
\[
    r_\alpha=\frac{a_\alpha^2}{\bar\theta b_\alpha}=\frac{\left(\mathbb E[\Theta^{2-2\alpha}]\right)^2}{\bar\theta\mathbb E[\Theta^{3-4\alpha}]}, 
\]
and combining Equations~\eqref{Eq::NormalVar} and \eqref{Eq::TanVar} gives
\[
    \mathcal{F}_\alpha=\frac{\sigma^2_{N,\alpha}}{\sigma^2_{T,\alpha}}=\frac{\kappa\left(1-\alpha(2-\alpha)r_\alpha\right)}{1-\kappa^2\alpha(2-\alpha)r_\alpha}.
\]

\subsubsection{Connection to Chernoff Information}
\label{Sec::ChernoffBenchmark}
In Section~\ref{Sec::ComparingDegreeNorm}, we compare degree-$\alpha$ embeddings using the projected Gaussian Bayes error $\mathcal{E}_\alpha^{(n)}$, which incorporates the row-normalization step commonly used in spectral clustering. This diagnostic is flexible, but it is computed numerically and relies on finite-$n$ Gaussian approximations. In this appendix, we consider an alternative perspective based on the Chernoff information analysis of \citet{Cape2019}. In the balanced two-community SBM, this approach yields an analytic benchmark against which our numerical approximations may be compared. Throughout the calculations below, we take $\alpha\in[0,1)$. At $\alpha=1$, the limiting covariance may be singular, so we define the corresponding quantities by continuous extension as $\alpha\uparrow1$.

We first recall some notation from Section~\ref{Sec::ComparingDegreeNorm}. For each $\alpha\in[0,1)$ and $k\in\{1,2\}$, let $m_{\alpha,k}$ and $\Sigma_{\alpha,k}$ denote the limiting population center and covariance matrix of a node in community $k$, respectively. We approximate the distribution of a scaled embedded row from community $k$ by
\[
    \mathcal{G}_{k,\alpha}^{(n)}=N\left(\mu_k,\Sigma_{\alpha,k}\right),\qquad k=1,2,
\]
where $\mu_k=\sqrt{n}m_{\alpha,k}$. For a single node, the idealized community-detection problem may then be viewed as distinguishing between $\mathcal{G}_{1,\alpha}^{(n)}$ and $\mathcal{G}_{2,\alpha}^{(n)}$. Chernoff information provides a way to quantify the difficulty of distinguishing between these two distributions.

\begin{definition}[Chernoff Information]
    Let $F_1$ and $F_2$ be two absolutely continuous multivariate distributions on $\mathbb{R}^r$ and let $f_1$ and $f_2$ be their densities, respectively. Chernoff divergence is defined to be 
        \begin{align*}
        C_t(F_1,F_2)&=-\log\left(\int_{\mathbb{R}^r}f_1^{1-t}(x)f_2^{t}(x)dx\right).
    \end{align*}
    Chernoff information is defined to be 
    \begin{align*}
        C(F_1,F_2)&=-\log\left(\inf_{t\in(0,1)}\int_{\mathbb{R}^r}f_1^{1-t}(x)f_2^{t}(x)dx\right)\\
        &=\sup_{t\in(0,1)}\left(-\log\int_{\mathbb{R}^r}f_1^{1-t}(x)f_2^{t}(x)dx\right)\\
        &=\sup_{t\in(0,1)}C_t(F_1,F_2).
    \end{align*}
\end{definition}

This quantity may be explicitly computed for multivariate Gaussian distributions. Define $\Sigma_{12}(t)=t\Sigma_1+(1-t)\Sigma_2$ and let $F_1=N(\mu_1,\Sigma_1)$ and $F_2=N(\mu_2,\Sigma_2)$. Then 
\begin{align}
    \label{Eq::ChernInfMVNormal1}
    C(F_1,F_2)=\frac{1}{2}\sup_{t\in(0,1)}\left(\left(t(1-t)(\mu_1-\mu_2)^\top\Sigma_{12}^{-1}(t)(\mu_1-\mu_2)\right)+\log\frac{\left\lvert\Sigma_{12}(t)\right\rvert}{\left\lvert\Sigma_{1}\right\rvert^t\left\lvert\Sigma_{2}\right\rvert^{1-t}}\right).
\end{align}

Note that $(\mu_1-\mu_2)^\top\Sigma_{12}^{-1}(t)(\mu_1-\mu_2)$ is the square of the Mahalanobis norm with respect to $\Sigma_{12}(t)$. We thus rewrite Equation~\eqref{Eq::ChernInfMVNormal1} as
\begin{align*}
    C(F_1,F_2)=\frac{1}{2}\sup_{t\in(0,1)}\left(\left(t(1-t)\left\lVert\mu_1-\mu_2\right\rVert^2_{\Sigma_{12}^{-1}(t)}\right)+\log\frac{\left\lvert\Sigma_{12}(t)\right\rvert}{\left\lvert\Sigma_{1}\right\rvert^t\left\lvert\Sigma_{2}\right\rvert^{1-t}}\right).
\end{align*}

We now apply this formula to the two Gaussian approximations associated with the degree-$\alpha$ embedding. Because their means depend on $n$, the resulting Chernoff information, denoted by $\eta_\alpha$, also depends on $n$. We suppress this dependence in the notation, as this dependence will cancel in the ratio after taking a large-sample approximation. 
\[
    \eta_\alpha=\frac{1}{2}\sup_{t\in(0,1)}\left[t(1-t)\left\lVert\mu_1-\mu_2\right\rVert^2_{\Sigma^{-1}_{\alpha,12}(t)}+\log\left(\frac{\left\lvert\Sigma_{\alpha,12}(t)\right\rvert}{\left\lvert\Sigma_{\alpha,1}\right\rvert^t\left\lvert\Sigma_{\alpha,2}\right\rvert^{1-t}}\right)\right],
\]
where $\Sigma_{\alpha,12}(t):=t\Sigma_{\alpha,1}+(1-t)\Sigma_{\alpha,2}$.

From Theorem~\ref{Thm::Main}, the means of the Gaussian distributions are given by the rows of 
\[
    n^{\alpha+1/2}\rho_n^\alpha X_\alpha.
\] 

If $t_i\asymp n\rho_n$, then
\[
    n^{\alpha+1/2}\rho_n^\alpha(X_\alpha)_{i*}=n^{\alpha+1/2}\rho_n^\alpha t_i^{-\alpha}X_i^\top\asymp \sqrt{n\rho_n}.
\]
Thus, in the dense SBM considered below, the quantity $\mu_1-\mu_2$ is of order $\sqrt{n}$. On the other hand, the covariance matrices are independent of $n$. Therefore, for large $n$, the second term in the previous equation is dominated by the first. We then have a large-sample approximation of the Chernoff information as 
\begin{equation}\label{Eq::RhoStarApprox}
    \eta^*_\alpha\approx\frac{1}{2}\sup_{t\in(0,1)}\left[t(1-t)\left\lVert\mu_1-\mu_2\right\rVert^2_{\Sigma^{-1}_{\alpha,12}(t)}\right].
\end{equation}

Equation~\eqref{Eq::RhoStarApprox} implies that the leading approximation to the Chernoff information increases as $n$ increases. Consider the degree-$\alpha$ and degree-$\alpha'$ embeddings. To compare the relative efficiency between the two choices, we may define 
\[
    \eta^*_{\alpha,\alpha'}=\frac{\eta^*_{\alpha}}{\eta^*_{\alpha'}}.
\] 
This quantity describes the relative \emph{approximate} Chernoff information between $\alpha$ and $\alpha'$. A large value suggests that $\alpha$ may provide more separation of the two communities, and thus may be a more appropriate choice for spectral clustering. We now study the behavior of $\eta_{\alpha,\alpha'}^*$ under the explicit two-community SBM considered in this section. 

The asymptotic means and covariances from Theorem~\ref{Thm::Main} simplify in this setting, allowing an explicit form for $\eta_{\alpha,\alpha'}^*$ to be computed. Carrying out the calculation (see Section~\ref{Sec::RhoStarDerivation}) gives
\begin{equation}\label{Eq::RhoStarExplicit}
    \eta_{\alpha,\alpha'}^*=\frac{C_2\alpha'^2+C_1\alpha'+C_0}{C_2\alpha^2+C_1\alpha+C_0},
\end{equation}
where 
\begin{align*}
    C_2&=(p-q)^2(p^2-p+q^2-q),\\
    C_1&=-2(p-q)^2(p+q)(p+q-1),\\
    C_0&=(p+q)^2(p^2-p+q^2-q). 
\end{align*}

Note that when $\alpha=\alpha'$, the ratio simplifies to $1$, as expected. When $\alpha=0$ (corresponding to the adjacency matrix) and $\alpha'=1/2$ (corresponding to the symmetric Laplacian), the ratio simplifies to equation (14) of \citet{Cape2019}. If we fix $\alpha'=0$, we may determine when the ratio is largest, indicating the $\alpha$ at which the relative Chernoff information is maximized. For a fixed $p$ and $q$, the ratio is clearly maximized at $-\frac{C_1}{2C_2}$. Thus, for each $p$ and $q$, the optimal $\alpha$ may be computed to be 
\begin{equation}\label{Eq::AlphaOpt}
    \alpha_{\mathrm{Ch}}=\begin{cases}
        0\text{ if }-\frac{C_1}{2C_2}<0\\
        1\text{ if }-\frac{C_1}{2C_2}>1\\
        -\frac{C_1}{2C_2}\text{ otherwise}
    \end{cases}
\end{equation}

\begin{figure}[t]
    \centering
    \includegraphics[width=0.6\textwidth]{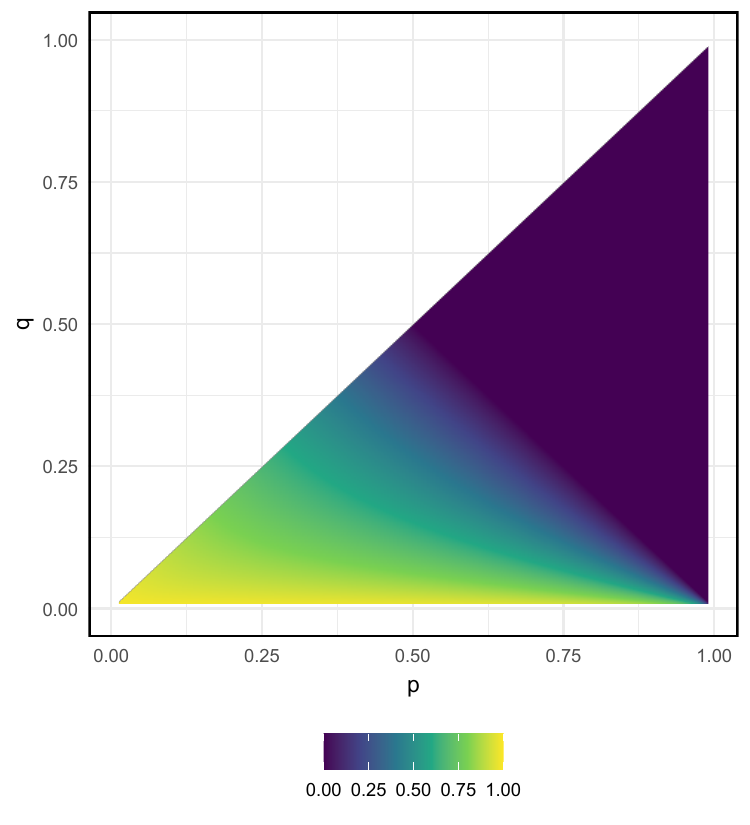}
    \caption{Chernoff-optimal $\alpha_{\mathrm{Ch}}$ for the balanced symmetric two-community SBM. The dark region denotes $\alpha_{\mathrm{Ch}}=0$, corresponding to adjacency spectral clustering being preferred. Lighter colors represent higher $\alpha_{\mathrm{Ch}}$, with stronger normalization being preferred when block probabilities are smaller.}
    \label{Fig::HomogeneousBalancedComparison}
\end{figure}

Figure~\ref{Fig::HomogeneousBalancedComparison} shows $\alpha_{\mathrm{Ch}}$ over the region $0<q<p<1$. At each point, the optimal $\alpha$ is computed using Equation~\eqref{Eq::AlphaOpt}. The most striking feature of the graph is the phase transition, corresponding to the case when $C_1=0$. This occurs when either $p=q$, which corresponds to the degenerate model, or when $p+q=1$, which is visible as the diagonal line in the figure. Above this boundary, adjacency spectral clustering provides higher Chernoff information for clustering, and below this boundary, some amount of degree normalization is preferred. In general, as the block probabilities decrease, increasing $\alpha$ provides higher Chernoff information.

\subsubsection{Derivation of the Relative Chernoff Information}
\label{Sec::RhoStarDerivation}
We now derive Equation~\eqref{Eq::RhoStarExplicit} for the balanced symmetric two-community SBM. We recall that in this section, we work in the dense regime $\rho_n\equiv1$ and take $\Theta\equiv1$. Hence $\bar\theta=1$, $\gamma_1=\gamma_2=s$, and
\[
    \tilde\Upsilon_\alpha=s^{-2\alpha}\begin{bmatrix}s & 0\\0 & h\end{bmatrix}.
\]
Define
\[
    v_p=p(1-p),\qquad v_q=q(1-q),
\]
and, for $k\in\{1,2\}$, write
\[
    m_{\alpha,k}^{(n)}:=\sqrt{n}\,m_{\alpha,k}=\sqrt{n}\,s^{-\alpha}\nu_k,\qquad\Sigma_{\alpha,k}:=\Sigma_\alpha(1,k).
\]

Using Equations~\eqref{Eq::Deltaalpha11} and \eqref{Eq::Deltaalpha12} in the specific SBM setting gives
\begin{align}
    \Delta_{\alpha,1}&=\sqrt{p}
    \begin{bmatrix}
        \dfrac{1-\alpha}{\sqrt{s}}\\[0.8em]
        \dfrac{1-\alpha\kappa}{\sqrt{h}}
    \end{bmatrix},
    && Z'=1,\\[1em]
    \Delta_{\alpha,1}
    &= \sqrt{q}
    \begin{bmatrix}
        \dfrac{1-\alpha}{\sqrt{s}}\\[0.8em]
        -\dfrac{1+\alpha\kappa}{\sqrt{h}}
    \end{bmatrix},
    && Z'=2.
\end{align}

Since $\mathbb P(Z'=1)=\mathbb P(Z'=2)=1/2$, the dense-regime covariance formula in Corollary~\ref{Cor::DCS} gives
\[
    \Sigma_{\alpha,1}=\frac{1}{s^{2\alpha}}\mathbb E_{Z'}\left[\left(1-B_{1Z'}\right)\Delta_{\alpha,1}\Delta_{\alpha,1}^{\top}
    \right].
\]
Expanding this expectation, we obtain
\[
    \Sigma_{\alpha,1}=\begin{bmatrix}A_\alpha & B_\alpha\\B_\alpha & C_\alpha\end{bmatrix},
\]
where, writing
\[
    \gamma_+(\alpha)=s+\alpha h,\qquad\gamma_-(\alpha)=s-\alpha h,
\]
we have
\[
    A_\alpha=\frac{(1-\alpha)^2(v_p+v_q)}{2s^{2\alpha+1}},
\]
\[
    B_\alpha=\frac{(1-\alpha)\left(v_p\gamma_-(\alpha)-v_q\gamma_+(\alpha)\right)}{2\sqrt{h}\,s^{2\alpha+3/2}},
\]
and
\[
    C_\alpha=\frac{v_p\gamma_-^2(\alpha)+v_q\gamma_+^2(\alpha)}{2h\,s^{2\alpha+2}}.
\]
By symmetry, the covariance for a node in community $2$ is
\[
    \Sigma_{\alpha,2}=\begin{bmatrix}A_\alpha & -B_\alpha\\-B_\alpha & C_\alpha\end{bmatrix}.
\]
We may now compute $\eta^*_{\alpha,\alpha'}$. To do so, we first compute the expression from Equation~\eqref{Eq::RhoStarApprox}, which we restate below:
\[
    \eta^*_\alpha\approx\sup_{t\in(0,1)}\left[\frac{t(1-t)}{2}\left\lVert m_{\alpha,1}^{(n)}-m_{\alpha,2}^{(n)}\right\rVert^2_{\Sigma_{\alpha,12}^{-1}(t)}\right],
\]
where
\[
    \Sigma_{\alpha,12}(t)=t\Sigma_{\alpha,1}+(1-t)\Sigma_{\alpha,2}.
\]
Equivalently,
\[
    \eta^*_\alpha\approx\sup_{t\in(0,1)}\left[\frac{t(1-t)}{2}\left(m_{\alpha,1}^{(n)}-m_{\alpha,2}^{(n)}\right)^\top\left(t\Sigma_{\alpha,1}+(1-t)\Sigma_{\alpha,2}\right)^{-1}\left(m_{\alpha,1}^{(n)}-m_{\alpha,2}^{(n)}\right)\right].
\]

We compute this quantity in two steps. First, we show that the supremum is achieved at $t=1/2$. Second, we use this fact to derive an explicit expression for $\eta_\alpha^*$. We begin with the first claim. Define
\begin{equation}
    \label{Eq::ft}
    f(t)=t(1-t)\left(m_{\alpha,1}^{(n)}-m_{\alpha,2}^{(n)}\right)^\top\left(t\Sigma_{\alpha,1}+(1-t)\Sigma_{\alpha,2}\right)^{-1}\left(m_{\alpha,1}^{(n)}-m_{\alpha,2}^{(n)}\right).
\end{equation}
Since
\[
    t\Sigma_{\alpha,1}+(1-t)\Sigma_{\alpha,2}=\begin{bmatrix}
A_\alpha & (2t-1)B_\alpha\\
(2t-1)B_\alpha & C_\alpha
\end{bmatrix},
\]
we have
\begin{equation}
    \label{Eq::tsigmainverse}
    \left(t\Sigma_{\alpha,1}+(1-t)\Sigma_{\alpha,2}\right)^{-1}=\frac{1}{A_\alpha C_\alpha-(2t-1)^2B_\alpha^2}\begin{bmatrix}C_\alpha & -(2t-1)B_\alpha\\-(2t-1)B_\alpha & A_\alpha\end{bmatrix}.
\end{equation}
Additionally,
\begin{equation}
    \label{Eq::mdiff}
    m_{\alpha,1}^{(n)}-m_{\alpha,2}^{(n)}=2\sqrt{nh}\,s^{-\alpha}\begin{bmatrix}0\\1\end{bmatrix}.
\end{equation}
Plugging Equations~\eqref{Eq::tsigmainverse} and \eqref{Eq::mdiff} into Equation~\eqref{Eq::ft} gives
\[
    f(t)=\frac{4nh\,s^{-2\alpha}t(1-t)A_\alpha}{A_\alpha C_\alpha-(2t-1)^2B_\alpha^2}.
\]
Elementary calculus gives
\[
    f'(t)=-\frac{4nhs^{-2\alpha}A_\alpha(2t-1)(A_\alpha C_\alpha-B_\alpha^2)}{(A_\alpha C_\alpha-(2t-1)^2B_\alpha^2)^2}.
\]
Note that 
\[
    A_\alpha C_\alpha-B_\alpha^2
    =
    \frac{(1-\alpha)^2v_pv_q}{hs^{4\alpha+1}}>0,
    \qquad \alpha\in[0,1),
\]
and this quantity is exactly zero at $\alpha=1$. Additionally, the denominator is clearly positive, and the quantities $4nhs^{-2\alpha}$ and $A_\alpha$ are also positive. Thus, the sign of $f'(t)$ is completely determined by the sign of $-(2t-1)$. The derivative is positive when $t<1/2$ and negative when $t>1/2$, and $f(t)$ is uniquely maximized at $t=1/2$. At $t=1/2$, we have
\[
    \Sigma_{\alpha,12}(1/2)=\frac{1}{2}\left(\Sigma_{\alpha,1}+\Sigma_{\alpha,2}\right)=\begin{bmatrix}A_\alpha & 0\\0 & C_\alpha\end{bmatrix}.
\]
Thus,
\[
    \Sigma_{\alpha,12}^{-1}(1/2)=\begin{bmatrix}A_\alpha^{-1} & 0\\0 & C_\alpha^{-1}\end{bmatrix}.
\]
Since
\[
    m_{\alpha,1}^{(n)}-m_{\alpha,2}^{(n)}=2\sqrt{nh}\,s^{-\alpha}\begin{bmatrix}0\\1\end{bmatrix},
\]
we have
\[
    \left\lVert m_{\alpha,1}^{(n)}-m_{\alpha,2}^{(n)}\right\rVert^2_{\Sigma_{\alpha,12}^{-1}(1/2)}=\frac{8nh^2s^2}{v_p\gamma_-(\alpha)^2+v_q\gamma_+(\alpha)^2}.
\]
We thus have
\[
    \eta_{\alpha,\alpha'}^*=\frac{v_p\gamma_-(\alpha')^2+v_q\gamma_+(\alpha')^2}{v_p\gamma_-(\alpha)^2+v_q\gamma_+(\alpha)^2}.
\]
Plugging in the values for $v_p$, $v_q$, $\gamma_+$, and $\gamma_-$ gives
\[
    \eta_{\alpha,\alpha'}^*=\frac{C_2\alpha'^2+C_1\alpha'+C_0}{C_2\alpha^2+C_1\alpha+C_0},
\]
where
\[
    C_2=(p-q)^2(p^2-p+q^2-q),
\]
\[
    C_1=-2(p-q)^2(p+q)(p+q-1),
\]
and
\[
    C_0=(p+q)^2(p^2-p+q^2-q).
\]
This establishes Equation~\eqref{Eq::RhoStarExplicit}.
\end{document}